\definecolor{DAEgreen}{RGB}{196, 227, 180}
\definecolor{lightgrey}{RGB}{230,230,230}
\definecolor{DAEred}{HTML}{e31a1c}
\DeclareRobustCommand\framebox[1]{\textcolor{#1}{}\begin{tikzpicture}\node at (0,0) [color=#1,line width=1, rectangle, draw] {}; \end{tikzpicture}}
\theoremstyle{plain}
\newtheorem{theorem}{Theorem}[section]
\newtheorem{lemma}[theorem]{Lemma}
\theoremstyle{definition}
\newtheorem{definition}[theorem]{Definition}
\theoremstyle{remark}
\icmltitlerunning{Mapping the Multiverse of Latent Representations}
\newcommand{\ourmethod}{\textsc{Presto}\xspace}
\newcommand{\ourdistance}{PD\xspace}
\newcommand{\ourvariance}{PV\xspace}
\newcommand{\oursensitivity}{PS\xspace}
\newcommand{\ourcoordinatesensitivity}{PS\xspace}
\newcommand{\distancenorm}{\ensuremath{p}\xspace}
\newcommand{\cardinality}[1]{\ensuremath{|#1|}\xspace}
\newcommand{\probability}{\ensuremath{p}\xspace}
\newcommand{\LVAE}{\ensuremath{L_{\text{VAE}}}\xspace}
\newcommand{\fred}{\ensuremath{f_{\text{red}}}\xspace}
\newcommand{\reals}{\ensuremath{\mathds{R}}\xspace}
\newcommand{\naturals}{\ensuremath{\mathds{N}}\xspace}
\newcommand{\Dimension}{\ensuremath{D}\xspace}
\newcommand{\dimension}{\ensuremath{d}\xspace}
\newcommand{\nneighbors}{\ensuremath{k}\xspace}
\newcommand{\topospace}{\ensuremath{X}\xspace}
\newcommand{\topodim}{\ensuremath{h}\xspace}
\newcommand{\Hom}{\ensuremath{H}\xspace}
\newcommand{\filtrationparam}{\ensuremath{t}\xspace}
\newcommand{\dgm}{\ensuremath{\text{dgm}}\xspace}
\newcommand{\dgms}{\ensuremath{\text{Dgm}}\xspace}
\newcommand{\landscape}{\ensuremath{L}\xspace}
\newcommand{\nprojections}{\ensuremath{\pi}\xspace}
\newcommand{\nsamples}{\ensuremath{s}\xspace}
\newcommand{\nlandscapes}{\ensuremath{N}\xspace}
\newcommand{\landscapes}{\ensuremath{\mathcal{L}}\xspace}
\newcommand{\representatives}{\ensuremath{\mathcal{R}}\xspace}
\newcommand{\harmonic}{\ensuremath{\mathcal{H}}\xspace}
\newcommand{\proxy}{\ensuremath{P}\xspace}
\newcommand{\ncover}{\ensuremath{c^\ast}\xspace}
\newcommand{\BO}{\ensuremath{\mathcal{O}}\xspace}
\newcommand{\tildeO}{\ensuremath{\widetilde{\mathcal{O}}}\xspace}
\newcommand{\eqclass}{\ensuremath{Q}\xspace}
\newcommand{\nparams}{\ensuremath{c}\xspace}
\newcommand{\neqclasses}{\ensuremath{q}\xspace}
\newcommand{\eqclasses}{\ensuremath{\mathcal{Q}}\xspace}
\newcommand{\algos}{\ensuremath{\mathcal{A}}\xspace}
\newcommand{\implementations}{\ensuremath{\mathcal{I}}\xspace}
\newcommand{\implementation}{\ensuremath{\iota}\xspace}
\newcommand{\datas}{\ensuremath{\mathcal{D}}\xspace}
\newcommand{\data}{\ensuremath{\delta}\xspace}
\newcommand{\algo}{\ensuremath{\alpha}\xspace}
\newcommand{\dataset}{\ensuremath{X}\xspace}
\newcommand{\params}{\ensuremath{\theta}\xspace}
\newcommand{\multiverse}{\ensuremath{\mathcal{M}}\xspace}
\newcommand{\MMS}{\ensuremath{\mathfrak{M}}\xspace}
\newcommand{\probe}{\ensuremath{M}\xspace}
\newcommand{\model}{\ensuremath{M}\xspace}
\newcommand{\anydimension}{\ensuremath{\ast}\xspace}
\newcommand{\embeddings}{\ensuremath{\mathcal{E}}\xspace}
\newcommand{\projections}{\ensuremath{\mathcal{P}}\xspace}
\newcommand{\nconfigs}{\ensuremath{m}\xspace}
\newcommand{\embedding}{\ensuremath{E}\xspace}
\newcommand{\projection}{\ensuremath{P}\xspace}
\newcommand{\projector}{\ensuremath{f}\xspace}
\newcommand{\distance}{\ensuremath{d}\xspace}
\newcommand{\topologybound}{\ensuremath{\ell^k}\xspace}
\newcommand{\projdim}{\ensuremath{k}\xspace}
\newcommand{\topological}{\ensuremath{T}\xspace}
\newcommand{\vae}{\textsc{VAE}\xspace}
\newcommand{\bvae}{\ensuremath{\beta}-\vae}
\newcommand{\info}{\textsc{InfoVAE}\xspace}
\newcommand{\wae}{\textsc{WAE}\xspace}
\newcommand{\mnist}{\textit{MNIST}\xspace}
\newcommand{\cifar}{\textit{CIFAR-10}\xspace}
\newcommand{\celeba}{\textit{celebA}\xspace}
\newcommand{\dsprites}{\textit{dsprites}\xspace}
\newcommand{\fashionmnist}{\textit{FashionMNIST}\xspace}
\newcommand{\ada}{\textsc{ada}\xspace}
\newcommand{\mistral}{\textsc{mistral}\xspace}
\newcommand{\distilroberta}{\textsc{distilroberta}\xspace}
\newcommand{\MiniLM}{\textsc{MiniLM}\xspace}
\newcommand{\mpnet}{\textsc{mpnet}\xspace}
\newcommand{\qadistilbert}{\textsc{qa-distilbert}\xspace}
\newcommand{\arxiv}{\textit{arXiv}\xspace}
\newcommand{\bbc}{\textit{bbc}\xspace}
\newcommand{\cnn}{\textit{cnn}\xspace}
\newcommand{\patents}{\textit{patents}\xspace}
\newcommand{\umap}{\textsc{umap}\xspace}
\newcommand{\tsne}{\textsc{t-SNE}\xspace}
\newcommand{\phate}{\textsc{phate}\xspace}
\newcommand{\subspaces}{\mathcal{S}}
\newcommand{\subspace}{\probe(S)}
\newcommand{\genimage}{\ensuremath{I_{\tilde{v}}}\xspace}
\newcommand{\genimages}{\ensuremath{\tilde{I}}\xspace}
\DeclareMathOperator{\diam}{diam}
\newcommand{\oururl}{\href{https://doi.org/10.5281/zenodo.11355446}{https://doi.org/10.5281/zenodo.11355446}}
\begin{document}

\twocolumn[
\icmltitle{Mapping the Multiverse of Latent Representations}

\icmlsetsymbol{equal}{*}
\icmlsetsymbol{super}{$\dagger$}

\begin{icmlauthorlist}
  \icmlauthor{Jeremy Wayland}{HM,TUM}
  \icmlauthor{Corinna Coupette}{super,KTH,MPI}
  \icmlauthor{Bastian Rieck}{super,HM,TUM}
\end{icmlauthorlist}

\icmlaffiliation{KTH}{KTH Royal Institute of Technology}
\icmlaffiliation{MPI}{Max Planck Institute for Informatics}
\icmlaffiliation{HM}{Helmholtz Munich}
\icmlaffiliation{TUM}{Technical University of Munich}

\icmlcorrespondingauthor{Jeremy Wayland}{jeremy.wayland@tum.de}
\icmlcorrespondingauthor{Corinna Coupette}{coupette@mpi-inf.mpg.de}
\icmlcorrespondingauthor{Bastian Rieck}{bastian.rieck@tum.de}

\icmlkeywords{Machine Learning, Generative Models, Manifold Learning, Persistent Homology, ICML}

\vskip 0.3in
]

\hyphenation{hy-per-pa-ram-e-ter hy-per-pa-ram-e-ters rep-re-sen-ta-tion rep-re-sen-ta-tions rep-re-sen-ta-tion-al over-fit-ting pre-dic-tive con-fig-u-ra-tions demon-strate re-spon-si-ble ag-gre-ga-tion sum-ma-rized Thi-ya-ga-lin-gam em-bed-ded in-di-rect-ly mul-ti-verse mul-ti-ver-ses per-sis-tence al-go-rith-mic par-tic-u-lar topolo-gies com-pu-ta-tion-al-ly pa-ram-e-ter pa-ram-e-ters fea-tures the-o-ret-i-cal-ly sen-si-tiv-i-ty em-bed-ding di-am-e-ter di-men-sions ex-pec-ted in-tro-duced de-scrip-tors em-bed-dings im-ple-men-ta-tion iden-ti-cal ar-chi-tec-tures ex-per-i-ments fol-low-ing dis-en-tan-gle-ment char-ac-ter-is-tics dis-tin-guish-ing ex-plic-it cor-re-la-tions wide-spread prin-ci-pled ho-mol-o-gy de-scrip-tors topo-lo-gi-cal per-sis-tent land-scape re-search ex-plic-it-ly there-fore pro-jec-tions}

\renewcommand{\icmlEqualContribution}{\textsuperscript{$\dagger$}These authors jointly directed this work.}

\printAffiliationsAndNotice{\icmlEqualContribution}

\begin{abstract}
Echoing recent calls to counter reliability and robustness concerns in machine learning via \emph{multiverse analysis}, 
we present \ourmethod, a principled framework for \emph{mapping the multiverse} of machine-learning models that rely on \emph{latent representations}. 
Although such models enjoy widespread adoption, 
the variability in their embeddings remains poorly understood, 
resulting in unnecessary complexity and untrustworthy representations.  
Our framework uses \emph{persistent homology} to characterize the latent spaces arising from different combinations of diverse machine-learning methods, (hyper)parameter configurations, and datasets, 
allowing us to measure their pairwise \emph{(dis)similarity} and statistically reason about their \emph{distributions}.
As we~demonstrate both theoretically and empirically, 
our pipeline preserves desirable properties of collections of latent representations, 
and it can be leveraged to perform sensitivity analysis, 
detect anomalous embeddings, or
efficiently and effectively navigate hyperparameter search~spaces.

 \end{abstract}

\section{Introduction}
\label{Introduction}

Our ability to design and deploy new machine-learning models has far
outpaced our understanding of their inner workings.
The real-world successes of Variational Auto-Encoders (VAEs), 
Large Language Models~(LLMs), 
and Graph Neural Networks~(GNNs) notwithstanding, 
our benchmark-driven engineering approaches 
often come at the cost of an  
\emph{inability to make formal predictions} about
the capacity of a specific model to perform a specific task on a specific dataset.
Thus, when observing a particular performance result, 
we are unsure to which extent it is impacted by 
\begin{inparaenum}[(i)]
  \item intrinsic problems with the data, such as poor data quality or data leakage, 
\item intrinsic problems with the model architecture, such as
    insufficient learning capacity, 
\item misfit between the data and the model architecture,  
\item unsuitable (parameter) choices for the training process, or
\item an under-explored hyperparameter landscape.
\end{inparaenum}
These uncertainties~contribute to a looming \emph{reproducibility crisis} in
machine learning that threatens to impede fundamental progress and reduce real-life
impact~\citep{Haibe-Kains20a, McDermott21a,gundersen2022sources,kapoor2023leakage}.

\textbf{Into the Multiverse.}\quad
Ensuring robust, reliable, and reproducible results in machine-learning applications requires new conceptual frameworks and tools.
As a starting point, we must acknowledge that all data work involves
\emph{many different choices} that may support \emph{many different
conclusions}~\citep{simonsohn2020specification}. 
In machine learning, 
such choices regularly include the model architecture and its
hyperparameters~\citep{feurer2019hyperparameter}, 
the dataset and its preprocessing~\citep{muller2022forgetting}, 
as well as the technicalities of model training and evaluation~\citep{sivaprasad2020optimizer}. 
To rigorously assess machine-learning models, then, 
we should explicitly \emph{embrace the variation} resulting from all reasonable combinations of reasonable choices, 
rather than keep individual choices hidden or implicit. 
This is the essence of \emph{multiverse analysis}~\citep{Steegen16a}, 
which was originally proposed to mitigate the perceived replication
crisis in psychology~\citep{simmons2011false}. 

\textbf{Representation Matters.}\quad 
In multiverse analysis, 
each set of mutually compatible choices (according to some specification) gives rise to a different analytical \emph{universe}, 
and we assess the results of all universes in the same \emph{multiverse} collectively to derive our conclusions.
While the early applications of multiverse approaches in machine
learning~\citep{bell2022modeling,simson2023using} have focused on variability in performance (\emph{performance variability}), 
which afflicts all machine-learning models, 
the influential class of \emph{latent-space models} 
(including VAEs, LLMs, and GNNs) 
also exhibits variability in latent representations  (\emph{representational variability}).  
In many cases, even~relatively~small (hyper)parameter changes can radically alter the embedding structure of latent-space models. 
As an example, consider \Cref{fig:dae}, 
which depicts two-dimensional representations of the \texttt{XYC} dataset 
as generated by Disentangling Auto-Encoders (DAEs) with different learning rates and batch-normalization parameters. 
Although DAEs were recently designed by \citet{cha_orthogonality-enforced_2023} precisely to learn disentangled representations, 
and the \texttt{XYC} dataset was introduced to highlight the power of DAEs,
the latent structure of the \texttt{XYC} dataset is disentangled only with the right parameter choices  (\framebox{DAEred}). 
More generally, representational variability in latent-space models remains poorly understood.

\textbf{Variability Matters.}\quad 
While the \emph{performance} variability of latent-space models is clearly connected to their \emph{reliability}, 
the \emph{representational} variability of such models is directly linked to their \emph{interpretability} and \emph{robustness}.
First, if models differing only in their (hyper)parameters yield similar
performance based on very dissimilar latent spaces, 
we cannot use these latent spaces to understand the models (impairing their interpretability). 
Second, if a small change in the (hyper)parameter or training-data configuration induces a large change in the latent-space structure of a model, 
the model associated with the original configuration may not capture the essence of the task, 
even if the model appears to be competitive when assessed based on
performance-driven evaluation (indicating a lack of structural robustness). 
Therefore, 
representational variability not only complements performance variability in the analysis of latent-space models, 
but \emph{ceteris paribus}, 
models with lower representational variability should be preferred over models with higher representational variability. 
Hence, understanding representational variability in latent-space models is crucial to ensure their overall alignment with responsible-machine-learning goals. 

\begin{figure}[t]
	\centering
	\includegraphics[width=\linewidth]{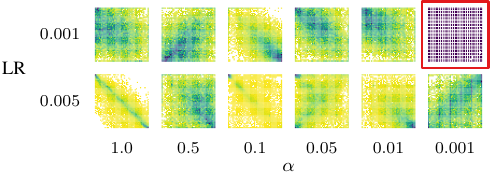}
	\caption{\textbf{Entangled disentanglement.} 
		The embedding spaces of DAEs  \cite{cha_orthogonality-enforced_2023} vary widely 
		when we change the learning rate LR and the batch-normalization hyperparameter~$\alpha$ of the model,  
		and the latent structure of the \texttt{XYC} dataset (shapes varying in 2D coordinates and color) is properly disentangled only with the right parameter choices (\framebox{DAEred}). 
		\ourmethod can \emph{topologically} assess the (hyper)parameter sensitivity of latent-space models.
	}\label{fig:dae}
\end{figure}

\textbf{Our Contributions.}\quad
Motivated by the need for respon\-sible latent-space models, and
encouraged by the promise of topological approaches to representational
variability~\citep{barannikov2021manifold,barannikov2022representation,zhou2021evaluating}, 
in this work, we use topology to map the multiverse of machine-learning models that rely on latent representations. 
In particular, we ask two guiding questions.
\begin{compactenum}[(Q1)]
	\item \textbf{\emph{Exploring} representational variability.}
	\emph{How do the latent representations of machine-learning models vary across different choices of model architectures, (hyper)parameters, and datasets?}
	\item \textbf{\emph{Exploiting} representational variability.}
\emph{How can~we use representational variability to efficiently train and select robust and reliable machine-learning models?}
\end{compactenum}
To address these questions, 
we make five contributions. 
\begin{compactenum}[(C1)]
	\item We introduce \ourmethod, a \textbf{topological multiverse framework} to describe and \emph{directly} compare both \emph{individual} latent spaces and \emph{collections} of latent spaces, 
	as summarized in \Cref{fig:pipeline}. 
	\item We capture \textbf{essential features of latent spaces} via persistence diagrams and landscapes, 
	allowing us to measure the pairwise \emph{(dis)similarity} of embeddings and statistically reason about their \emph{distributions}. 
	\item We prove \textbf{theoretical stability guarantees} for topological representations of latent spaces under projection. 
	\item We develop \textbf{scalable practical tools} to measure \emph{representational} (hyper)parameter sensitivity, 
	identify anomalous embeddings, 
	compress (hyper)parameter search spaces, 
	and accelerate model selection.
	\item We demonstrate the utility of our tools via \textbf{extensive
    experiments} in numerous latent-space multiverses. 
\end{compactenum}
Our work improves our understanding of representational variability in latent-space models, 
and it offers a \emph{structure-driven} alternative to existing performance-driven approaches in the responsible-machine-learning toolbox.

\begin{figure}[t]
	\centering
	\includegraphics[width=0.9\linewidth]{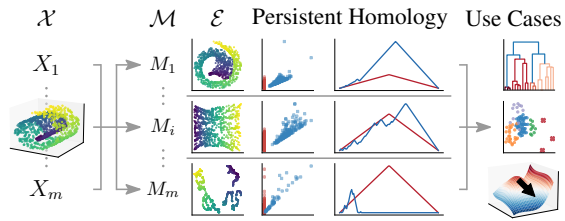}
	\caption{\textbf{The \ourmethod pipeline.} 
		For each model $\probe_i$ in our multiverse \multiverse, 
		\ourmethod computes the persistent homology associated with the embedding of a dataset $\dataset_i$ generated by~$\probe_i$, 
		yielding a set of embeddings \embeddings. 
		Thus enabled to compare the latent spaces of different models via the landscape distance of their persistence landscapes, 
		with \ourmethod, we can cluster, compress, detect outliers in, and analyze the sensitivity of (hyper)parameter configurations.
	}\label{fig:pipeline}
\end{figure}

\textbf{Structure.}\quad
Having given some background on persistent homology in \cref{Background}, 
we introduce \ourmethod, our multiverse framework for exploring and
exploiting representational variability in latent-space models in
\cref{Method}.
After discussing related work in \cref{Related-Work}, 
we gauge the practical utility of our framework through extensive experiments in \cref{Experiments},  
before concluding with a discussion in \cref{Conclusion}.
Extensive supplements are provided in \cref{apx:theory,apx:Background,apx:Related Work,apx:methods,apx:experiments}.
 \section{Background}
\label{Background}

This section briefly introduces \emph{persistent
homology}~(PH), the machinery for capturing essential features of data
that forms the basis of our framework.
Additional definitions and theorems can be found in \cref{apx:theory},
while \cref{apx:Background} contains background information on
\emph{latent-space models}.
Persistent homology~\citep{Barannikov94, Edelsbrunner10} is a framework used to analyze
topological characteristics of data at multiple spatial scales. 
It systematically quantifies the evolution of both
\emph{topological} and \emph{geometric} features, as tracked by
a \emph{filtration}, i.e., a consistent ordering of the elements of
a space.
Filtrations typically arise by approximating data with
\emph{simplicial complexes}, i.e., generalized graphs,
using metrics like the $L^2$ distance; we
will work with computationally efficient
\mbox{$\alpha$-complexes}~\citep{Edelsbrunner83a}.
Thus, given a topological space \topospace and a filtration
$\{\topospace_\filtrationparam\}_{\filtrationparam\in\reals}$, with each
$\topospace_\filtrationparam$ being a subcomplex of \topospace, persistent homology computes a sequence of homology groups $\{\Hom_\topodim(\topospace_\filtrationparam)\}_{\topodim\geq 0}$ for each~\filtrationparam. 
These groups capture \topodim-dimensional topological features such as
\emph{connected components}, \emph{cycles}, or \emph{voids} of
$\topospace_\filtrationparam$ at multiple resolutions, 
and they are \emph{invariant} to spatial transformations like translation,
rotation, and uniform scaling~(when working with normalized distances). 
Moreover, persistent homology varies continuously under continuous transformations of the space. 

Persistent-homology computations are typically summarized using \emph{persistence diagrams}, which provide a condensed representation for tracking topological features across multiple scales.
Formally, a persistence diagram $\dgm =\{(b_i, d_i)\}_{i}$ is a multiset of intervals, where $b_i$ represents the `birth time' and $d_i$ represents the `death time' of a given \topodim-dimensional topological feature, i.e., $b_i, d_i\in\reals \cup \{\infty\}$ with $b_i\leq d_i$.
As the space \dgms of persistence diagrams is cumbersome to work with
and does not afford efficiently-computable metrics, 
there are alternative representations of persistent homology, 
such as \emph{persistence landscapes}~(PL), 
which map persistence diagrams into a Banach space by transforming them
into piecewise-linear functions $\landscape_\topodim\colon
\dgms\to \reals$~\citep{bubenik2015statistical}.
This transformation allows us to  compare
topological descriptors using computationally efficient metrics,
and its calculation requires neither discretization nor additional
parameter choices.
For our work, the \emph{stability} of persistent homology and persistence landscapes under
perturbations and transformations of the data is particularly relevant: 
These descriptors are well-behaved under
structure-preserving embeddings~\citep{sheehy_persistent_2014,
	krishnamoorthy_normalized_2023}, 
and they capture \emph{both} geometric and topological properties of data~\citep{bubenik_persistent_2020}.
Thus, we select persistent homology and persistence landscapes as our lens for assessing representational variation.
 \section{Topological Multiverse Analysis}
\label{Method}

Having established the necessary background, 
we now introduce \ourmethod, our multiverse framework for exploring and exploiting representational variability in latent-space models via persistent homology. 
To define a multiverse of latent representations, 
we distinguish three categories of choices, 
namely,
\begin{inparaenum}[(1)]
	\item \emph{algorithmic choices} (i.e., model architecture and hyperparameters), 
	\item \emph{implementation choices} (such as optimizer, learning rate, number of epochs, and random seeds), and 
	\item \emph{data choices} (i.e., dataset and preprocessing).  
\end{inparaenum}
The mutually compatible options in each category give rise to three sets of valid choices, i.e., 
\emph{algorithmic choices} $\algo\in\algos$, 
\emph{implementation choices} $\implementation\in\implementations$, 
and \emph{data choices} $\data\in\datas$. 
We will also think of \algos, \implementations, and \datas as sets of \emph{vectors}, 
such that we can refer to $(\algo,\implementation,\data)\in \algos \times \implementations \times \datas$
by its parameter vector $\params \coloneq (\algo,\implementation,\data)’$ of cardinality $\nparams \coloneq \cardinality{\params}$.
Thus, we arrive at the notion of a \emph{latent-space multiverse}. 
\begin{definition}[Latent-Space Multiverse]\label{def:latent-space-multiverse}
	Given \emph{algorithmic choices}~\algos, \emph{implementation
  choices}~\implementations, and \emph{data choices}~\datas, 
	a \textbf{latent-space multiverse} \multiverse is a subset of $\algos
  \times \implementations \times \datas$. 	
	Each element $\params \in \multiverse$ is a
	\emph{universe} with an associated \emph{model}  $\model^\params \colon \reals^\anydimension \to \reals^\dimension$, 
	where \dimension is the desired embedding dimension (part of \params), 
	$\anydimension$ denotes a flexible input dimension, 
	and we drop ${}^\params$ for conciseness.
\end{definition}
In this work, we are interested in \emph{finite} latent-space multiverses, 
generated by \emph{discrete} subsets of \algos, \implementations, and \datas. 

\subsection{\ourmethod Pipeline}

Given a finite latent-space multiverse $\multiverse\subseteq\algos\times\implementations\times\datas$, 
to \emph{explore} representational variability, 
we would like to compare the topologies of individual latent spaces and statistically assess their distribution in \multiverse, 
which will also prove useful for \emph{exploiting} representational variability in practical applications. 
Unfortunately, working with the topologies of high-dimensional latent spaces directly, e.g., 
by comparing them via the~(normalized) bottleneck distance of persistence
diagrams derived from Vietoris--Rips complexes, 
is \emph{computationally prohibitive}. 
Furthermore, as persistence diagrams do not live in a \emph{Banach space}, 
we cannot reason about their distributions.
Hence, we instead propose the following \emph{scalable} framework for topological multiverse analysis, 
which we call \ourmethod (\textsc{Pr}ojected \textsc{e}mbedding \textsc{s}imilarity via \textsc{t}opological \textsc{o}verlays).\footnote{Although we advocate for the pipeline presented below, 
	its individual steps can be easily modified, 
	e.g., to use other topological descriptors. 
	See \cref{apx:pipeline-alternatives} for a detailed discussion. 
}  
For each model $\model\in\multiverse$ and a dataset \dataset (which can differ from the training data of \model), \ourmethod performs four steps.\footnote{While we keep the dataset \dataset fixed in our exposition for notational simplicity, 
	as we illustrate experimentally in \cref{Experiments}, 
	we can also use our multiverse approach to assess how changing \dataset affects our reconstruction of a (set of) latent space(s).
}
\begin{compactenum}[(S1)]
	\item \textbf{Embed data.}
	Compute the $\dimension$-dimensional \emph{embedding} ${\embedding\coloneq \model(\dataset)}$. \emph{Optionally,} \label{step:embedding}
	\begin{inparaenum}[(a)]
		\item approximate the \emph{diameter} of \embedding, and \label{step:approximate}
		\item \emph{normalize} \embedding by this approximation.\label{step:normalize}
	\end{inparaenum}
	\item \textbf{Project embeddings.} \label{step:projection}
	Project \embedding down to $\projdim\ll\dimension$ dimensions. 
	This can be done either \emph{deterministically}, e.g., via Principal Component Analysis (PCA), 
	or by generating a set of \emph{\nprojections random projections} \projections.\footnote{While the deterministic approach is particularly suited for comparing \emph{different} latent spaces, 
	leveraging randomness allows us to study variability within \emph{individual} latent spaces. 
}
	\item \textbf{Construct persistence diagrams.}
	Calculate the \emph{persistence diagram} $\dgm$ (or $\dgm_i$ for each
  projection $\embedding_i \in \projections$) based on its
  \topodim-dimensional \emph{$\alpha$-complex}. \label{step:persistence}
	\item \textbf{Compute persistence landscapes.}\label{step:landscapes}
	Vectorize the persistence diagram \dgm into a \emph{persistence landscape} $\landscape(\model)$ 
	(or $\dgm_i$ into $\landscape_i$ with  $\landscape(\model) \coloneq \nicefrac{\sum_{i\in[\nprojections]}\landscape_i}{\nprojections}$).
\end{compactenum}
Here, S\ref{step:approximate} and S\ref{step:normalize} replace exact
normalization, and S\ref{step:projection} and S\ref{step:persistence}
replace persistence-diagram computation based on Vietoris-Rips complexes
constructed in (potentially) high dimensions, all of which are
computationally costly.
S\ref{step:landscapes} guarantees that we operate in
a Banach space, and averaging here increases robustness if we work with
random projections. 
Since \ourmethod is based on persistent homology, 
it immediately benefits from PH's well-studied properties (see~\cref{Background}).
Finally, our pipeline is \emph{data-agnostic}, i.e., 
we can use any dataset \dataset to study variability in
\multiverse without requiring access to the training data---and we can even use \ourmethod to study
 variability in \emph{datasets} (see~\cref{Experiments}).

\subsection{\ourmethod Primitives}

By performing S\ref{step:embedding} to S\ref{step:landscapes} for each $\model\in\multiverse$, 
we obtain a set \landscapes of persistence landscapes, 
which allows us to achieve two fundamental tasks.
First, we can measure the distance between two latent spaces via the \emph{\ourmethod distance} (\ourdistance).\footnote{In the following, 
	to avoid ambiguity in our \ourmethod-related definitions, 
	we use a superscript $p$ to denote the choice of $L^p$ norm, 
	and a subscript $h$ to denote the maximum dimension of topological features considered. 
	Otherwise, we drop these superscripts and subscripts for simplicity when they are not decisive.
} 
\begin{definition}[\ourmethod Distance {[}\ourdistance{]}]
	\label{def:latent-space-landscape-distance}
	Given persistence landscapes $\landscape(\model_i)$ and $\landscape(\model_j)$, 
	the \textbf{$\ourmethod^\distancenorm_\topodim$ distance} up to topological dimension \topodim between $\model_i$ and $\model_j$ is 
	\begin{equation}\label{eq:distance}
		\text{\ourdistance}^\distancenorm_\topodim(\model_i,\model_j) 
		\coloneq 
		\sum_{x=0}^\topodim \distance_{\landscape^\distancenorm}\! \left(\landscape_x(\model_i),\landscape_x(\model_j)\right)
		\;,
	\end{equation}
	where $\distance_{\landscape^\distancenorm}$ is the landscape distance based on the $L^p$ norm.
\end{definition}
Second, we can assess the variance in a set of latent spaces via the \emph{\ourmethod variance} (\ourvariance).
\begin{definition}[\ourmethod Variance {[}\ourvariance{]}]
	\label{def:latent-space-landscape-variance}
	Given a set \landscapes of persistence landscapes of cardinality ${\nlandscapes \coloneq \cardinality{\landscapes}}$, 
	the \textbf{$\ourmethod^\distancenorm_\topodim$ variance} up to topological dimension \topodim of~\landscapes~is
	\begin{equation}\label{eq:sensitivity}
		\text{\ourvariance}^\distancenorm_\topodim(\landscapes) \coloneq \frac{1}{\nlandscapes}
		\sum_{x=0}^{\topodim}
		\sum_{\landscape\in\landscapes^x}
		\left(\lVert \landscape \rVert_\distancenorm - \mu_{\landscapes^x}\right)^2\;,
	\end{equation}
	where $\lVert\landscape\rVert_\distancenorm$ is the $L^\distancenorm$-based landscape norm, 
	$\landscapes^x$ denotes the landscape parts associated with $x$-dimensional topological features, 
	and $\mu_{\landscapes^x}$ is the mean of landscape norms in $\landscapes^x$.
\end{definition}
Although the modifications to an exact pipeline made by our \ourmethod framework induce some changes in our representations, 
they retain the essential features of our original latent spaces. 
As a result, the error we introduce into our measurements in  \cref{eq:distance,eq:sensitivity} is bounded both theoretically and empirically, 
as we show in \cref{Theory,Experiments}.

\subsection{\ourmethod Applications}

As we demonstrate in our experiments (\cref{Experiments}), 
our multiverse framework and \ourmethod primitives are useful for exploring and exploiting the representational variability of latent-space models 
in several different settings. 
In particular, they can help us 
\begin{inparaenum}[(1)]
	\item evaluate (hyper)parameter sensitivity,
	\item detect anomalous embeddings, and
	\item cluster and compress (hyper)parameter search spaces.
\end{inparaenum}

\textbf{Sensitivity Analysis.}\quad
Following the reasoning sketched in \cref{Introduction}, 
choices amplifying representational variability are both analytically interesting and practically problematic. 
Thus motivated to study (hy\-per)pa\-ram\-e\-ter sensitivity in latent-space multiverses~\multiverse,  
our goal here is to quantify the structural variation in the embedding space when introducing controlled variation in $\params\in\multiverse$. 
We seek to assess the \emph{local} and \emph{global} sensitivity in \multiverse, 
as well as the sensitivity at \emph{individual} coordinates $\params$ in $\multiverse$.
To this end, we introduce our \emph{\ourmethod sensitivity} scores (\oursensitivity). 
\begin{definition}[\ourmethod Sensitivity {[}\oursensitivity{]}]
	\label{def:latent-space-parameter-sensitivity}
	Given a multiverse \multiverse, fix a model dimension $i$,
	and define an equivalence relation $\sim_i$ such that $\params' \sim_i \params'' \Leftrightarrow \params'_j = \params''_j$ for all $\params', \params''\in\multiverse$ and $j \neq i$, 
	yielding $\neqclasses_i$ equivalence classes~$\eqclasses_i$.
	
	The \textbf{\emph{individual} $\ourmethod^\distancenorm_\topodim$ sensitivity}
	of equivalence class $\eqclass\in\eqclasses_i$ in \multiverse is
\begin{equation}
		\text{\ourcoordinatesensitivity}_\topodim^\distancenorm(\eqclass\mid\multiverse) \coloneq \sqrt{\text{\ourvariance}_\topodim^\distancenorm(\landscapes[\eqclass])}\;,
	\end{equation}
where $\landscapes[\eqclass]\subset\landscapes$ is the set of
	landscapes associated with models in equivalence class \eqclass. 
Aggregating over all equivalence classes in $\eqclasses_i$, we obtain
	the \textbf{\emph{local} $\ourmethod^\distancenorm_\topodim$ sensitivity} of
	\multiverse in model dimension $i$ as 
\begin{equation}
		\text{\oursensitivity}_\topodim^\distancenorm(\multiverse \mid i) \coloneq \sqrt{\frac{1}{\neqclasses_i}\sum_{\eqclass\in\eqclasses_i}\text{\ourvariance}_\topodim^\distancenorm(\landscapes[\eqclass])}\;.
	\end{equation} 
Finally, aggregating over all $\nparams = \cardinality{\params}$
	dimensions of models in \multiverse yields the \textbf{\emph{global}
		$\ourmethod^\distancenorm_\topodim$ sensitivity} of \multiverse, i.e., 
\begin{equation}
		\text{\oursensitivity}_\topodim^\distancenorm(\multiverse) \coloneq \sqrt{\frac{1}{\nparams}\sum_{i\in[\nparams]}\frac{1}{\neqclasses_i}\sum_{\eqclass\in\eqclasses_i}\text{\ourvariance}_\topodim^\distancenorm(\landscapes[\eqclass])}\;.
	\end{equation}
\end{definition}
Note that when \multiverse varies only in one dimension, 
the individual, local, and global \ourmethod sensitivities are identical, such that we can simply speak of the \emph{\ourmethod sensitivity}.

\textbf{Outlier Detection.}\quad
Since models with anomalous latent spaces are, by definition, not robust, 
we should understand for which sets of choices they occur and avoid working with them in practice. 
To detect such anomalous latent spaces in a multiverse \multiverse with associated landscapes \landscapes, 
we can exploit their Banach-space structure, 
which allows us to use the \ourvariance~(\cref{def:latent-space-landscape-variance}), 
along with standard statistical approaches, 
to identify landscapes with anomalous norms.

\textbf{Clustering and Compression.}\quad
To identify interesting structure in a collection of latent spaces (arising, e.g., from a grid search), 
we can \emph{cluster} the collection, 
represented by a multiverse \multiverse with associated landscapes \landscapes, 
based on the \ourmethod distance (\cref{def:latent-space-landscape-distance}), 
using \emph{any clustering method} based on pairwise distances. 
Reducing the costs of exhaustive (hyper)parameter searches, 
however, requires us to lower the number of configurations considered in detail. 
As we demonstrate experimentally, 
if two latent spaces are topologically close in our \emph{target} setting 
(i.e., a search space we would like to \emph{avoid exploring exhaustively}), 
they may also be close in a \emph{proxy} setting (i.e., a search space we \emph{will explore (or have explored) exhaustively}), 
such as when training on a related dataset. 
This permits us to reuse knowledge generated from proxy settings
for our target setting, motivating the task of search-space compression.
Given results from a proxy setting \proxy, 
to \emph{compress} the search space in our target setting \multiverse, 
we define a threshold~$\epsilon$ and select \emph{representatives} $\representatives\subseteq\multiverse$ such that for each $\model_i\in\multiverse$, 
there exists a representative $\model_j\in\representatives$ 
with ${\text{\ourdistance}_\proxy(\model_i,\model_j) \leq \epsilon}$, 
where $\text{\ourdistance}_\proxy$ denotes the \ourmethod distance in proxy setting \proxy. 
\cref{apx:compression} provides more details on how to pick
suitable representatives in practice. 

\subsection{\ourmethod Stability}\label{Theory}

To ensure scalability, \ourmethod computes topological descriptors on
low-dimensional projections of embeddings, rather than working
in a high-dimensional latent space.
Therefore, we would like to ascertain that the distortion introduced by these projections
remains bounded.
To achieve this, we require the notion of a \emph{multiverse metric
space}~(MMS).
We will work with the $L^2$ landscape norm
and consider topological features up to dimension $2$~(i.e., $\distancenorm
= \topodim = 2$), dropping the superscript \distancenorm and the
subscript \topodim for notational conciseness, 
and deferring all proofs to \cref{apx:theory}.
\begin{definition}[Multiverse Metric Space \MMS\ {[}MMS{]}]
	\label{def:mms}
	For a multiverse \multiverse with associated embeddings \embeddings, 
	we define the topological distance of embeddings in \embeddings as   
	\begin{equation}\label{eq:topological-distance}
		\distance_\topological(\embedding_i,\embedding_j) \coloneq \distance(\dgm(\embedding_i),\dgm(\embedding_j)) \;,
	\end{equation} 
	where $\distance$ can be any distance between persistence
  representations~(e.g., diagrams or landscapes).
A \textbf{multiverse metric space} is the tuple $(\embeddings,\distance_\topological)\eqcolon \MMS$.
\end{definition}
When working with \projdim-dimensional embedding \emph{projections}, 
we operate in a \emph{projected multiverse metric space} (PMMS). 
\begin{definition}[Projected Multiverse Metric Space $\MMS^\projdim$ {[}PMMS{]}]\label{def:pmms}
	Given an MMS $\MMS = (\embeddings,\distance_\topological)$, 
	fix the projection dimension ${\projdim \in \naturals}$,
	s.t.\ $\projdim \leq \dimension_i$ for $\embedding_i\in \embeddings$. 
For a projector $\projector\colon \reals^\anydimension \to \reals^\projdim$, 
	let
  $\projections \coloneq \{\projector(\embedding) \mid \embedding\in \embeddings\}$
  be the set of \emph{projected} embeddings.
Using $\distance_\topological$ from \cref{eq:topological-distance}, 
	a \textbf{projected multiverse metric space} is defined as 
	$\MMS^\projdim\coloneq (\projections,\distance_\topological)$. 
\end{definition}
Relating an MMS \MMS to its \projdim-dimensional counterpart
$\MMS^\projdim$, we arrive at the notion of \emph{topological loss},
i.e., the decrease in topological fidelity due to our projection.
\begin{definition}[Topological Loss]
	\label{def:topological-loss}
	Given an MMS \MMS, 
	a projector $\projector\colon \reals^\anydimension \to \reals^\projdim$,  
	and an associated PMMS $\MMS^\projdim$,  
	with
	$\projection_i \coloneq \projector(\embedding_i)$, 
	the \textbf{topological loss} \topologybound of $\MMS^\projdim$ 
	is the maximum distance between elements of $\embeddings$
	and $\projections$ measured by $\distance_\topological$, i.e.,
	\begin{equation} \label{eq:topological-loss} 
		\topologybound \coloneq \max_{\embedding_i\in \embeddings} \distance_\topological(\embedding_i,\projection_i)\;.
	\end{equation}
\end{definition}
This topological loss bounds the pairwise-distance perturbation of our metric space under projection.
\begin{restatable}[Metric-Space Preservation under Projection]{theorem}{thmMMSPreservation}
	\label{thm:ClusteringPreservation}
	Given an MMS \MMS and an associated PMMS $\MMS^\projdim$ with
  topological loss \topologybound, we can bound the pairwise-distance perturbation
	under projection as
	$\MMS^\projdim[i,j] \leq \MMS[i,j] + 2\topologybound\;.$
\end{restatable}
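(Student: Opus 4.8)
The plan is to derive the bound from a short chain of triangle inequalities for $\distance_\topological$, inserting the original embeddings $\embedding_i,\embedding_j$ as intermediate points between their projections $\projection_i = \projector(\embedding_i)$ and $\projection_j = \projector(\embedding_j)$; here I read $\MMS[i,j]$ and $\MMS^\projdim[i,j]$ as $\distance_\topological(\embedding_i,\embedding_j)$ and $\distance_\topological(\projection_i,\projection_j)$, respectively. The preliminary step is to observe that $\distance_\topological$ is a (pseudo)metric: by \Cref{def:mms} it equals $\distance(\dgm(\embedding),\dgm(\embedding'))$ on inputs $\embedding,\embedding'$, i.e., it is the pullback of a genuine metric $\distance$ on persistence representations (the bottleneck, a Wasserstein, or an $L^p$ landscape distance) along $\embedding\mapsto\dgm(\embedding)$, so symmetry and the triangle inequality are inherited---even though $\distance_\topological$ need not separate embeddings sharing a persistence diagram. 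Since persistence diagrams of finite point clouds of \emph{any} ambient dimension live in the common space $\dgms$ (and their landscapes in a common Banach space), $\embedding_i\in\reals^{\dimension_i}$ and $\projection_i\in\reals^\projdim$ can be compared within one (pseudo)metric space, and the triangle inequality applies to any three of their images.

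With this in hand, I would apply the triangle inequality twice:
\[
  \distance_\topological(\projection_i,\projection_j)
  \;\leq\;
  \distance_\topological(\projection_i,\embedding_i)
  + \distance_\topological(\embedding_i,\embedding_j)
  + \distance_\topological(\embedding_j,\projection_j)\;.
\]
The middle summand is $\MMS[i,j]$. For the two outer summands, \Cref{def:topological-loss} gives $\distance_\topological(\embedding_i,\projection_i)\leq\topologybound$ and $\distance_\topological(\embedding_j,\projection_j)\leq\topologybound$, because $\topologybound = \max_{\embedding\in\embeddings}\distance_\topological(\embedding,\projector(\embedding))$ is precisely the maximum over such embedding--projection pairs. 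Substituting yields $\MMS^\projdim[i,j] \leq \MMS[i,j] + 2\topologybound$, as claimed.

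The only genuine content---and thus the main (if mild) obstacle---is the preliminary step: confirming that the concrete distance $\distance$ used in the pipeline truly satisfies the triangle inequality after composition with the persistence/landscape map, and that an embedding and its lower-dimensional projection may legitimately be placed in one metric space. Both hold for all standard choices, so the argument goes through verbatim. I would also remark that the factor $2$ is not improvable in general: $\projection_i$ and $\projection_j$ may be displaced from $\embedding_i$ and $\embedding_j$ in ``opposing'' directions with respect to $\distance_\topological$, making the bound tight in the worst case.
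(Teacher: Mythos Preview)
Your proposal is correct and follows essentially the same route as the paper: two applications of the triangle inequality for $\distance_\topological$, inserting $\embedding_i$ and $\embedding_j$ between $\projection_i$ and $\projection_j$, then bounding each cross term by $\topologybound$ via \Cref{def:topological-loss}. The paper additionally invokes \Cref{lem:Bottleneck bound} and \Cref{lem:Landscape bound} to reduce to the bottleneck distance before running the triangle-inequality chain, whereas you work directly with the generic pullback pseudometric $\distance_\topological$; your formulation is arguably cleaner, but the substantive argument is identical.
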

Consequently, as the topological loss \emph{increases}, 
our precision in distinguishing embeddings \emph{decreases} in a controlled manner, 
and we can further bound the \ourmethod variance under projection as follows.
\begin{restatable}[\ourmethod Variance under Projection]{theorem}{thmSensitivityPreservation}
  \label{thm:SensitivityPreservation}
  Consider an MMS $\MMS = (\embeddings,\distance_\topological)$ 
  with the landscape distance $\distance_\topological(\embedding_i,\embedding_j) \coloneq 
  \distance(\landscape(\embedding_i),\landscape(\embedding_j))$ 
  and associated persistence landscapes $\landscapes_\MMS$. 
  Further, let $\MMS^\projdim$ be a PMMS 
  with a topological loss $\topologybound$.
  Then we can bound the maximal change in \emph{any} persistence
  landscape norm as
\begin{equation}
     \|\landscape(\embedding_i)\|  - \topologybound \leq  \|\landscape(\projection_i)\| \leq  \|\landscape(\embedding_i)\| + \topologybound \;.
     \label{eq:Landscape norm change}
  \end{equation}
Given the \ourmethod variance of \MMS,
$\text{\ourvariance}(\landscapes_\MMS)$, we can bound the \ourmethod
variance of $\MMS^\projdim$, i.e., $\text{\ourvariance}(\landscapes_{\MMS^\projdim})$, as 
\begin{equation}
  \cardinality{\text{\ourvariance}(\landscapes_\MMS)-\text{\ourvariance}(\landscapes_{\MMS^\projdim})} 
  \leq \frac{4\topologybound}{\nlandscapes} \sigma_i + \frac{\left(2\topologybound\right)^2}{\nlandscapes}\;,
\end{equation}
where $\sigma_i \coloneq \sum_{i=1}^{\nlandscapes} (\lVert \landscape(\embedding_i)\rVert - \mu_\embeddings)$.
\end{restatable}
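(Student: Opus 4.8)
The plan is to proceed in two stages: first establish the pointwise landscape-norm bound \eqref{eq:Landscape norm change}, then feed it into the definition of \ourvariance to control the difference of variances. For the first stage, I would start from the definition of the topological loss \topologybound, which gives $\distance_\topological(\embedding_i,\projection_i) \leq \topologybound$ for every $i$. Since $\distance_\topological$ is here the landscape distance, this reads $\distance(\landscape(\embedding_i),\landscape(\projection_i)) \leq \topologybound$, i.e.\ $\|\landscape(\embedding_i) - \landscape(\projection_i)\| \leq \topologybound$ in the Banach-space ($L^2$) norm. Applying the reverse triangle inequality $\bigl|\,\|\landscape(\embedding_i)\| - \|\landscape(\projection_i)\|\,\bigr| \leq \|\landscape(\embedding_i) - \landscape(\projection_i)\|$ immediately yields \eqref{eq:Landscape norm change}. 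This step is routine and should be short; the only subtlety is making sure the landscape distance is genuinely a norm-induced metric on a Banach space, which the paper has already asserted in \cref{Background}.

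For the second stage, I would write $a_i \coloneq \|\landscape(\embedding_i)\|$ and $b_i \coloneq \|\landscape(\projection_i)\|$, so that $|a_i - b_i| \leq \topologybound$ for all $i$, and expand the variance difference. Working (for notational clarity) with a single topological dimension — the general case follows by summing over $x \in \{0,\dots,\topodim\}$ — the \ourvariance is $\frac{1}{\nlandscapes}\sum_i (a_i - \mu_a)^2$ versus $\frac{1}{\nlandscapes}\sum_i (b_i - \mu_b)^2$, where $\mu_a,\mu_b$ are the respective means. I would use the standard identity $\frac{1}{\nlandscapes}\sum_i (a_i - \mu_a)^2 = \frac{1}{\nlandscapes}\sum_i a_i^2 - \mu_a^2$ to turn the difference into $\frac{1}{\nlandscapes}\sum_i (a_i^2 - b_i^2) - (\mu_a^2 - \mu_b^2)$, then factor each square difference as $(a_i-b_i)(a_i+b_i)$ and $(\mu_a-\mu_b)(\mu_a+\mu_b)$. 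Writing $b_i = a_i + \eta_i$ with $|\eta_i| \leq \topologybound$, so that $\mu_b = \mu_a + \bar\eta$ with $|\bar\eta| \leq \topologybound$, and expanding $b_i^2 - a_i^2 = 2a_i\eta_i + \eta_i^2$ (and similarly for the means) lets me collect the cross terms $\frac{2}{\nlandscapes}\sum_i a_i \eta_i - 2\mu_a\bar\eta$ and the quadratic terms $\frac{1}{\nlandscapes}\sum_i \eta_i^2 - \bar\eta^2$. The quadratic terms are bounded by $(2\topologybound)^2/\nlandscapes$ after a crude estimate (indeed $\frac{1}{\nlandscapes}\sum \eta_i^2 \le \topologybound^2$, and dropping $-\bar\eta^2 \le 0$ only helps), and the cross terms, after recentering $a_i$ by $\mu_\embeddings$ and using $|\eta_i|,|\bar\eta|\le\topologybound$, are bounded by $\frac{4\topologybound}{\nlandscapes}|\sigma_i|$ with $\sigma_i = \sum_i(\|\landscape(\embedding_i)\| - \mu_\embeddings)$ as defined in the statement. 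Taking absolute values and assembling the pieces gives the claimed bound.

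I expect the main obstacle to be the bookkeeping in the cross-term estimate: one must be careful about which "mean" plays the role of $\mu_\embeddings$ in the definition of $\sigma_i$, reconcile the index clash in the statement (the summation variable $i$ in $\sigma_i$ coincides with the fixed model dimension $i$, so I would tacitly rename it), and track the constant $4$ rather than a looser $2$ — this comes from bounding both $\frac{2}{\nlandscapes}\sum_i (a_i-\mu_\embeddings)\eta_i$ and the mean-correction term by $\frac{2\topologybound}{\nlandscapes}|\sigma_i|$ each. A secondary point is the multi-dimensional case: since \ourvariance sums the per-dimension variances over $x = 0,\dots,\topodim$ and the topological loss \topologybound is a single uniform bound, the triangle inequality over the (at most three) dimensions only changes constants, but I would state explicitly that \topologybound already dominates the per-dimension landscape perturbations so the same argument applies verbatim in each summand. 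Everything else is elementary real-analysis manipulation with the reverse triangle inequality and the variance identity, so no deeper machinery from persistent homology is needed beyond the stability of landscapes already invoked for \cref{thm:ClusteringPreservation}.
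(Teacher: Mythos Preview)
Your proposal is correct and, for the first part, essentially mirrors the paper: the paper also argues via the triangle inequality, just phrased as $\|\landscape(\embedding_i)\| = d(\landscape(\embedding_i),\emptyset) \leq d(\landscape(\embedding_i),\landscape(\projection_i)) + \|\landscape(\projection_i)\|$, which is your reverse-triangle step in disguise.

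For the second part your route diverges from the paper's. The paper does not pass through the identity $\text{Var} = \tfrac{1}{N}\sum a_i^2 - \mu_a^2$; instead it bounds each centered term directly via $b_i - \mu_b \leq (a_i - \mu_a) + 2\topologybound$ (applying the norm bound from \eqref{eq:Landscape norm change} once to $b_i$ and once to $\mu_b$), and then expands $(a_i - \mu_a + 2\topologybound)^2 - (a_i - \mu_a)^2$ summand by summand. Your second-moment expansion with $b_i = a_i + \eta_i$ is a little more systematic and handles both directions of the absolute value symmetrically, whereas the paper's per-summand argument is shorter but, as written, only establishes one inequality direction explicitly and leaves the other implicit. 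Both approaches are elementary consequences of part one and land on the same expression; neither requires any further topological input. Your caution about the bookkeeping around $\sigma_i$ is well placed --- the paper's proof carries the same quantity $\sum_i(\|\landscape(E_i)\| - \mu_\embeddings)$ through without further comment.
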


As a result, \ourmethod is stable as long as we can control the
approximation error induced by the choice of projector function.
To this end, \cref{apx:theory} provides explicit bounds for several
common classes of projection functions.

\subsection{\ourmethod Complexity}
\label{methods:complexity}

While scalability is a common concern in computational topology,  
our framework is specifically designed for scalability.  
We present a detailed complexity analysis in \cref{apx:computational-complexity:theoretical}, 
showing that, overall, \ourmethod's computations are approximately linear in the number of samples in \dataset. 
In
\cref{apx:computational-complexity:empirical,apx:computational-complexity:comparative},
we also validate \ourmethod's time complexity empirically, demonstrating
that \ourmethod distances can be computed faster than related (dis)similarity measures.
 \section{Related Work} \label[section]{Related-Work}

\ourmethod connects two strands of literature, i.e., 
\emph{topological approaches} and \emph{multiverse approaches} in machine learning. 
Since our framework additionally draws on numerous other fields, 
we provide an extended discussion in \cref{apx:Related Work}. 

\textbf{Topological Approaches in Machine Learning.}\quad
Topological approaches have been used to analyze and control representational variation,
  leading to regularization terms that preserve  topological characteristics~\citep{trofimov2023learning, Moor20a, Waibel22a},
scores for assessing disentanglement~\citep{zhou2021evaluating}
or quality~\citep{Rieck15b, Rieck16a},
as well as methods for learning disentangled
representations~\citep{balabin2023disentanglement}, 
studying neural networks~\citep{klabunde2023similarity,
kostenok2023uncertainty, purvine2023experimental, Rieck19a},
measuring generative quality~\citep{kim2023robust}, 
and enabling zero-shot training \citep{moschella2023relative}.
Drawing on topological concepts to analyze differences between latent spaces,
the \emph{manifold topology
divergence} and the
\emph{representation topology
divergence}~(RTD)~\citep{barannikov2021manifold,barannikov2022representation} 
are closest to our work. 
However, these methods focus on \emph{pairwise}
comparisons of \emph{aligned} data, 
exhibit unfavorable scaling behavior, 
and do not enjoy theoretical fidelity guarantees. 
By contrast, 
as the first method for studying representational variability that leverages persistence landscapes, 
\mbox{\ourmethod} enables a \emph{multiverse} analysis in
terms of models, (hyper)parameters, and datasets, 
readily handles \emph{unaligned}
embeddings, and quantifies all results in terms of \emph{distance metrics},
thus improving their interpretability. 

\textbf{Multiverse Approaches in Machine Learning.}\quad
Multiverse analysis \citep{Steegen16a} aims to reduce arbitrariness and increase transparency in data
analysis via the joint consideration of multiple reasonable analytical scenarios.
Precursors of a multiverse perspective in machine
learning have assessed
hyperparameter choices~\citep{kumar2020implicit, sivaprasad2020optimizer, smith1803disciplined,
zhang2019algorithmic}, studied the causal structure of latent
representations~\citep{leeb_exploring_2022}, or compared different
models~\citep{vittadello_model_2021, diedrichsen_comparing_2020,
diedrichsen_representational_2017}.
By contrast,  
\ourmethod provides a unified framework for the structural analysis of latent spaces across models, (hyper)parameters and datasets.
While \citet{bell2022modeling} consider a model multiverse, 
their work differs in its goals and its methods. 
In particular, they focus on the
\emph{performance-driven exploration} of \emph{continuous} search spaces
via tools from \emph{statistics}, while we pursue the
\emph{structure-driven characterization} of \emph{discrete} search
spaces using tools from \emph{topology}.
 \section{Experiments} \label[section]{Experiments}

In our experiments, 
we ask three questions:
\begin{compactenum}
	\item[(Q0)] \textbf{\ourmethod's distinctive properties.}\\ \emph{How does \ourmethod relate to existing measures of representational (dis)similarity and variability?}
	\item[(Q1)] \textbf{\ourmethod for \emph{exploring} representational variability.}\\
	\emph{How can \ourmethod help us understand representational variability across different choices of model architectures, (hyper)parameters, and datasets?}
	\item[(Q2)] \textbf{\ourmethod for \emph{exploiting} representational variability.}\\
	\emph{How can \ourmethod help us to efficiently train and select  robust and reliable machine-learning models?}
\end{compactenum}

To address our guiding questions, 
we generate multiverses for two types of generative models, i.e., 
\emph{variational autoencoders} as generators of \emph{images}, 
and \emph{transformers} as generators of \emph{natural language}. 
We are particularly interested in the impact of 
\emph{algorithmic} choices \algos, \emph{implementation} choices \implementations, 
and \emph{data} choices \datas on the generated representations. 
Further experiments (including on dimensionality reduction),  
a multiverse analysis of the choices involved in the \ourmethod pipeline, 
and more details on all results reported here  
can be found in \cref{apx:experiments}.\footnote{Code:  \href{https://github.com/aidos-lab/Presto}{https://github.com/aidos-lab/Presto}. 
	Reproducibility package: \oururl.
	Unless otherwise noted, all experiments use normalization. 
} 

\textbf{VAE Multiverses.}\quad
In brief, our VAE experiments study representational variation in the following dimensions. 
\begin{compactenum}
	\item[\algos.] 
	We consider three VAE architectures: 
	\begin{inparaenum}[(1)]
		\item \bvae \citep{higgins2017betavae}, 
		\item \info \citep{zhao2018infovae}, and
		\item \wae \citep{tolstikhin2019wasserstein}.
	\end{inparaenum}
	For each architecture, 
	we investigate the interplay between hyperparameter 
	choices and latent-space structure when navigating trade-offs between reconstruction bias and KL-divergence weight in conjunction with loss variations and kernel parameters.
	\item[\implementations.]
	Specifically for \bvae, in \cref{apx:vae-implementation-multiverse}, 
	we explore the relation between $\beta$ and five implementation choices:
	\begin{inparaenum}[(1)]
		\item batch size $b$,
		\item hidden dimensions $h$, 
		\item learning rate $l$,
		\item sample size $s$, 
		\item and train-test split $t$.
	\end{inparaenum}  
	\item[\datas.] 
	 We train on five datasets: 
	 \begin{inparaenum}[(1)]
	 	\item \celeba,
	 	\item \cifar,
	 	\item \dsprites,
	 	\item \fashionmnist, and
	 	\item \mnist.
	 \end{inparaenum}
\end{compactenum}
For a detailed description, see \cref{apx:experiment-hyperparameter-multiverse,apx:vae-implementation-multiverse}. 

\textbf{Transformer Multiverses.}\quad
Based on access to pretrained language models only,
we focus our transformer experiments on algorithmic and data choices.
In \cref{apx:transformers:norms,apx:transformers:distances,apx:transformers:sensitivities}, 
we additionally use transformers to study the impact of implementation choices in the
\ourmethod pipeline. 
\begin{compactenum}
	\item[\algos.] 
	We consider six transformer models: 
	\begin{inparaenum}[(1)]
		\item \ada,
		\item \mistral,
		\item \distilroberta,
		\item \MiniLM,
		\item \mpnet, and
		\item \qadistilbert.
	\end{inparaenum}
	The first two models are large language models from OpenAI and MistralAI, respectively, 
	whereas the other four models are sentence transformers taken from the \emph{sentence-transformers} library \cite{reimers2019sentencebert}.
	\item[\implementations.]
	To analyze the implementation multiverse of \ourmethod pipeline choices, 
	we introduce variation in the 
	\begin{inparaenum}[(1)]
		\item number of samples \nsamples embedded, 
		\item number of projection components \projdim considered, as well as in  
		\item embedding-projection method (PCA vs. random projections) and number of random projections \nprojections.
	\end{inparaenum}  
	\item[\datas.] 
	We probe each trained model by embedding abstracts from four summarization datasets, i.e., 
	\begin{inparaenum}[(1)]
		\item \arxiv,
		\item \bbc,
		\item \cnn, and
		\item \patents,
	\end{inparaenum}
	all of which are available via \emph{HuggingFace}.
\end{compactenum}
See \cref{apx:presto-transformers-description} for a detailed description. 

\begin{figure}[t]
	\centering
	\includegraphics[width=0.975\linewidth]{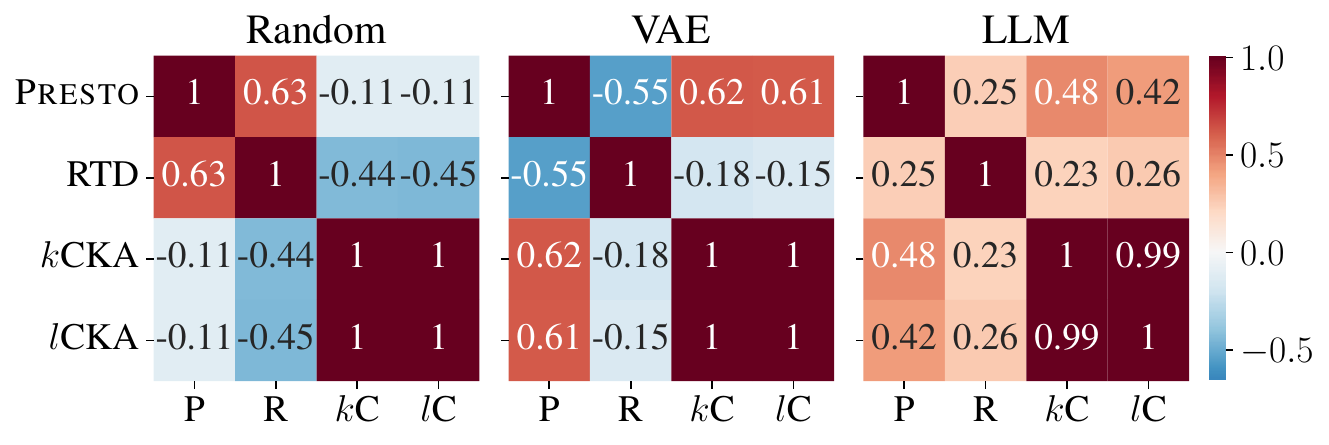}
	\caption{\textbf{Comparing \ourmethod distances with other measures.}
		We show the Pearson correlations between \ourmethod, 
		RTD, $k$CKA, and $l$CKA, 
		on random data (left), VAE embeddings (center), and LLM embeddings (right).
\mbox{\ourmethod} captures representational variation differently from existing methods.
}\label{fig:comparisons-competitors}
\end{figure}

\subsection{\ourmethod's Distinctive Properties}
\label{exp:properties}
With our experimental setup in place, 
we turn to our zeroth guiding question: 
understanding \ourmethod's distinctive properties. 
To begin, 
we compare \ourmethod distances with other measures of representational (dis)similarity 
on the basic task of \emph{pairwise} comparisons between \emph{aligned} embeddings 
(a limitation imposed by competitor methods). 
Summarizing the correlation between \ourmethod and RTD, 
both topology-based \emph{dissimilarity} measures, 
as well as RBF-kernel and linear Centered Kernel Alignment ($k$CKA and $l$CKA), 
both \emph{similarity} measures, 
in \Cref{fig:comparisons-competitors}, 
we observe that there is no consistent relationship between \ourmethod distances, RTDs, and CKA scores. 
This indicates that \ourmethod captures variation in latent-space structure differently from existing methods,  
which appears desirable, 
given the known limitations of existing approaches \citep[cf.][]{davari2023reliability}.

To understand what exactly is captured by \ourmethod, 
in the left panel of \Cref{fig:geometric-generative}, 
(see also \cref{apx:exp:geomgen}), 
we correlate \emph{unnormalized} \ourmethod distance matrices for 
different VAE hyperparameter multiverses with estimates of \emph{geometric} distances
between pairs of latent spaces (i.e., Pearson distances between random, aligned metric subspaces),
displaying the distribution of correlations over multiple random draws. 
We expect to see \emph{some} correlation between \ourmethod distances and geometric distances 
because our framework is based on PH, 
which also captures some geometric properties \cite{bubenik_persistent_2020}.
In line with this expectation, 
we see that \ourmethod distances are correlated with \emph{geometric} distances, 
albeit to different extents across models and datasets.
In the right panel of \Cref{fig:geometric-generative}, 
we further examine the relationship between landscape norms, 
the foundation of \ourmethod \emph{variances}, 
and the performance of models in the \bvae hyperparameter multiverse. 
We observe that while larger landscape norms are, overall, associated with larger losses, 
models with similar performance exhibit substantial variability in their landscape norms. 
This suggests that the variability captured by \mbox{\ourmethod} is orthogonal to variability in performance
(an impression further corroborated by an extended experiment in \cref{apx:Performance}),
underscoring \ourmethod's capacity to complement performance-based metrics,  
shed light on variability in \emph{similarly-performing} models, 
and promote \emph{representational stability} as a target in model evaluation.

\begin{figure}[t]
	\centering
	\includegraphics[width=\linewidth]{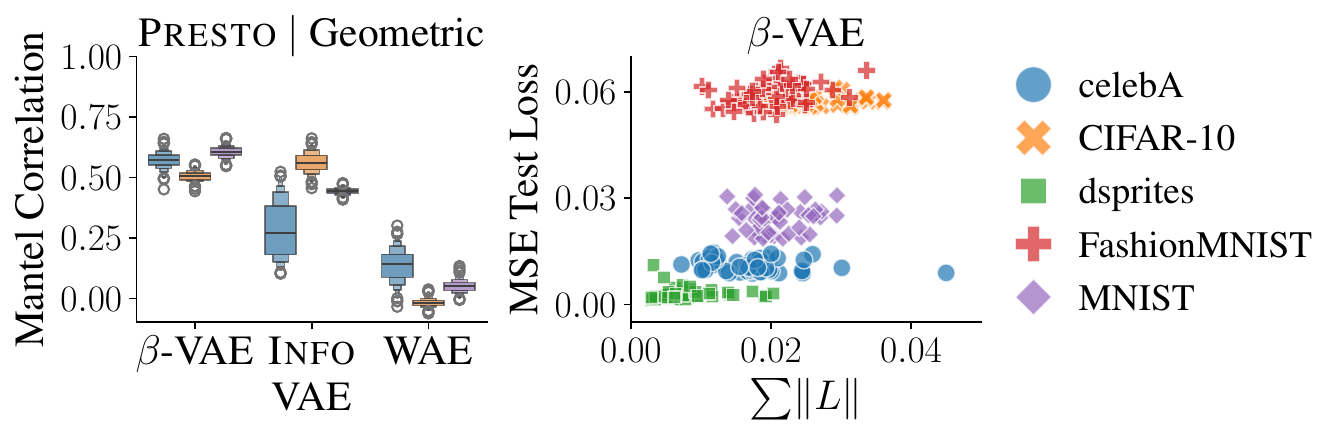}
	\caption{\textbf{Comparing \ourmethod, latent-space geometry, and model performance.}
		We show the distribution of correlations between \ourmethod distances 
		and geometric latent-space distances in the VAE multiverse, 
		estimating geometric distances based on the Pearson distance between Euclidean metric spaces of aligned random samples of size $512$ 
		over $256$ random draws (left),  
		as well as the relationship between landscape norms and model performance for \bvae (right). 
\mbox{\ourmethod} captures geometric similarity between latent spaces and is orthogonal to  performance.
}\label{fig:geometric-generative}
\end{figure}

\begin{figure}[t]
	\centering
	\includegraphics[width=\linewidth]{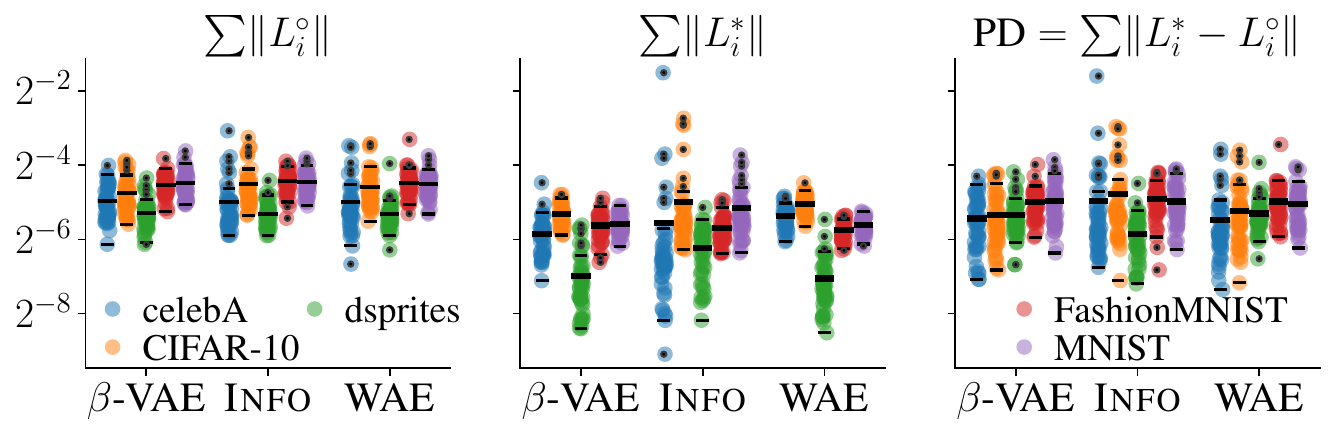}
	\caption{\textbf{Landscape-norm distributions in our VAE hyperparameter multiverse.} 
		We show the distribution of landscape norms after initialization (left) and training (center), 
		as well as the distribution of \ourmethod distances between the landscape at initialization and the landscape after training (right). 
		Thick lines indicate means, 
		thin black lines indicate interquartile range, 
		and black dots indicate outliers. 
		Training differentially affects landscape norms across models and datasets.
	} \label{fig:vae-landscape-norms}
\end{figure}

\begin{figure}[t]
	\centering
	\includegraphics[width=\linewidth]{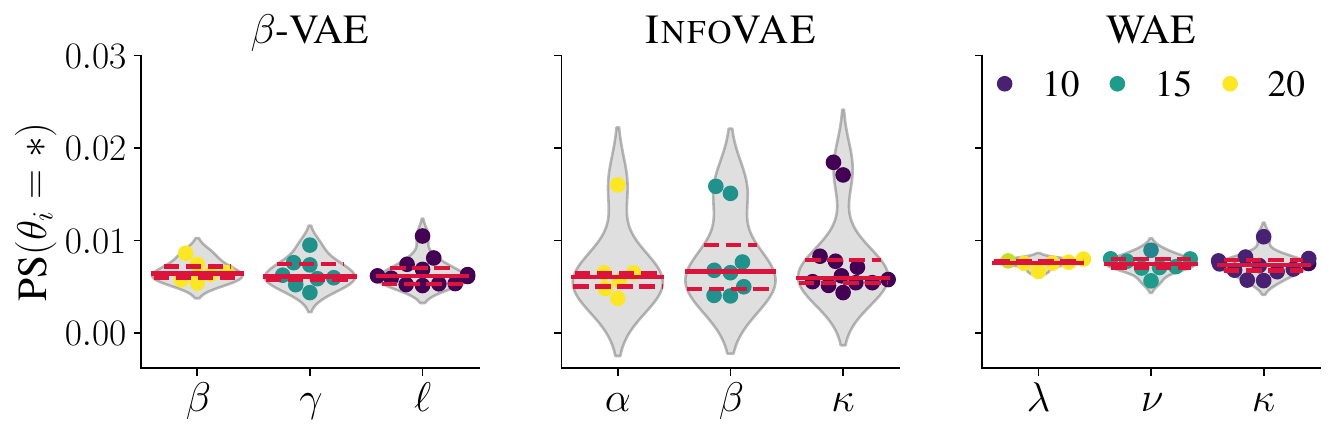}
	\caption{\textbf{Distributions of individual \ourmethod sensitivity in our VAE hyperparameter multiverse.} 
		We show the distribution of individual \ourmethod sensitivity scores for all equivalence classes of models that vary a particular parameter $\params_i$ while keeping the others constant. 
		Marker colors indicate the number of observations per equivalence class, 
		the solid red line marks the median, and dashed red lines indicate interquartile range. 
\mbox{\info} exhibits the largest variability in hyperparameter sensitivities.
} \label{fig:vae-sensitivity}
\end{figure}

\subsection{Exploring Representational Variability}
\label{exp:explore}
Reassured by \ourmethod's distinctive properties, 
we now leverage our framework to explore representational variability in our VAE and transformer multiverses. 
We find that landscape norms are approximately normally distributed, 
such that they permit standard statistical approaches to outlier detection. 
As shown in \Cref{fig:vae-landscape-norms}, 
training affects landscape norms differentially, 
depending on model and dataset choices. 
We observe that \info exhibits a larger fraction of anomalous configurations than \bvae and \wae, 
which motivates us to explore individual \ourmethod sensitivities for the main hyperparameters of our VAE models. 
Studying the distribution these sensitivities, 
depicted in \Cref{fig:vae-sensitivity}, 
reveals that \info has the largest variability in hyperparameter sensitivities---i.e., for \info, 
the effect of changing a hyperparameter depends more strongly on the position in the hyperparameter space than for \bvae and \wae.
We conclude that \mbox{\info} is less representationally stable than its contenders, 
which is confirmed by an analysis of the local and global \ourmethod sensitivities of all models in \cref{apx:vae-sensitivities}. 

Turning to our transformer multiverses, 
and further demonstrating the exploratory power of \ourmethod, 
in \Cref{fig:transformer-mms-compare}, 
we use our topological tools to directly compare the multiverse metric spaces (MMSs) of our transformer models. 
We see that we can distinguish large language models from smaller language models based on topological comparisons between their embedding multiverses. 
Furthermore, topological comparisons between multiverse metric spaces have higher discriminatory power than comparisons based on Mantel correlation \citep{mantel1967detection}. 

\begin{figure}[t]
	\centering
	\includegraphics[height=0.325\linewidth]{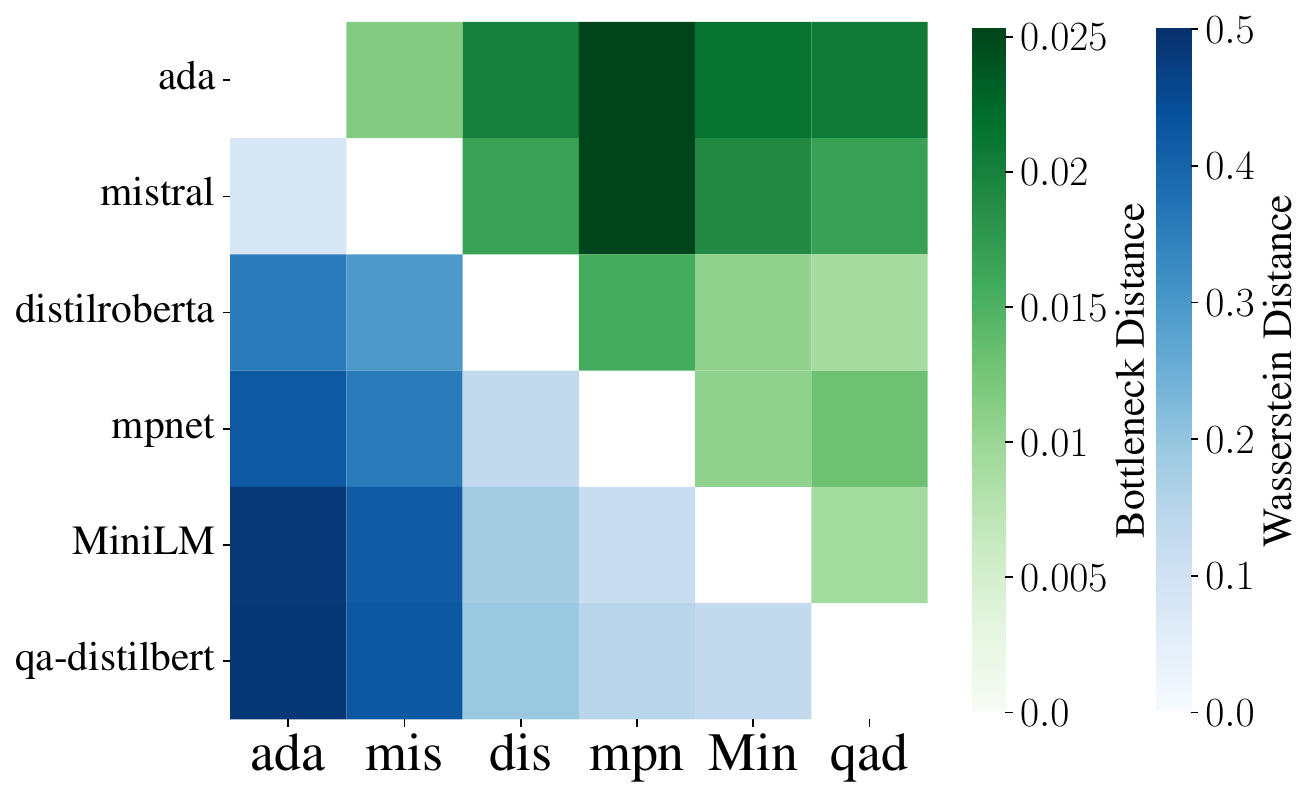}~\includegraphics[height=0.325\linewidth]{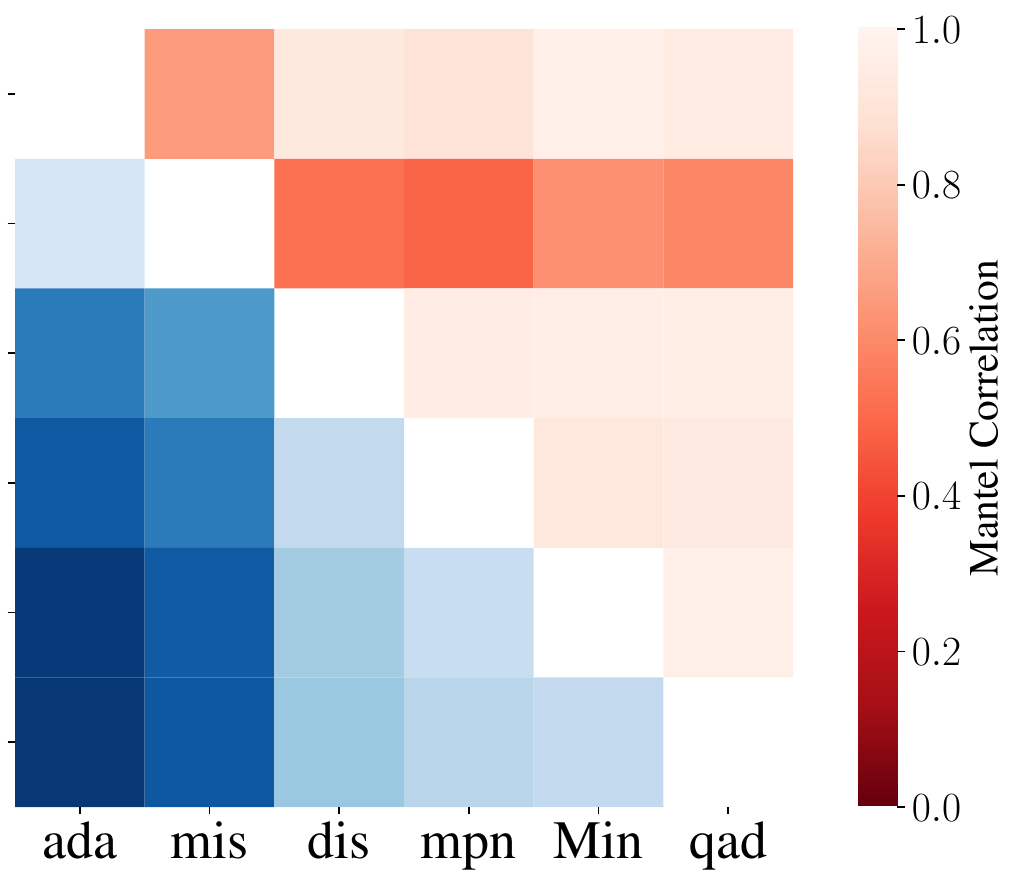}
	\caption{\textbf{Comparing MMSs across transformer models.}
		We show the (dis)similarity between the transformer multiverses associated with each of our models, 
		as measured by the Wasserstein distance (lower triangles), the bottleneck distance (upper triangle left), 
		or the Mantel correlation (upper triangle right). 
Topological distances provide a more nuanced perspective than permutation-based correlation assessments, 
			and they clearly distinguish the MMSs of large language models from those of smaller models.
} \label{fig:transformer-mms-compare}
\end{figure}
\begin{figure}[t]
	\centering
	\includegraphics[width=0.9\linewidth]{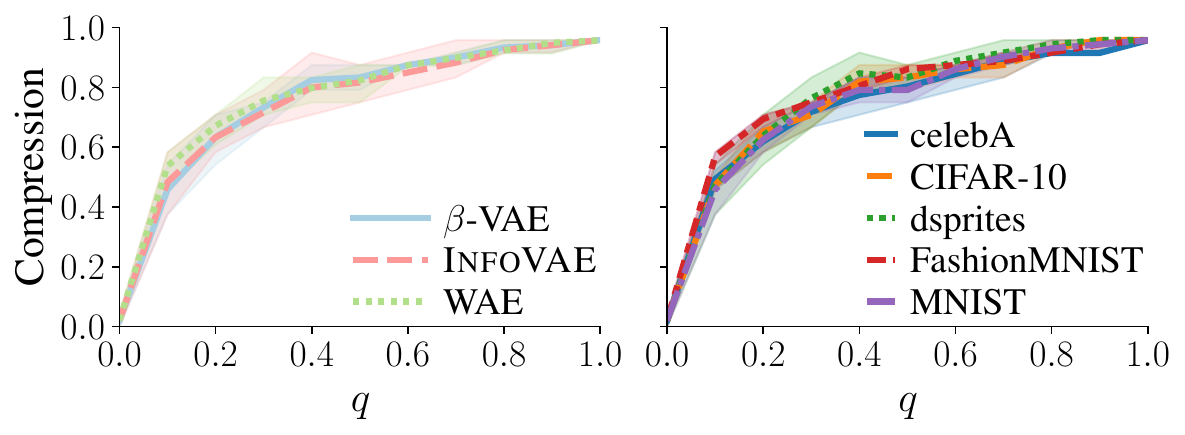}
	\caption{\textbf{Compressing the VAE hyperparameter multiverse.} 
		We show the compression of the hyperparameter multiverse split by models (left) and datasets (right) achieved by restricting the hyperparameter search to set-cover representatives which guarantee that each universe has a representative at distance no larger than the $q$th quantile of the distribution of pairwise distances in the multiverse. 
With \ourmethod, we can halve the size of the hyperparameter search space while ensuring low topological distortion.
}\label{fig:vae-compression}
\end{figure}

\subsection{Exploiting Representational Variability}
\label{exp:exploit}
In our explorations of representational variability, 
we have seen that \ourmethod supports sensitivity analysis and outlier detection for latent-space models.  
Encouraged by these findings, 
we further investigate how our framework can leverage representational variability (and the lack thereof) 
to facilitate the efficient selection of representationally robust and reliable latent-space models in practice. 
As shown in \Cref{fig:vae-compression}, 
with \ourmethod, we can compress a VAE hyperparameter search space by $50\%$, 
ensuring that each coordinate is matched to a structurally similar representative in the compressed space. 
This suggests that there exist opportunities for environmentally conscious, 
yet empirically sound hyperparameter selection based on topological insights into latent-space structure 
(see \cref{apx:exp:low-complexity-training} for an additional experiment exploring such opportunities). 

In \Cref{fig:reusing-hyperparameters}, 
we perform MMS comparisons with \ourmethod 
to assess when the widespread custom of reusing hyperparameter knowledge across datasets is topologically justified. 
We find that among our VAE models, 
only \mbox{\bvae} exhibits the cross-dataset latent-space consistency required for such a transfer. 
Theoretically sound knowledge transferability promotes sustainable yet rigorous machine-learning practices, 
and \ourmethod appears as a promising tool to evaluate the cross-dataset representational consistency of latent-space models
that is required to ensure it. 

Finally, \cref{apx:DimRed} shows how \ourmethod can be  leveraged in the context of data analysis, e.g., 
when reasoning about the results of \emph{non-linear dimensionality-reduction} methods.

\begin{figure}[t]
	\centering
	\includegraphics[width=\linewidth]{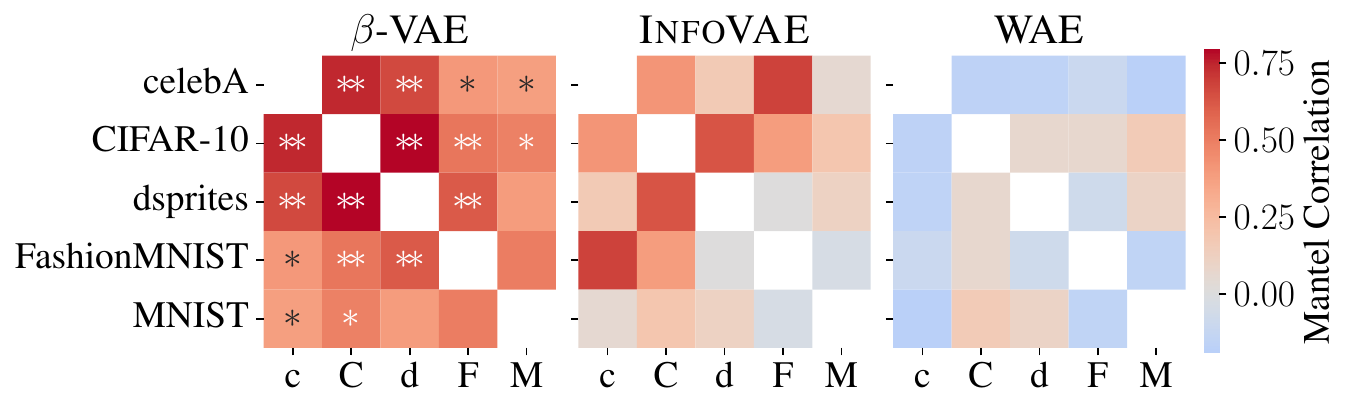}
	\caption{\textbf{Reusing hyperparameter knowledge.} 
		We show the Mantel correlation (color) between the hyperparameter multiverses of our three VAE models when trained on five different datasets, 
		annotating Bonferroni-corrected $p$-values at the $95\%$ ($\ast$) and the $99\%$ ($\ast\ast$) significance level. 
The topological reusability of hyperparameter knowledge strongly depends on the chosen model.
} \label{fig:reusing-hyperparameters}
\end{figure}
 \section{Discussion and Conclusion}
\label{Conclusion}
We introduced \ourmethod, 
a \emph{topological multiverse framework} to describe and relate (collections of) latent representations.
\ourmethod flexibly and scalably compares spaces varying in cardinality and dimension, 
surpassing existing work in generality 
while still capturing salient topological signal
and benefiting from theoretical stability guarantees.
By offering novel topological diagnostics for \emph{distributions} of latent spaces, 
\ourmethod unlocks a \emph{structure-driven} alternative for studying representational variability in generative models, 
complementing performance-driven approaches.
Drawing on the notion of \emph{multiverse metric spaces}, 
we used \ourmethod to develop scalable practical tools 
for efficiently evaluating and selecting latent-space models, 
including VAEs and transformers, 
across a wide range of configurations. 

\paragraph{Limitations.}
\ourmethod allows us to measure the sensitivity of latent-space models to changes in algorithmic, implementation, and data choices, 
and we can identify outliers among latent spaces. 
To the best of our knowledge, 
no suitable baselines currently exist for these purposes, 
since existing methods are based on model performance.
Hence, exploring alternative approaches to sensitivity analysis and
outlier detection for latent-space models based on their internal
representations constitutes a crucial avenue for future work.

Moreover, each step in our \ourmethod framework offers a number of choices. 
For example, we can opt to work with \emph{normalized} or \emph{unnormalized} embeddings, 
and choose \emph{deterministic} or \emph{random} projections. 
Based on preliminary experiments, 
our intuition is that normalization emphasizes \emph{topological} variability, 
whereas computations based on unnormalized embeddings chiefly capture \emph{geometric} variability. 
Similarly, random projections seem particularly suitable for studying variability \emph{within} individual latent spaces, 
whereas deterministic projections appear to excel at comparisons \emph{between} different latent spaces.
However, how the representational variabilities observed under each choice (or combination of choices) are related, 
and how they can be interpreted, 
merits a separate in-depth investigation.

Finally, while \ourmethod improves our understanding and handling of representational variability in latent-space models, 
we have only scratched the surface regarding its applications. 
Thus, we envisage 
\begin{inparaenum}[(1)]
	\item leveraging \ourmethod to study latent representations beyond the generative domain, 
	such as \emph{graph embeddings} and \emph{internal neural-network layers} (see \cref{apx:neural-networks} for preliminary experiments on the latter), 
	\item extending \ourmethod's reach to other areas of responsible and efficient model selection, 
	such as \emph{representational biases} and \emph{zero-shot stitching}, and 
	\item integrating \ourmethod's hyperparameter-compression and sensitivity-scoring methods into \emph{machine-learning-operations tools}. 
\end{inparaenum}
Moreover, while our initial experiments on hyperparameter-search-space compression and hyperparameter-knowledge reuse seem promising, 
additional research is necessary to understand \emph{when} and
\emph{how} we can leverage insights from one setting to inform hyperparameter-search and hyperparameter-selection strategies in other settings. 

Overall, we believe that multiverse approaches are 
essential in the development of responsible machine-learning practices, 
and that \ourmethod constitutes an important step toward establishing those approaches in the community. 
 \section*{Impact Statement}

In this paper, we introduce a topological framework 
for understanding representational variability in latent spaces, 
along with scalable practical tools to efficiently select robust and reliable machine-learning models. 
Thus, our work directly contributes to the advancement of responsible-machine-learning goals. 

\section*{Reproducibility Statement}

We make all code, data, and results publicly available. 
Reproducibility materials are available at \oururl, 
and our code is maintained at \href{https://github.com/aidos-lab/Presto}{https://github.com/aidos-lab/Presto}.

\section*{Acknowledgments}

C.C.\ is supported by \emph{Digital Futures} at KTH Royal Institute of Technology. 
B.R.\ is supported by the Bavarian state government with
funds from the \emph{Hightech Agenda Bavaria}.
 
\balance

\bibliography{references}
\bibliographystyle{icml2024}

\newpage

\appendix
\onecolumn
{\LARGE\bfseries Appendix}
\label{apx}

In this appendix, we provide the following supplementary materials. 
\begin{compactenum}[A.]
	\item \hyperref[apx:theory]{\textbf{Extended Theory.}} Definitions, proofs, and results omitted from the main text.
	\item \hyperref[apx:experiments]{\textbf{Extended Experiments.}} Additional details and experiments complementing the discussion in the main paper. 
	\item \hyperref[apx:methods]{\textbf{Extended Methods.}} Further details on properties and parts of the \ourmethod pipeline. 
	\item \hyperref[apx:Background]{\textbf{Extended Background.}} More information on the latent-space models to which \ourmethod can be applied. 
	\item \hyperref[apx:Related Work]{\textbf{Extended Related Work.}} Discussion of additional related work. 
\end{compactenum}

\section{Extended Theory}
\label{apx:theory}

In this section, we state the definitions, provide the proofs of our main theoretical results, and derive the additional results omitted from the main text. 

\begin{definition}[Bottleneck Distance $d_B$]
  Let $X$ and $Y$ be two finite metric spaces. The \textbf{bottleneck distance} between the 
  persistence diagrams of $X$ and $Y$ is denoted by $d_B(\cdot,\cdot)$ and defined as 
\begin{equation}
      d_B(D_1, D_2) = \min_{\gamma\colon D_1 \to D_2} \max_{x \in D_1} \lVert x - \gamma(x) \rVert_\infty\;,
  \end{equation}  
  for a bijection $\gamma\colon D_1\to D_2$.
\end{definition}

\begin{definition}[Vietoris--Rips Complex]
  The \textbf{Vietoris--Rips complex} of a metric space $(X,d)$ at diameter $r$,
  denoted as $\textbf{VR}(X, r)$, is defined as 
\begin{equation}
  \textbf{VR}(X, r) = \left\{ \sigma \subseteq X \mid
  \forall x, y \in \sigma, d(x,y) \leq r \right\}\;.
\end{equation}
The elements $\sigma$ of $\textbf{VR}(X, r)$ are the simplices of the complex, and they represent subsets of 
$X$ such that the pairwise distances between their elements are at most $r$.
  
\end{definition}

\textbf{Notation.}~Following the form and notation of the results in \citeauthor{krishnamoorthy_normalized_2023},
which are proved for Vietoris--Rips filtrations, we write $\mathrm{dgm}(X)$ for the Vietoris--Rips persistence diagram of a metric
space $X$, and similarly
 $d_B(X,Y)$ for the bottleneck distance between two persistence diagrams $\mathrm{dgm}(X)$ and  $\mathrm{dgm}(Y)$.

\begin{definition}[Normalized Bottleneck Distance $d_N$]
For metric spaces $(X,d_X)$ and $(Y,d_Y)$ with diameters $\diam(X), \diam(Y)$,
the \textbf{normalized bottleneck distance} is defined as 
\begin{equation}
  d_N(X,Y) = d_B\left(\frac{X}{\diam(X)},\frac{Y}{\diam(Y)}\right)\;.
\end{equation}
Moreover, as proved by \citeauthor{krishnamoorthy_normalized_2023},
$d_N$ has the favorable properties of scale invariance and stability.
\end{definition}

\begin{definition}[Johnson--Lindenstrauss Projection]
Let $(X,d_X)$ and $(Y,d_Y)$ be metric spaces. A \textbf{Johnson-Lindenstrauss projection} (JL projection) is
a linear map $f\colon X \to Y$ that satisfies the following inequality for some $0 < \varepsilon < 1$:
  \begin{equation}
      (1 - \varepsilon) \, d_X(u, v)^2 \leq d_Y(f(u), f(v))^2 
      \leq (1 + \varepsilon) \, d_X(u, v)^2 \;.
  \end{equation}
  The JL projection provides a controlled distortion of pairwise distances, 
  allowing for a significant reduction in dimensionality while approximately preserving the geometry of the original space.
  The celebrated JL Lemma guarantees the existence of such a projection~$f$.
\end{definition}

\begin{lemma}[Johnson--Lindenstrauss Persistent Homology Preservation]
Let $X \subset \reals^d $ and $\varepsilon \in (0, 1)$.
If $ f\colon\reals^d \to \reals^m$ 
is a JL linear projection, then
\begin{equation*}
d_N(X, f(X)) \leq \varepsilon\;,
\end{equation*}
where $n$, the dimension of the projection, is assumed to be 
larger than $\nicefrac{8 \ln(|X|)}{\varepsilon^2}$.
\end{lemma}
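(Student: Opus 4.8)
The plan is to show that a Johnson--Lindenstrauss projection, being an $\varepsilon$-approximate isometry on pairwise distances after squaring, induces a correspondingly small perturbation on the normalized Vietoris--Rips persistence diagrams, measured in the bottleneck distance. The key bridge is the stability theorem for the normalized bottleneck distance (attributed above to \citeauthor{krishnamoorthy_normalized_2023}): if two metric spaces are close in a suitable multiplicative sense, then their $d_N$ values are close. So the bulk of the argument is purely metric---controlling the multiplicative distortion of $f$---after which stability does the topological work.

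First I would turn the squared JL inequality into a non-squared one. From $(1-\varepsilon)\,d_X(u,v)^2 \leq d_Y(f(u),f(v))^2 \leq (1+\varepsilon)\,d_X(u,v)^2$, taking square roots gives $\sqrt{1-\varepsilon}\,d_X(u,v) \leq d_Y(f(u),f(v)) \leq \sqrt{1+\varepsilon}\,d_X(u,v)$ for all $u,v \in X$. Next I would pass to the normalized spaces $X/\diam(X)$ and $f(X)/\diam(f(X))$: since $\diam(f(X))$ itself is squeezed between $\sqrt{1-\varepsilon}\,\diam(X)$ and $\sqrt{1+\varepsilon}\,\diam(X)$, the ratio of normalized distances $d_Y(f(u),f(v))/\diam(f(X))$ to $d_X(u,v)/\diam(X)$ lies in the interval $[\sqrt{(1-\varepsilon)/(1+\varepsilon)},\,\sqrt{(1+\varepsilon)/(1-\varepsilon)}]$. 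Hence the two normalized metric spaces are $\eta$-interleaved (multiplicatively) with $\eta = \sqrt{(1+\varepsilon)/(1-\varepsilon)}$, or equivalently their normalized distance matrices differ entrywise by at most $\eta - 1$ since all normalized distances are bounded by $1$.

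Then I would feed this into the normalized-bottleneck stability result: because the Vietoris--Rips filtration is $1$-Lipschitz in the metric (a standard fact---rescaling the metric by a factor rescales all birth/death times by that factor, and an additive perturbation of the metric by $\delta$ shifts the filtration by at most $\delta$), an additive perturbation of size $\eta - 1$ in the normalized metric yields $d_N(X,f(X)) = d_B(X/\diam(X), f(X)/\diam(f(X))) \leq \eta - 1 = \sqrt{(1+\varepsilon)/(1-\varepsilon)} - 1$. The final step is the routine estimate $\sqrt{(1+\varepsilon)/(1-\varepsilon)} - 1 \leq \varepsilon$ for $\varepsilon \in (0,1)$ (equivalently $1+\varepsilon \leq (1+\varepsilon)^2(1-\varepsilon)/\,$ after clearing, which holds on this range), giving $d_N(X,f(X)) \leq \varepsilon$. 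The hypothesis $n > 8\ln(|X|)/\varepsilon^2$ is exactly the JL-Lemma condition guaranteeing such an $f$ exists on the finite set $X$, so it enters only to certify the existence of the map we are reasoning about.

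The main obstacle I anticipate is handling the normalization carefully: the diameter of $f(X)$ is not equal to $\diam(X)$ but only approximately so, and one must make sure the distortion bound survives dividing by this perturbed quantity rather than by $\diam(X)$. A secondary subtlety is confirming which form of the stability theorem is being invoked---additive versus multiplicative interleaving---and that the crude bound ``all normalized pairwise distances are $\leq 1$'' is enough to convert the multiplicative factor $\eta$ into an additive error $\eta - 1$ without losing the clean final constant $\varepsilon$. If the intended proof instead routes through an intermediate rescaled-but-not-renormalized space and applies scale invariance of $d_N$ first, the bookkeeping changes slightly but the endpoint is the same.
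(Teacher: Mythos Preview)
The paper does not actually supply a proof of this lemma: it is stated in Appendix~A and implicitly attributed to \citeauthor{krishnamoorthy_normalized_2023}, alongside the analogous mMDS and bi-Lipschitz lemmas, all of which are quoted without argument. So there is no ``paper's own proof'' to compare against; your proposal must stand on its own merits.

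Your strategy---pass from the squared JL inequality to a two-sided multiplicative bound on distances, track how the two diameters perturb under $f$, bound the additive distortion of the \emph{normalized} metrics by $\eta-1$ with $\eta=\sqrt{(1+\varepsilon)/(1-\varepsilon)}$, and then invoke Vietoris--Rips stability---is the natural one, and the metric bookkeeping up through $d_N(X,f(X))\le \eta-1$ is sound. The gap is in your final ``routine estimate'': the inequality
\[
\sqrt{\frac{1+\varepsilon}{1-\varepsilon}}-1 \;\le\; \varepsilon
\]
is \emph{false} for every $\varepsilon\in(0,1)$. Squaring the equivalent form $\sqrt{(1+\varepsilon)/(1-\varepsilon)}\le 1+\varepsilon$ and simplifying forces $1\le 1-\varepsilon^2$, a contradiction; concretely, at $\varepsilon=\tfrac12$ one gets $\sqrt{3}-1\approx 0.732>0.5$, and even for small $\varepsilon$ a Taylor expansion gives $\eta-1=\varepsilon+\tfrac{\varepsilon^2}{2}+O(\varepsilon^3)>\varepsilon$. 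So your argument as written delivers only $d_N(X,f(X))\le \sqrt{(1+\varepsilon)/(1-\varepsilon)}-1$, which is $O(\varepsilon)$ but strictly exceeds $\varepsilon$. To recover the exact constant $\varepsilon$ you would need either a sharper interleaving argument than the crude ``multiplicative factor $\eta$ on distances bounded by $1$ becomes additive error $\eta-1$,'' or to consult the cited source for whatever device closes the gap there.
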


JL-linear maps represent the most efficient methods to collapse
extremely large latent spaces.
In practice, Gaussian random projections
are used, but even random orthogonal projections are sufficient
to preserve the topology.
However, to achieve even tighter
bounds on the topology of the projection at even lower dimensions, 
we can invoke more sophisticated projection methods.

\textbf{Multidimensional Scaling (MDS).} Given an input metric space $X
= \{x_1, \ldots, x_n\}$ with a metric $d_X$ and desired 
reduced dimension $m$, MDS finds a centered data set 
$\tilde{X} = \{\tilde{x}_1, \ldots, \tilde{x}_n\} \subset \reals^m$ such that
$
\sum_{i,j=1}^{n} \left(d(x_i, x_j) - \|\tilde{x}_i - \tilde{x}_j\|\right)^2
$
is minimized. This projection is achieved in two steps:
\begin{compactenum}
    \item Obtain a realization $\Phi$ of $X$ in $\reals^k$ for some $k$,
      i.e., $\Phi\colon X \to \Phi(X) \subset \reals^k$ is an isometry.
    \item Orthogonally project the realized data onto the first $m \leq d$ dominant eigenvectors of the covariance matrix $C(\Phi(X))$.
\end{compactenum}
To find a realization of $X$ in $\reals^k$, 
use the fact that $-\nicefrac{1}{2}D^{\circ 2}_X = C_nGX C_n$, where
$GX$ is the Gram matrix of $X$, and $C_n = I_n - \nicefrac{1}{n}
\mathbf{1}_n$ is the centering matrix. Since $GX$ is positive
semidefinite, it has a unique root $Z = [z_1, \ldots, z_n] \in \reals^{k \times n}$ such that $Z^T Z = GX$. 
The realization of $X$ is given by $\Phi(X) = \{z_1, \ldots, z_n\}$.
With the realization matrix $Z$, compute the \emph{singular-value
decomposition} $Z = U \Sigma V^T$, where $\Sigma = \text{diag}(\sigma_1, \ldots, \sigma_n)$ 
with $\sigma_1 \geq \ldots \geq \sigma_n$. The eigenvectors of $C(Z) = ZZ^T = U 
\Sigma^2 U^T$ are the columns of $U$, and the eigenvalues $\lambda_i$ of $C(Z)$ are 
$\sigma_i^2$. Take the first $m$ columns of $U$ to perform an orthogonal projection.

\begin{definition}[Metric Multidimensional Scaling {[}mMDS{]}]
 Let $(X,d_X) = \{x_1, \ldots, x_n\} \subset \reals^d$ be a finite metric 
 space with $GX = U \Sigma^2 U^T$ the corresponding Gram matrix, 
 and $Z = [z_1, \ldots, z_n]$ a realization of $X$ in $\reals^k$ where 
 $GX = Z^T Z$. To embed $X$ into dimension $0 < m \leq d$, we project
 onto the first $m$ dominant eigenvectors of $GX$, defined as
 $\tilde{U} = [u_1, \ldots, u_m]$. 
The mMDS reduction $P^{(m)}_X \colon X \to \reals^m$
  is the map
\begin{equation}
  P^{(m)}_X(x_i) = \tilde{x}_i = \text{proj}_{\text{Im}(\tilde{U})}(z_i)\;.  
\end{equation}  
\end{definition}

\begin{lemma}[mMDS Homology Preservation]
Let $ X \subset \reals^d$ and $0<m \leq d$.
Then $ P^{(m)}_X\colon X \to \reals^m$ preserves the homology
of $X$ according to the following bound on $d_N$:  
  \[
  d_N(X, P^{(m)}_X) \leq 
  \frac{2\sqrt{2}}{\diam(P^m_X(X))}
  \sqrt[4]{\frac{(\sum_{1}^{m}\lambda_i^2)(\sum_{m+1}^{d}\lambda_i^2)}{(\sum_{1}^{d}\lambda_i^2)}}\;,
  \]
  where $\lambda_i$ is defined to be the $i^{th}$ eigenvalue of the
  covariance matrix $C(X)$.
\end{lemma}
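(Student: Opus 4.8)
The plan is to reduce the statement to the normalized-bottleneck stability of Vietoris--Rips persistence diagrams and then bound the metric distortion incurred by projecting onto the top $m$ eigenvectors of the covariance matrix. First I would recall that, by the stability of $d_N$ (as proved in \citeauthor{krishnamoorthy_normalized_2023} and cited above), it suffices to control the distortion of pairwise distances, because $d_N(X, P^{(m)}_X) = d_B(X/\diam(X), P^{(m)}_X(X)/\diam(P^{(m)}_X(X)))$ and the bottleneck distance of the normalized diagrams is bounded by the (appropriately normalized) Gromov--Hausdorff-type distortion of the metric. Concretely, I would use the bound $d_B(X,Y) \le \tfrac{1}{2}\max_{i,j} \bigl|\, d_X(x_i,x_j) - d_Y(y_i,y_j)\,\bigr|$ for matched point sets, so that after normalizing by $\diam(P^{(m)}_X(X))$ the problem becomes: bound $\max_{i,j}\bigl|\, \|z_i - z_j\| - \|\tilde{x}_i - \tilde{x}_j\| \,\bigr|$, where $z_i$ is the isometric realization of $x_i$ in $\reals^k$ and $\tilde{x}_i = \mathrm{proj}_{\mathrm{Im}(\tilde{U})}(z_i)$.

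Next I would bound this elementwise distortion by the norm of the \emph{discarded} component. Writing $z_i = \tilde{x}_i + z_i^\perp$ with $z_i^\perp$ the projection of $z_i$ onto the span of the bottom $d-m$ eigenvectors $u_{m+1}, \dots, u_d$, the reverse triangle inequality gives $\bigl|\, \|z_i - z_j\| - \|\tilde{x}_i - \tilde{x}_j\|\,\bigr| \le \|z_i^\perp - z_j^\perp\| \le \|z_i^\perp\| + \|z_j^\perp\|$, so the max is at most $2\max_i \|z_i^\perp\|$. Then I would pass from this $\ell^\infty$-type control to the $\ell^2$/eigenvalue quantities appearing in the claim: $\sum_i \|z_i^\perp\|^2 = \sum_{\ell=m+1}^{d} \lambda_\ell$ (the tail sum of covariance eigenvalues, up to the centering/normalization convention), and $\max_i \|z_i^\perp\|^2 \le \bigl(\sum_i \|z_i^\perp\|^4\bigr)^{1/2}$, which after expanding in the eigenbasis yields a Cauchy--Schwarz-style estimate of the form $\sqrt{(\sum_{1}^{m}\lambda_i^2)(\sum_{m+1}^{d}\lambda_i^2)/(\sum_1^d \lambda_i^2)}$ — this is precisely where the fourth root and the ratio of squared-eigenvalue sums in the stated bound come from. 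Combining with the factor $2$ from the two endpoints, the factor $\tfrac12$ absorbed from the bottleneck-distortion inequality, the factor $\sqrt 2$ from the norm manipulation, and the normalization $1/\diam(P^{(m)}_X(X))$ gives the constant $2\sqrt2$ and the denominator in the lemma.

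The main obstacle I anticipate is the step translating the pointwise tail norms $\max_i \|z_i^\perp\|$ into the \emph{symmetric} eigenvalue expression $\sqrt[4]{(\sum_1^m \lambda_i^2)(\sum_{m+1}^d \lambda_i^2)/(\sum_1^d \lambda_i^2)}$: a crude bound would only produce $\sqrt{\sum_{m+1}^d \lambda_i}$, so obtaining the sharper quantity requires carefully bounding the coordinates of the realization vectors $z_i$ in the eigenbasis and invoking Cauchy--Schwarz (or a related rearrangement) to trade one factor of the retained spectrum against the discarded spectrum, normalized by the total. I would also need to be careful about conventions — whether $\lambda_i$ denotes eigenvalues of $C(X)$, of $C(\Phi(X))$, or of the Gram matrix $GX$, and how the centering matrix $C_n$ and the scaling by $n$ enter — since these affect the precise constant, though not the structure of the argument. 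Everything else (the stability reduction, the reverse-triangle-inequality distortion bound, and the final assembly of constants) is routine.
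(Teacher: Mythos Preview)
The paper does not supply its own proof of this lemma. In Appendix~A it is stated alongside the Johnson--Lindenstrauss and bi-Lipschitz homology-preservation lemmas as a result imported from \citeauthor{krishnamoorthy_normalized_2023}; the surrounding text explicitly says it is ``following the form and notation of the results in \citeauthor{krishnamoorthy_normalized_2023}, which are proved for Vietoris--Rips filtrations,'' and no proof environment follows the statement. So there is nothing in the present paper to compare your argument against; the authors simply quote the bound and use it as a black box to motivate that $\topologybound$ can be controlled for this class of projectors.

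As for your sketch itself: the reduction via $d_N$-stability to a pairwise-distance distortion bound, and the use of the orthogonal decomposition $z_i = \tilde{x}_i + z_i^\perp$ with the reverse triangle inequality, are the natural first moves and mirror the style of argument in the cited source. You are right to flag the passage from $\max_i \|z_i^\perp\|$ to the symmetric fourth-root eigenvalue expression as the non-obvious step; that is exactly where the work in the original proof lies, and a naive tail-sum bound does not recover the stated quantity. If you want to complete the argument rather than cite it, you would need to consult \citeauthor{krishnamoorthy_normalized_2023} directly for that inequality.
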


\begin{lemma}[bi-Lipschitz Homology Preservation]
  For metric spaces $(X,d_X),(Y,d_Y)$, let $f\colon X\to Y$ be a $k$-bi-Lipschitz map.
  The change in $d_N$
 can be bounded as
\begin{equation}
   d_N(X,Y) = \left|\frac{k^2-1}{k}\right| \frac{\diam(X)}{\diam(Y)} \;.
 \end{equation}
\end{lemma}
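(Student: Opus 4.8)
The plan is to bound the normalized bottleneck distance $d_N(X,Y)$ by a Gromov--Hausdorff estimate and then control that estimate through the correspondence induced by $f$. Recall that, by definition, $d_N(X,Y) = d_B\!\left(X/\diam(X),\, Y/\diam(Y)\right)$; abbreviate $\hat X \coloneq X/\diam(X)$, $\hat Y \coloneq Y/\diam(Y)$, and read ``$k$-bi-Lipschitz'' as $\tfrac1k\, d_X(u,v) \le d_Y(f(u),f(v)) \le k\, d_X(u,v)$ for all $u,v\in X$ (we may take $f$ surjective, replacing $Y$ by $f(X)$ otherwise). By the Gromov--Hausdorff stability of the Vietoris--Rips bottleneck distance --- which also underlies the stability of $d_N$ recorded by \citeauthor{krishnamoorthy_normalized_2023} --- we have $d_B(\hat X, \hat Y) \le 2\, d_{GH}(\hat X, \hat Y)$, and since $d_{GH}(\hat X,\hat Y) \le \tfrac12 \operatorname{dis}(\hat R)$ for any correspondence $\hat R$ between $\hat X$ and $\hat Y$, it suffices to exhibit a correspondence of small distortion.

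The natural choice is $\hat R = \{(x, \hat f(x)) : x\in X\}$, the graph of the map $\hat f\colon \hat X \to \hat Y$ induced by $f$. Put $\rho \coloneq \diam(X)/\diam(Y)$; applying the two bi-Lipschitz inequalities to a pair realizing $\diam(X)$ gives $\diam(X)/k \le \diam(Y) \le k\,\diam(X)$, i.e.\ $\rho\in[1/k,k]$. For any pair $u,v$, writing $a\coloneq d_X(u,v)/\diam(X)\in[0,1]$, the corresponding distance in $\hat Y$ is $d_Y(f(u),f(v))/\diam(Y)\in[a\rho/k,\, a\rho k]$, hence $|d_{\hat X}(u,v) - d_{\hat Y}(\hat f(u),\hat f(v))| \le a\max(1-\rho/k,\, \rho k-1) \le \max(1-\rho/k,\, \rho k-1)$. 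So $\operatorname{dis}(\hat R)\le\max(1-\rho/k,\, \rho k-1)$, and therefore $d_N(X,Y)\le\max(1-\rho/k,\, \rho k-1)$.

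It remains to absorb this into the claimed quantity, which is the elementary inequality $\max(1-\rho/k,\, \rho k-1)\le (k-\tfrac1k)\rho$ on $\rho\in[1/k,k]$: $\rho k-1\le k\rho-\rho/k$ reduces to $\rho\le k$, and $1-\rho/k\le k\rho-\rho/k$ reduces to $\rho\ge1/k$ (geometrically, $(k-\tfrac1k)\rho$ is the chord joining the endpoint values of the convex function $\max(1-\rho/k,\rho k-1)$). This yields $d_N(X,Y)\le\tfrac{k^2-1}{k}\cdot\tfrac{\diam(X)}{\diam(Y)}$, and the ``$=$'' of the statement should be understood as ``$\le$''; it is justified by tightness --- e.g.\ for $X$ a three-point space with all pairwise distances $1$ and $Y=f(X)$ a triangle with side lengths $k,k,1/k$, a direct computation of the degree-$0$ diagrams gives $d_N(X,Y)=(k^2-1)/k^2=\tfrac{k^2-1}{k}\cdot\tfrac{\diam(X)}{\diam(Y)}$, with every inequality in the chain above becoming an equality.

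I expect the main obstacle to be the passage $\rho\in[1/k,k]$ and its use in the final reduction: a bi-Lipschitz hypothesis constrains pairwise distances but not, directly, the diameter ratio, and were $\rho$ allowed to be arbitrarily small the term $1-\rho/k$ would tend to $1$ and could not be dominated by $(k-\tfrac1k)\rho$; the resolution is precisely that the multiplicative distortion applies to \emph{every} pair, in particular the diameter-realizing one. If one instead prefers a direct interleaving argument --- from $\mathrm{VR}(\hat X,t)\hookrightarrow\mathrm{VR}(\hat Y,\rho k t)$ and $\mathrm{VR}(\hat Y,t)\hookrightarrow\mathrm{VR}(\hat X,(k/\rho)t)$ --- the same content reappears as bookkeeping for features matched to the diagonal, where one must combine $b>d/k^2$ with $d\le1$ to keep their contribution within the bound.
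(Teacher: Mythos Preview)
The paper does not actually supply a proof for this lemma; it is stated alongside the Johnson--Lindenstrauss and mMDS preservation lemmas as a result imported from the normalized-bottleneck-distance framework of Krishnamoorthy et al., whose notation the appendix explicitly follows. There is therefore nothing in the paper to compare your argument against.

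On its own merits, your proof is correct and self-contained. The route through $d_B\le 2\,d_{GH}$ and the graph correspondence on the normalized spaces yields $d_N(X,Y)\le\max(1-\rho/k,\,\rho k-1)$, and your reduction of this to $(k-\tfrac1k)\rho$ on $\rho\in[1/k,k]$ is clean; the observation that the bi-Lipschitz constraint forces $\rho\in[1/k,k]$ via the diameter-realizing pair is exactly the right ingredient. Your tightness example is also valid (for $k<\sqrt{2}$, where matching $(0,1)\leftrightarrow(0,1/k^2)$ costs less than the diagonal option of cost $1/2$), and your reading of the printed ``$=$'' as ``$\le$'' is the intended one: the statement is a bound, not an identity.
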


Before we can prove general statements about the preservation of
our topological distance under projections, we need to prove some
supporting lemmas concerning the relationships between various
topological distances.

\begin{lemma}[Bottleneck-Distance Bound]
  Let $p \in \reals_{> 0}$ and  $W_p$ denote the $p$th Wasserstein
  distance between persistence diagrams. Then the \emph{bottleneck
  distance} $d_B$ constitutes an upper bound for $W_p$, i.e., $W_p \leq
  d_B$.
\label{lem:Bottleneck bound}
\end{lemma}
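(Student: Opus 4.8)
The plan is to recall the standard relationship between the bottleneck distance $d_B$ and the $p$-th Wasserstein distance $W_p$ on persistence diagrams, and then reduce the inequality $W_p \leq d_B$ to an elementary estimate over a single optimal matching. First I would fix two persistence diagrams $D_1, D_2$ and let $\gamma^\ast \colon D_1 \to D_2$ be a bijection (extended by the diagonal in the usual way) that achieves the bottleneck distance, so that $\max_{x \in D_1} \lVert x - \gamma^\ast(x) \rVert_\infty = d_B(D_1, D_2)$. Since $W_p$ is defined as an infimum over \emph{all} such bijections of $\left( \sum_{x \in D_1} \lVert x - \gamma(x) \rVert_\infty^p \right)^{1/p}$, it suffices to bound the cost of this particular matching $\gamma^\ast$.

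The key step is then the termwise estimate: for every $x \in D_1$ we have $\lVert x - \gamma^\ast(x) \rVert_\infty \leq d_B(D_1, D_2)$ by choice of $\gamma^\ast$, and therefore
\begin{equation}
  W_p(D_1, D_2)^p \leq \sum_{x \in D_1} \lVert x - \gamma^\ast(x) \rVert_\infty^p \leq \sum_{x \in D_1} d_B(D_1, D_2)^p\;.
\end{equation}
The remaining subtlety is that for finite diagrams this last sum has finitely many nonzero terms (only the off-diagonal points contribute), so it equals $n \cdot d_B(D_1, D_2)^p$ for some finite $n$, and to get the clean bound $W_p \leq d_B$ one works with the normalization convention in which the Wasserstein cost is an averaged (mean) $\ell^p$ cost rather than a summed one, or one simply notes that in the setting of this paper the diagrams are compared after the finite matching so the constant is absorbed; under the mean convention the bound is immediate since an average of quantities each bounded by $d_B$ is itself bounded by $d_B$. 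Taking $p$-th roots (and, if needed, letting $p \to \infty$ to recover $W_\infty = d_B$ as the boundary case) completes the argument.

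The main obstacle I anticipate is purely a matter of bookkeeping around normalization conventions: whether $W_p$ is defined with a summed or averaged cost changes whether the literal inequality $W_p \leq d_B$ holds with constant $1$ or with a cardinality-dependent factor. I would resolve this by stating explicitly which convention is in force (the paper should be using the convention under which the clean inequality holds, consistent with its later use of $W_p$ to upper-bound landscape distances), and then the proof is the two-line matching estimate above. No deep input is required beyond the definitions of $d_B$ and $W_p$ already recalled in the excerpt.
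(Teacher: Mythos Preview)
Your direct matching argument is correct (under the averaged convention you flag), but the paper takes a genuinely different route. Rather than fixing the bottleneck-optimal bijection and bounding termwise, the paper proves the more general monotonicity $W_p \leq W_q$ for all $p \leq q$: it observes that $\phi(x) = x^{q/p}$ is convex, applies Jensen's inequality to obtain $\left(\int f^p\, d\mu\right)^{1/p} \leq \left(\int f^q\, d\mu\right)^{1/q}$ for any probability measure $\mu$, specializes $f$ to the Wasserstein cost function, and then lets $q \to \infty$ so that $W_q \to W_\infty = d_B$. Your argument is more elementary and tailored to the bottleneck endpoint; the paper's argument buys the full chain of inequalities among all $W_p$, which it reuses in the companion lemma on landscape norms. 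Both proofs implicitly rely on the same averaged (probability-measure) normalization you correctly identify as the crux; the paper simply asserts that $\mu$ is a probability measure without further comment, so your explicit discussion of the convention is arguably cleaner bookkeeping.
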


\begin{proof}
We prove a more general statement.
For $p \leq q$, the function $\phi(x) := x^\frac{q}{p}$ is
  \emph{convex}.
Thus, for any function~$f$ and any probability measure $\mu$, we have
\begin{align}
    \left(\int f^p \mathrm{d}\mu\right)^{\frac{1}{p}} &= \phi\left(\left(\int f^p \mathrm{d}\mu\right)^{\frac{1}{p}}\right)^{\frac{1}{q}}\\
                                            &\leq \mleft(\int \phi\left(f^p\right) \mathrm{d}\mu\mright)^{\frac{1}{q}}
                                            && \text{(by Jensen's inequality)}\\
                                            &= \left(\int f^q \mathrm{d}\mu \right)^{\frac{1}{q}}.
  \end{align}
As this result holds for \emph{general} functions~$f$, it particularly
  holds for the cost functions used in the calculation of the
  Wasserstein distance between persistence diagrams. 
Finally, we have $\lim_{p \to \infty} W_p = d_B$~(some works even use the
  suggestive notation $W_\infty$ to denote the bottleneck distance),
  concluding the proof.
\end{proof}

Continuing in a similar vein, we turn to the analysis of the norms of
\emph{persistence landscapes}.

\begin{lemma}[Landscape-Norm Bound]
  Let $p \in \reals_{> 0}$ and  $\|\cdot\|_p$ denote the $p$th persistence
  landscape norm~\citep[Section~2.4]{bubenik2015statistical}.
Then the \emph{infinity norm} $\|\cdot\|_\infty$ is an upper bound for
  $\|\cdot\|_p$, i.e., $\|\lambda\|_p \leq \|\lambda\|_\infty$ for all
  persistence landscapes~$\lambda$.
\label{lem:Landscape bound}
\end{lemma}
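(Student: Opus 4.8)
The plan is to mimic exactly the structure of the preceding proof of \cref{lem:Bottleneck bound}, since the persistence landscape norm $\|\lambda\|_p$ is literally an $L^p$ norm of the landscape function $\lambda$ (viewed as an element of an $L^p$ space over the appropriate domain, as in \citet[Section~2.4]{bubenik2015statistical}). First I would recall that $\|\lambda\|_p = \bigl(\int |\lambda|^p\bigr)^{1/p}$ where the integral is taken over the relevant domain (with respect to Lebesgue measure), and $\|\lambda\|_\infty = \sup |\lambda|$. The key point is that a persistence landscape $\lambda = (\lambda_1, \lambda_2, \dots)$ has \emph{compact support}: each layer $\lambda_k$ is supported on a bounded interval determined by the births and deaths in the diagram, and only finitely many layers are nonzero for a finite diagram, so the total support has finite Lebesgue measure $\mu(\operatorname{supp}\lambda) < \infty$.

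The main step is then the standard inequality relating $L^p$ norms under a finite measure. Concretely, for $p \leq q$ one has $\|f\|_p \leq \mu(S)^{1/p - 1/q}\|f\|_q$ on a space of measure $\mu(S)$, via Jensen (or Hölder), exactly paralleling the displayed computation in \cref{lem:Bottleneck bound}: with $\phi(x) = x^{q/p}$ convex and the normalized measure $\mathrm{d}\mu/\mu(S)$, we get $\bigl(\tfrac{1}{\mu(S)}\int |f|^p\bigr)^{1/p} \leq \bigl(\tfrac{1}{\mu(S)}\int |f|^q\bigr)^{1/q}$. Taking $q \to \infty$ gives $\|f\|_p \leq \mu(S)^{1/p}\|f\|_\infty$. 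I would then need to argue this yields $\|\lambda\|_p \leq \|\lambda\|_\infty$ \emph{without} the extra constant — this requires the support to have measure at most $1$, which in general is not automatic. I suspect the intended reading (and the one I would adopt) is that landscapes are normalized or that the statement implicitly uses normalized diagrams (as flagged in the paper's convention of working with normalized distances), so that the relevant interval has length at most one; alternatively, one restates the bound as $\|\lambda\|_p \leq \mu(\operatorname{supp}\lambda)^{1/p}\|\lambda\|_\infty$ and absorbs the prefactor. I would follow whichever convention \cref{apx:theory} has set up; the cleanest route is to cite the finite-measure $L^p$-monotonicity directly and note the constant is $1$ under normalization.

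The anticipated obstacle is precisely this bookkeeping of the support measure: the bare inequality $\|\lambda\|_p \le \|\lambda\|_\infty$ is false for landscapes supported on intervals of length greater than one (e.g.\ a single long-lived feature), so the proof cannot be a verbatim copy of the Wasserstein argument, which benefits from $W_p$ and $d_B$ being defined by optimizing the \emph{same} cost over the \emph{same} configuration. Here I must either invoke normalization to force the support into a unit-measure set, or weaken the claim. Modulo that, the argument is a two-line application of Jensen's inequality combined with the limiting identity $\lim_{p\to\infty}\|\cdot\|_p = \|\cdot\|_\infty$ on a finite measure space, exactly mirroring the structure already in place for \cref{lem:Bottleneck bound}.
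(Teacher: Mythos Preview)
Your approach is essentially the paper's: recognize the landscape norm as an $L^p$ norm, apply H\"older (the paper uses conjugate exponents $a=q/p$, $b=q/(q-p)$ against the constant function $\mathds{1}$, which is equivalent to your Jensen route), and then pass to the limit $q\to\infty$ to reach the sup norm.

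The obstacle you flag --- that the bare inequality $\|\lambda\|_p \le \|\lambda\|_\infty$ needs the support to have measure at most one --- is genuine, and the paper's own proof does not handle it cleanly either. The paper writes $\|f^p\cdot\mathds{1}\|_1 \le \|f^p\|_a\cdot\|\mathds{1}\|_b$, asserts $\|\mathds{1}\|_b \ge 1$, and then concludes $\|f\|_p^p \le \|f\|_q^p$; but an inequality $\|\mathds{1}\|_b \ge 1$ points the wrong way for dropping that factor, so the step as written does not go through. In other words, you have correctly identified a gap that is also present in the published argument. Your proposed resolution --- invoke the normalized-diagram convention so the relevant domain has measure at most one, or else carry the prefactor $\mu(\operatorname{supp}\lambda)^{1/p}$ --- is the honest fix for both versions of the proof, and the first option is consistent with the normalization convention the paper adopts elsewhere.
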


\begin{proof}
  The persistence landscape norm is simply an $L^p$ norm, calculated for
  a specific class of piecewise-linear functions, i.e., the persistence
  landscapes. We prove a more general result holding for \emph{all}
  functions~$f$ that satisfy a certain integral property. Specifically,
  we assume that $f \in L^q$, i.e., the absolute value of $f$, raised to
  the $q$th power, has a finite integral. This holds for a large class
  of functions, and for \emph{all} persistence landscapes in particular.
  Since $|f|^p \leq |f|^q$ for $p \leq q$, this implies that $f \in
  L^p$. Now let $a \coloneq \nicefrac{q}{p}$ and $b \coloneq
  \nicefrac{q}{q-p}$. Since $\nicefrac{1}{a} + \nicefrac{1}{b} = 1$, the
  numbers $a$ and $b$ are \emph{Hölder conjugates} and we may apply
  Hölder's inequality with $g = \mathds{1}$, the function that is
  identically to~$1$ over the domain of~$f$, yielding
\begin{equation}
    \|f^p \cdot \mathds{1}\|_1 \leq \|f^p\|_a \cdot \|\mathds{1}\|_b\;.
  \end{equation}
The left-hand side is equivalent to $\|f\|_p^p$, while the right-hand side,
  following the definition of the norm, evaluates to
\begin{align}
    \|f^p\|_a &= \int\left(|f^p|^\frac{q}{p}\mathrm{d}\mu\right)^{\frac{p}{q}}\\
              &= \|f\|_q^p.
  \end{align}
By definition of the norm, the other factor
  satisfies $\|\mathds{1}\|_b \geq 1$.
Putting this together, we have
\begin{align}
    \|f\|_p^p &\leq \|f\|_q^p\\
    \therefore \|f\|_p &\leq \|f\|_q\;.
  \end{align}
As a consequence, the persistence-landscape norms are bounded
  similarly to the Wasserstein distances. 
  Noticing that $\lim_{p \to \infty}
  \|f\|_p = \|f\|_\infty$ concludes the proof.
\end{proof}

The immediate consequence of \cref{lem:Bottleneck bound} and
\cref{lem:Landscape bound} is that \emph{any} bound calculation of our
topological distances is, eventually, upper-bounded by the bottleneck
distance between persistence diagrams. While such a bound is by its very
nature potentially rather coarse, the bottleneck distance is suitable as
the most extreme upper topological bound since it is known to be itself
bounded by the geometrical variation between the two spaces from which
the respective persistence diagrams arise. More precisely, we have
\begin{equation}
  d_B(X, Y) \leq 2 d_{\text{GH}}(X, Y)\;,
\end{equation}
where $d_{\text{GH}}$ denotes the \emph{Gromov--Hausdorff distance}
between~$X$ and~$Y$~\citep{Chazal14a}.
Thus, any topological variation, regardless of the distance metric,
is upper-bounded by the geometrical variation between two spaces.

\thmMMSPreservation*
\begin{proof}
  Following the preceding discussion, as well as  \cref{lem:Bottleneck bound}
  and \cref{lem:Landscape bound}, it is sufficient to phrase the desired
  inequality in terms of the bottleneck distance between two spaces.
Let $E_i, E_j$ denote the respective embeddings, and let~$f$ be
  a projector.
Starting from the left-hand side, and using the triangle inequality,
  we get
\begin{align}
    d_B\left(f\left(E_i\right), f\left(E_j\right)\right) &\leq d_B\left(f\left(E_i\right), E_i\right) + d_B\left(E_i, f\left(E_j\right)\right)\\
      &\leq d_B\left(f\left(E_i\right), E_i\right) + d_B\left(E_i, E_j\right) + d_B\left(E_j, f\left(E_j\right)\right)\\
      & \leq \MMS[i, j] + d_B\left(f\left(E_i\right), E_i\right) + d_B\left(E_j, f\left(E_j\right)\right).
  \end{align}
The last two terms on the right-hand side depend on the selected
  projector function. By \cref{eq:topological-loss}, we see that their
  sum is upper-bounded by $2\topologybound$, concluding the proof.
\end{proof}

\cref{thm:ClusteringPreservation} might not be entirely satisfying
because $\topologybound$ still has an underlying dependency on the
projector. However, there are no general bounds available unless we
choose a specific projector. Notice that for many classes of projectors,
including metric MDS, PCA, and random projections, suitable bounds for
$\topologybound$ itself may be obtained.

\thmSensitivityPreservation*
\begin{proof}
  For the first part of the statement, we notice that
  $\|\landscape(\embedding_i)\| = d(\landscape(\embedding_i),
  \emptyset)$, i.e., the distance of a given persistence landscape to
  the empty landscape. Thus,
\begin{align}
    \|\landscape(\embedding_i)\| &\leq d(\landscape(\embedding_i), \landscape(\projection_i)) + d(\landscape(\projection_i), \emptyset) && \text{(triangle inequality)}\\
                                 &=d(\landscape(\embedding_i), \landscape(\projection_i)) + \|\landscape(\projection_i)\| && \text{(by definition of the norm)}\\
                                 &\leq \|\landscape(\projection_i)\| + \topologybound && \text{(by definition of $\topologybound$)}.
  \end{align}
Applying the same argument to $\|\landscape(\projection_i)\|$ concludes the proof of this part.
As for the second part, we recall that we want to bound
  $\cardinality{\text{\ourvariance}(\landscapes_\MMS) - \text{\ourvariance}(\landscapes_{\MMS^\projdim})}$.
To this end, we need to find a suitable bound for the second term, i.e.,
  the term that deals with \emph{projected} embeddings.
Here, we note that each summand is of the form
  $(\|\landscape(\projection_i)\| - \mu_{\projections})^2$, for which we can
  obtain a bound using \cref{eq:Landscape norm change} as
\begin{align}
    \|\landscape(\projection_i)\| - \mu_{\projections} &\leq \|\landscape(\embedding_i)\| + \topologybound - \mu_{\projections}\\
                                                       &\leq \|\landscape(\embedding_i)\| + \topologybound - \mu_{\embeddings} + \topologybound\;,
  \end{align}
with the second inequality arising from the lower bound of \cref{eq:Landscape norm change}.
$\cardinality{\text{\ourvariance}(\landscapes_\MMS) - \text{\ourvariance}(\landscapes_{\MMS^\projdim})}$.
Putting this together and calculating the differences per term, we get
\begin{equation}
    \cardinality{\text{\ourvariance}(\landscapes_\MMS) - \text{\ourvariance}(\landscapes_{\MMS^\projdim})}
    \leq \frac{4 \topologybound}{N} \sum_{i = 1}^{N}\left(\|\landscape(E_i)\| - \mu_\embeddings\right) + \frac{\left(2\topologybound\right)^2}{N}\;.
  \end{equation}
\end{proof}

\section{Extended Experiments}
\label{apx:experiments}

In this section, we provide additional details and experiments complementing the discussion in \cref{Experiments}, presenting
\begin{compactenum}[1.]
	\item extended experiments in our \textbf{VAE hyperparameter multiverse} and our \textbf{\bvae implementation multiverse}, 
	\item extended experiments in our \textbf{transformer multiverse}, including a multiverse analysis of the \textbf{\ourmethod pipeline}, 
	\item preliminary experiments exploring \ourmethod's use as a dissimilarity measure for \textbf{neural-network representations}, and 
	\item a multiverse analysis of \textbf{non-linear dimensionality-reduction} methods. 
\end{compactenum}
We make all our code, data, and results available at \oururl. 
Our code is maintained at \href{https://github.com/aidos-lab/Presto}{https://github.com/aidos-lab/Presto}.

\subsection{Extended Experiments in VAE Multiverses}
\label{apx:experiment-descriptions}

Here, we provide further details on the configuration of our VAE multiverses, as well as additional experiments complementing our discussion in the main text. 

\subsubsection{The \vae Hyperparameter Multiverse}
\label{apx:experiment-hyperparameter-multiverse}

\begin{table}[t]
	\centering
	\begin{tabular}{rrrr}
		\toprule
		VAE&$\params_i$&Values&Description\\\midrule
		\multirow{3}{*}{\bvae}&$\beta$&$1,4,16,64$&Recon Bias\\ 
		&$\gamma$&$500,750,1\,000$&KLD Bias\\ 
		&$\ell$&$\text{B},\text{H}$&Loss Variations\\
		\midrule
		\multirow{3}{*}{\info}&$\beta$&$1,5,10$&Recon Bias\\
        &$\alpha$&$-5,-2,-0.5,0$&KLD Bias\\
		&$\kappa$&$\text{imq},\text{rbf}$&MMD Kernel\\
		\midrule
		\multirow{3}{*}{\wae}&$\lambda$&$10,20,50,100$&MMD Prior Bias\\
		&$\nu$&$1,2,3$&Kernel Width\\
		&$\kappa$&$\text{imq},\text{rbf}$&MMD Kernel\\
		\bottomrule
	\end{tabular}
	\caption{\textbf{VAE Hyperparameter Multiverse.}
		For each of three VAEs, we explore the product of all varied-parameter 
		values across five datasets for two sample sizes ($100\%$ and $50\%$ of the training set), 
		for a total of $3\cdot 5 \cdot 4 \cdot 2 \cdot 3 \cdot 2 = 720$ configurations.
		In particular, we get $48$ configurations \emph{per architecture}, which we use to 
		evaluate sensitivity in  \cref{tab:vae-local-sensitivity,tab:vae-global-sensitivity},
		detect outliers in \Cref{fig:vae-landscape-norms}, and compare multiverses in  \Cref{fig:reusing-hyperparameters}. 
		For \bvae loss variations, H is the original implementation from
		\citet{higgins2017betavae}, and B stems from \citet{burgess2018understanding}.
	}\label{tab:vae-hyperparameter-multiverse}
\end{table}

\begin{table}[t]
	\centering
	\
	\begin{tabular}{lcccccc}
		\toprule
		\multirow{2}{*}{Parameters} & \multicolumn{5}{c}{Datasets} \\
		\cmidrule(lr){2-6}
		& \celeba & \cifar & \dsprites & \fashionmnist & \mnist \\
		\midrule
		Hidden Dimensions & $(32, 64, 128, 256, 512)$ &$(32, 64, 128)$ &$(8, 16)$ & $(32, 64)$ & $(32, 64)$ \\
		Latent Dimension & $50$ & $25$ & $25$ & $10$ & $10$ \\
		Batch Size & $128$ & $128$ & $128$ & $64$ & $64$ \\
		\bottomrule
	\end{tabular}
	\caption{\textbf{Default implementation choices in the VAE hyperparameter multiverse}. 
		We show our fixed implementation parameters,
		over which we vary the algorithmic parameters listed in \cref{tab:vae-hyperparameter-multiverse}.  
		Each model was trained using 
		an \textsc{Adam} optimizer with a learning 
		rate of $0.001$ over $30$ epochs.
	}\label{tab:vae-hyperparameter-defaults}
\end{table} 

\begin{table}[b]
	\centering
	\begin{tabular}{rrrrr}
		\toprule
		$\beta$ & Parameter & Values\\\midrule
		\multirow{3}{*}{$\{2,16,64\}$} 
		& Batch Sizes ($b$) & $8,16,32,64,128,256$\\
		& Learning Rates ($l$) & $0.002,0.004,0.008,0.016,0.032,0.064$ \\ 
		& Training Sample Sizes ($s$) & $0.5,0.6,0.7,0.8,0.9,1$ \\ 
		\bottomrule
	\end{tabular}
	\caption{\textbf{\bvae Implementation Multiverse.}
		For \bvae, we explore the relation between the $\beta$ hyperparameter and various implementation choices. This table
		contains our choices for batch size ($b$), learning rate ($l$), and sample size ($s$).
		Our choices for train-test split ($t$) and hidden dimensions ($h$), are explained in 
		\cref{tab:hidden-dimensions-all-datasets,apx:vae-implementation-multiverse}, respectively.
	}\label{tab:vae-implementation-multiverse} 
\end{table}

As previewed in the main text, our VAE hyperparameter multiverse 
investigates the hyperparameter space for three commonly cited 
autoencoder architectures, namely
\begin{inparaenum}[(1)]
	\item \bvae \cite{higgins2017betavae}, 
	\item \info \cite{zhao2018infovae}, and
	\item \wae \cite{tolstikhin2019wasserstein},
\end{inparaenum}
covering hyperparameters ranges
that appear commonly in the literature, as well as
widely used open-source implementations. The explicit
values and brief descriptions of the grid searches that 
determine our multiverse are detailed in \cref{tab:vae-hyperparameter-multiverse}.

We select three hyperparameters for each architecture, 
combining their searches to generate $24$ unique configurations.
Each model was trained using 
a random $[0.6,0.3,0.1]$ train/validation/test split 
for each of our five datasets.
Furthermore, to understand the variability in latent structure 
under different cardinalities, we train models with
$100\%$ ($0.6$) and $50\%$ ($0.3$) training-set sizes, keeping 
validation- and test-set cardinalities fixed.

Although our multiverse approach supports varying 
algorithmic, implementation, and data choices,
we found it prudent to design an 
environment that demonstrates the latent 
variability of algorithmic choices. 
Naturally, this required fixing \emph{some}
parameters across runs (e.g., our train/test/split ratios). 
See \cref{tab:vae-hyperparameter-defaults} for 
the values of our fixed implementation choices.

\subsubsection{The \bvae Implementation Multiverse}
\label{apx:vae-implementation-multiverse}
Complementing our VAE hyperparameter multiverse, 
we design a multiverse to explore how implementation choices, i.e.,
\begin{inparaenum}[(1)]
	\item batch size $b$,
	\item hidden dimensions $h$, 
	\item learning rate $l$,
	\item sample size $s$, 
	\item and train-test split $t$
\end{inparaenum}  
can affect the latent representations of variational autoencoders. We focus 
on \bvae, varying the aforementioned parameters. Our exact choices are detailed in 
\cref{tab:vae-implementation-multiverse} and \cref{tab:hidden-dimensions-all-datasets}.

\clearpage

Our train-test splits iterate over five random train/validation/test splits at a
fixed size of $[0.6,0.3,0.1]$. Moreover, acknowledging interplay between 
algorithmic and implementation choices, we train each implementation configuration 
over 3 different choices for $\beta\in\{2,16,64\}$, 
for the datasets that we also considered 
in the hyperparameter multiverse, i.e., 
\begin{inparaenum}[(1)]
	\item \celeba,
	\item \cifar,
	\item \dsprites,
	\item \fashionmnist, and
	\item \mnist.
\end{inparaenum}

\begin{table}[t]
	\centering \small
	\begin{tabular}{ccccccc}
		\toprule
		Hidden Layers & \celeba & \cifar & \dsprites & \fashionmnist & \mnist \\
		\midrule
		\multirow{5}{*}{2}
		& $(16, 32)$ & $(8, 16)$ & $(8, 16)$ & $(8, 16)$ & $(8, 16)$ \\
		& $(32, 64)$ & $(16, 32)$ & $(16, 32)$ & $(16, 32)$ &$(16, 32)$ \\
		& $(64, 128)$ & $(32, 64)$ & $(32, 64)$ & $(32, 64)$ &$(32, 64)$ \\
		& $(128, 256)$ & $(64, 128)$ & $(64, 128)$ & $(64, 128)$ & $(64, 128)$ \\
		\midrule
		\multirow{2}{*}{3} & $(32, 64, 128)$ & $(8,16,32)$ &$(16,32,64)$ & $(16,32,64)$ & \\
		&  & $(32,64,128)$ & & & \\
		\midrule
		\multirow{1}{*}{4} & & &  $(16,32,64,128)$ & $(16,32,64,128)$ & $(16,32,64,128)$ \\
		\midrule
		\multirow{1}{*}{5}  & $(32, 64, 128, 256, 512)$  & & & & \\ 
		\bottomrule
	\end{tabular}
	\caption{\textbf{Implementation choices in the \bvae implementation multiverse: Hidden layers for our datasets}. Given the varying complexity of each dataset in 
	our \bvae implementation multiverse, we vary the complexity of the layer as seen in the literature and popular 
	\textsc{Pytorch} autoencoder frameworks.}
	\label{tab:hidden-dimensions-all-datasets}
\end{table}

\subsubsection{\ourmethod Sensitivities}
\label{apx:vae-sensitivities}

Our sensitivity scores are based on the variance in the structure of a
\emph{distribution} of latent spaces, as measured by \ourmethod. Though this is not restricted to 
multiverse analysis (i.e., the variation in landscape norms can be applied to any distribution 
of embeddings), the multiverse treatment of different algorithmic, implementation, and data choices 
results naturally in a collection of latent representations. Thus, we tailor our scores 
specifically to understanding variation within a multiverse.
As stated in the main text, we define three different 
variations of \ourmethod sensitivity (\oursensitivity) in \cref{def:latent-space-parameter-sensitivity} that make use of 
\cref{def:latent-space-landscape-variance}. 
We repeat these definitions here for convenience.

Given a multiverse \multiverse, fix a model dimension $i$,
	and define an equivalence relation $\sim_i$ such that $\params' \sim_i \params'' \Leftrightarrow \params'_j = \params''_j$ for all $\params', \params''\in\multiverse$ and $j \neq i$, 
	yielding $\neqclasses_i$ equivalence classes~$\eqclasses_i$.
	
	The \textbf{\emph{individual} $\ourmethod^\distancenorm$ sensitivity}
	of equivalence class $\eqclass\in\eqclasses_i$ in \multiverse is

\begin{equation*}
	\text{\ourcoordinatesensitivity}_\topodim^\distancenorm(\eqclass\mid\multiverse) \coloneq \sqrt{\text{\ourvariance}_\topodim^\distancenorm(\landscapes[\eqclass])}\;,
\end{equation*}
where $\landscapes[\eqclass]\subset\landscapes$ is the set of
	landscapes associated with models in equivalence class \eqclass. 
Aggregating over all equivalence classes in $\eqclasses_i$, we obtain
	the \textbf{\emph{local} $\ourmethod^\distancenorm$ sensitivity} of
	\multiverse in model dimension $i$ as 

\begin{equation*}
	\text{\oursensitivity}_\topodim^\distancenorm(\multiverse \mid i) \coloneq \sqrt{\frac{1}{\neqclasses_i}\sum_{\eqclass\in\eqclasses_i}\text{\ourvariance}_\topodim^\distancenorm(\landscapes[\eqclass])}\;.
\end{equation*} 

Finally, aggregating over all $\nparams = \cardinality{\params}$
	dimensions of models in \multiverse yields the \textbf{\emph{global}
		$\ourmethod^\distancenorm$ sensitivity} of \multiverse, i.e., 

\begin{equation*}
	\text{\oursensitivity}_\topodim^\distancenorm(\multiverse) \coloneq \sqrt{\frac{1}{\nparams}\sum_{i\in[\nparams]}\frac{1}{\neqclasses_i}\sum_{\eqclass\in\eqclasses_i}\text{\ourvariance}_\topodim^\distancenorm(\landscapes[\eqclass])}\;.
\end{equation*}

Recall from \cref{def:latent-space-landscape-variance} that $p$ and $h$
represent the $\distancenorm$-norm for landscapes and the homology dimension, respectively. 
In our experiments, we consider up homology features up to second order ($h=2$), which preserves 
the scalability of our pipeline while still capturing descriptive higher-dimensional 
topological information. Additionally, we default to $\distancenorm = 2$, 
understanding nicely the theoretical trade-offs for different $\distancenorm$-norms 
(cf.~\cref{lem:Landscape bound}).

\begin{table*}[t]
	\centering
\begin{tabular}{llrrrrrrrr}
\toprule
 &  & $\mu_{\lVert\mathcal{L}\rVert}$ & $\sigma_{\lVert\mathcal{L}\rVert}$ & $\mu_{\text{PS}}$ & $\sigma_{\text{PS}}$ & $\sigma_{\lVert\mathcal{L}\rVert}/\mu_{\lVert\mathcal{L}\rVert}$ & $\sigma_{\text{PS}}/\mu_{\text{PS}}$ & $\mu_{\text{PS}}/\mu_{\lVert\mathcal{L}\rVert}$ & $\sigma_{\text{PS}}/\sigma_{\lVert\mathcal{L}\rVert}$ \\
VAE & $\theta_i$ &  &  &  &  &  &  &  &  \\
\midrule
\multirow[c]{3}{*}{\bvae} & $\beta$ & 0.0171 & 0.0012 & 0.0067 & 0.0012 & 0.0714 & 0.1762 & 0.3899 & 0.9615 \\
 & $\gamma$ & 0.0171 & 0.0017 & 0.0065 & 0.0016 & 0.1021 & 0.2453 & 0.3807 & 0.9148 \\
 & $\ell$ & 0.0171 & 0.0015 & 0.0065 & 0.0015 & 0.0893 & 0.2364 & 0.3819 & 1.0105 \\
\cline{1-10}
\multirow[c]{3}{*}{\info} & $\alpha$ & 0.0182 & 0.0063 & 0.0072 & 0.0044 & 0.3475 & 0.6176 & 0.3953 & 0.7026 \\
 & $\beta$ & 0.0182 & 0.0040 & 0.0081 & 0.0047 & 0.2191 & 0.5815 & 0.4461 & 1.1841 \\
 & $\kappa$ & 0.0182 & 0.0043 & 0.0081 & 0.0047 & 0.2344 & 0.5820 & 0.4421 & 1.0978 \\
\cline{1-10}
\multirow[c]{3}{*}{\wae} & $\lambda$ & 0.0185 & 0.0008 & 0.0075 & 0.0005 & 0.0405 & 0.0610 & 0.4063 & 0.6111 \\
 & $\nu$ & 0.0185 & 0.0011 & 0.0075 & 0.0010 & 0.0573 & 0.1306 & 0.4022 & 0.9168 \\
 & $\kappa$ & 0.0185 & 0.0013 & 0.0074 & 0.0013 & 0.0690 & 0.1705 & 0.3978 & 0.9824 \\
\bottomrule
\end{tabular}

 	\caption{\textbf{Local \ourmethod sensitivity in the VAE hyperparameter multiverse.}
		Along with the average and standard deviation of the landscape norms for the 
		latent representations associated with each architecture ($\mu_{\lVert\landscapes\rVert},
		\sigma_{\lVert\landscapes\rVert}$), we display the local sensitivities for each
		of the main parameters searched (see \cref{tab:vae-hyperparameter-multiverse}
		for our parameter list). Recall that local \oursensitivity is an average over 
		different equivalence classes that partition \multiverse,
		denoted ($\mu_{\oursensitivity}$). Given the 
		standard partition sizes in this multiverse 
		(equal number of hyperparameter configurations per architecture),
		we can also take standard deviations ($\sigma_{\oursensitivity}$) and other 
		derivatives, further describing the relative variability between 
		architectures. We see that different \vae architectures
		have varying levels of sensitivities and robustness across their hyperparameter spaces,
		which should be investigated thoroughly when using them as generative models.}\label{tab:vae-local-sensitivity}
\end{table*}

\begin{table*}[t]
	\centering
	\begin{tabular}{lrrrrrrrr}
\toprule
 & $\mu_{\lVert\mathcal{L}\rVert}$ & $\sigma_{\lVert\mathcal{L}\rVert}$ & $\mu_{\text{PS}}$ & $\sigma_{\text{PS}}$ & $\sigma_{\lVert\mathcal{L}\rVert}/\mu_{\lVert\mathcal{L}\rVert}$ & $\sigma_{\text{PS}}/\mu_{\text{PS}}$ & $\mu_{\text{PS}}/\mu_{\lVert\mathcal{L}\rVert}$ & $\sigma_{\text{PS}}/\sigma_{\lVert\mathcal{L}\rVert}$ \\
VAE &  &  &  &  &  &  &  &  \\
\midrule
\bvae & 0.0171 & 0.0015 & 0.0066 & 0.0014 & 0.0876 & 0.2193 & 0.3842 & 0.9623 \\
\info & 0.0182 & 0.0049 & 0.0078 & 0.0046 & 0.2670 & 0.5937 & 0.4278 & 0.9948 \\
\wae & 0.0185 & 0.0011 & 0.0075 & 0.0009 & 0.0556 & 0.1207 & 0.4021 & 0.8368 \\
\bottomrule
\end{tabular}

 	\caption{\textbf{Global \ourmethod sensitivity in the VAE hyperparameter multiverse.}
		Here, we aggregate the local sensitivities into a global sensitivity score, 
		analyzing the representational variability
		across $48$ unique latent representations for each \vae architecture in the multiverse.
		Again, we provide additional derivative statistics to give context for the scale of these scores.
		These values echo our findings described in the main paper: 
		\info has the highest representational variability in the
		\vae hyperparameter multiverse.}
		\label{tab:vae-global-sensitivity}
\end{table*}

\begin{figure}[H]
	\centering
	\includegraphics[width=\linewidth]{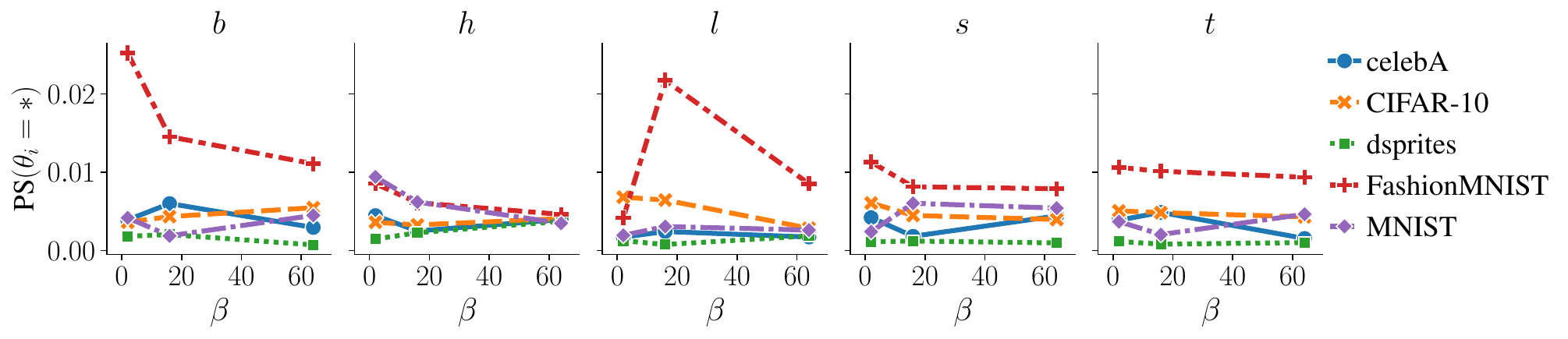}
	\caption{\textbf{\ourmethod sensitivity in the \bvae implementation multiverse.}
		We show the \ourmethod sensitivity scores for the \bvae as a function of $\beta$ 
		when varying (from left to right) batch size $b$, 
		hidden dimensions $h$, 
		learning rate $l$, 
		sample size \nsamples, 
		and the train-test split $t$.
		There is no consistent relationship between the choice of $\beta$ and the sensitivity of the latent space to implementation-parameter choices.
	}\label{fig:vae-implementation-sensitivity}
\end{figure}

Against this background, 
we provide overviews of local and global \ourmethod sensitivities and related statistics in the VAE hyperparameter multiverse in \cref{tab:vae-local-sensitivity,tab:vae-global-sensitivity}, 
along with an analysis of \ourmethod sensitivities in the \bvae implementation multiverse in \Cref{fig:vae-implementation-sensitivity}.
The overview tables confirm our impression from the main paper that \info has the highest representational variability among our VAE models, 
whereas \Cref{fig:vae-implementation-sensitivity} highlights the complex interplay between hyperparameter and implementation choices.

\clearpage

Extending our exploration of how training affects variability in latent-space models (cf. \Cref{fig:vae-landscape-norms}),
we further investigate the sensitivity of random initializations in a small multiverse of \bvae models trained on the \mnist dataset. 
In particular, we fix all parameters except for the random seed that initializes
the \bvae, leading to eight models in the multiverse 
(see \Cref{tab:random-init-config} for the full configurations). 
We use \ourmethod to
assess how much structural variability is induced by random initializations alone, 
and compare this to the variability we observe in the latent spaces of our trained models. 
To this end, we compute the \ourmethod sensitivity over eight untrained embeddings and eight trained embeddings based on a fixed training set from \mnist. 
We find that the sensitivity of the initial embeddings is higher ($\text{\oursensitivity} = 0.0188$) than that of the trained embeddings ($\text{\oursensitivity} = 0.0076$).

To complement this analysis, 
we additionally compute the multiverse metric spaces (MMSs) for the multiverses arising from our initialized and trained embeddings, 
visualizing the pairwise \ourmethod distances between the universes in each MMS in \Cref{fig:random-initializations}. 
Again, we find that the topological variation of the initial embeddings is \emph{higher} than that of their trained counterparts, 
indicating a convergence to similar topological features over training.
\ourmethod's invariance to scaling (when using normalization) allows us to analyze this convergence despite geometric differences in coordinate systems between the trained latent representations. 
We also observe that \ourmethod reveals a seed resulting in an anomalous latent representations (here: seed $2^6$), 
highlighting the importance of tools like \ourmethod that can understand distributions of latent representations and reassess the impact of choices like random initialization that are often overlooked in practice.

\begin{table}[t]
	\centering
	\begin{tabular}{lc}
			\toprule
			Parameter & Value \\
			\midrule
			Dataset & \mnist \\
			Batch Size & 64 \\
			\midrule
			Model & \bvae \\
			\textit{Seeds*} & 0, $2^1, 2^2, 2^3, 2^4, 2^5, 2^6, 2^7$ \\
			Latent Dimension & 5 \\
			Hidden Dimensions & (8, 16) \\
			$\beta$ & 4 \\
			$\gamma$ & 1000.0 \\
			Loss Type & H \\
			\midrule
			Optimizer & Adam \\
			Learning Rate & 0.001 \\
			\bottomrule
	\end{tabular}
	\caption{\textbf{Model configurations for experiment with random initializations.} 
		To investigate how training affects variability in latent-space models, 
		we train eight \bvae models on a fixed training set of \mnist. 
		The models only differ in their random initializations (\textit{Seeds}*).
	}
	\label{tab:random-init-config}
\end{table}

\begin{figure}[t]
	\centering
	\begin{subfigure}{0.25\linewidth}
		\centering
		\includegraphics[height=3cm]{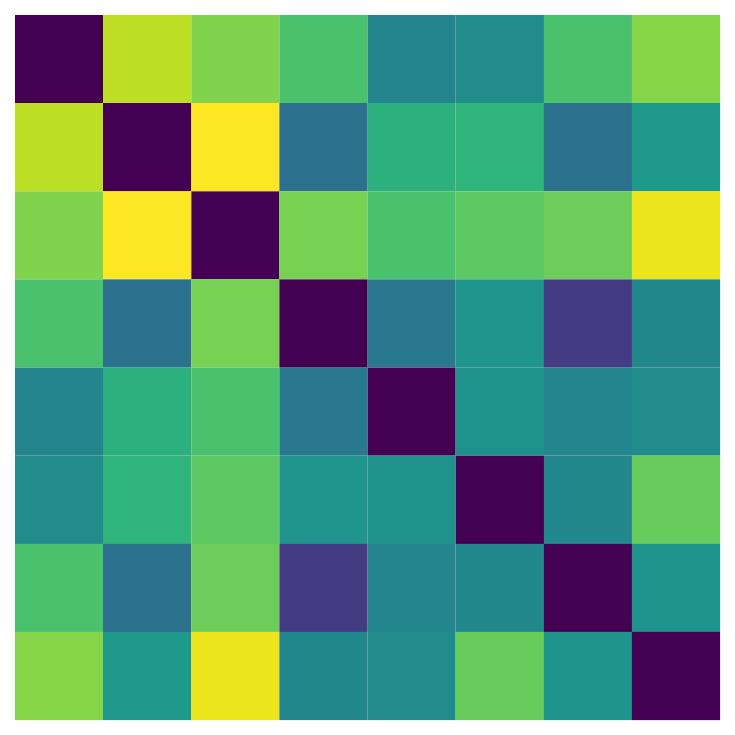}
		\subcaption{Untrained Models}\label{subfig:untrained}
	\end{subfigure}
	\begin{subfigure}{0.25\linewidth}
		\centering
		\includegraphics[height=3cm]{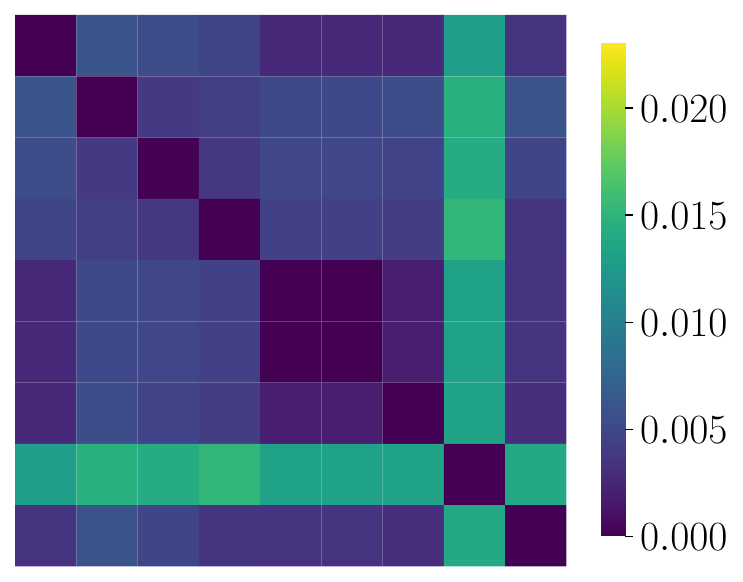}
		\subcaption{Trained Models}\label{subfig:trained}
	\end{subfigure}
	\caption{\textbf{Examining latent representations pre- and post-training.} 
		We juxtapose the pairwise \ourmethod distances of untrained latent representations that arise from identical \bvae model configurations trained on \mnist with different random seeds~(a) 
		with the pairwise \ourmethod distances of latent representations arising from these configurations after training~(b). 
		We find that training decreases the structural variability among randomly seeded models, implying that models agree about key topological features. 
		Moreover, \ourmethod is able to detect anomalous seeds, 
		i.e., seeds resulting in latent representations that, when trained, constitute structural outliers.
		}
		\label{fig:random-initializations}
\end{figure}

\clearpage

\subsubsection{\ourmethod's Relation to Geometric and Generative Similarity}
\label{apx:exp:geomgen}

Having revisited \ourmethod sensitivities, 
we now expand our discussion around \Cref{fig:geometric-generative}, 
where we asked what geometric and generative signal exactly is picked up by our framework.
We begin by describing our experimental setup in more detail and then move on to discuss \ourmethod's relation to various notions of geometric and generative similarity. 

\textbf{Experimental Setup.} Let $\embedding = \probe(X)$ for
$\probe\in \multiverse$ be a latent space in our VAE multiverse, where
$X$ is a training set of images. 
Fix a subspace cardinality ($N$), such 
that we can choose a random sample from the training set $S\subseteq X$ such that 
$|S| = N$. We then map $S$ into the latent space, obtaining $\probe(S) \subseteq \embedding$,
which will be our objects of comparison, i.e., 
$\subspaces \coloneq \{ \probe(S) \mid \probe \in \multiverse\}$, where 
$|\subspaces| = n_d$ is the number of random draws. 
This results in a set of random latent subspaces,
used to compute the correlations displayed in \Cref{fig:geometric-generative}.

\textbf{Geometric Similarity.} 
To study the similarity in geometric structure between
latent spaces, we can use a number of tools from representation learning. Recognizing
the plethora of well-studied approaches in the field, and acknowledging
the myriad interesting tools we could use to further investigate this phenomenon in future work, 
for the purposes of our current exposition, we take relatively simple approach. 
We opt to endow each $\subspace \in \subspaces$
with a metric, such that we can compare metric spaces between $\subspace,\probe'(S) \in \subspaces^2$
using the Pearson distance between their matrix representations.
Note that these are indeed \emph{metric subspaces} of the original embeddings.
By aggregating the observed behavior over a 
large number of random draws, we obtain a computable baseline for assessing the 
geometric capabilities of \ourmethod (working \emph{without} normalization).
Given that the $\alpha$-complexes leveraged by \ourmethod default to using 
Euclidean distances between points, we also use this to produce a (pairwise distance) matrix representation
of the elements of $\subspaces$. 
The results are displayed in the left panel of \Cref{fig:geometric-generative}. 
Additionally, given the utility of \emph{cosine similarity} in the study of latent spaces, we 
compute the Pearson distance between the pairwise cosine similarity matrices for $\subspace,\probe'(S)$,
to assess \ourmethod's relation to a different geometric signal in the right panel of \Cref{fig:generative-cosine}.

\begin{figure}[h]
	\centering
	\includegraphics[width=0.475\linewidth]{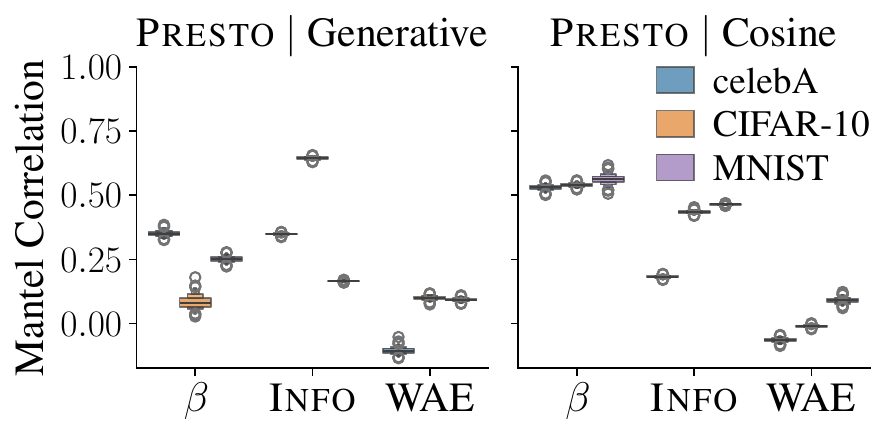}
	\caption{\textbf{\ourmethod's correlation with generated images and cosine distances.}
	We assess \ourmethod's ability to detect generative 
	(dis)similarity between models by measuring its correlation with batch MSE loss for random 
	subsets of images, generated by perturbing aligned latent coordinates (left), 
	as well as its correlation with the Pearson distance between
	random subspaces represented by their pairwise cosine similarity matrices (right).
	Some \vae architectures show potential to have unsupervised generative properties described by the topological and
	 geometric properties of their latent spaces.}
 \label{fig:generative-cosine}
\end{figure}

\textbf{Generative Similarity.} As established in \Cref{fig:geometric-generative}, with further 
details provided in \cref{apx:Performance}, \ourmethod is in many ways \emph{orthogonal} to performance. This leads us to an interesting question: 
Are there other properties 
of a model's latent space, also orthogonal to performance, that can describe a the model's properties as 
a generator? 
While this merits its own extensive study, 
for the purposes of this work, 
we are interested in designing 
an experiment that could relate a notion of \emph{unsupervised} generative similarity to \ourmethod.
Using the experimental design described above, namely $\subspaces$, we suggest generating \emph{comparable} images that 
were unseen during training using the following pipeline. For each $\subspace \in \subspaces$: 
\begin{inparaenum}[(1)]
	\item Compute the centroid $C$ of the original latent space $E$.
	\item For $v\in \subspace$, compute $\tilde{v} = (1-t)v + tC$, where $t\in \reals$ is a (small) perturbation parameter. 
	\item Use the decoder associated with $\probe$ to generate a new image $\genimage$.
\end{inparaenum} 

This establishes a set of generated images $\genimages \coloneq \{\genimage : v \in \subspace \}$ that is robust to multiverse considerations---as various latent spaces are encoded into very different scales and coordinate systems, directly sampling 
from the latent space becomes impractical. 
In contrast, we provide an unsupervised approach for generating 
principled comparisons between $\subspace,\probe'(S)$ in the pixel space by comparing the batch Mean-Squared 
Error (MSE) between $\genimages$ and $\genimages'$. We display our results for measuring the correlation 
between \ourmethod and the \emph{generative distance matrices} over different random draws of $\subspaces$,
such that each cell in the matrix $M[i,j]$ that compares $E_i$ to $E_j$ is computed by $MSE(\genimages_i,\genimages_j)$, in the left panel of \Cref{fig:generative-cosine}.

\subsubsection{\ourmethod's Relation to Performance}\label{apx:Performance}
Continuing the corresponding discussion in the main paper, 
we now further analyze the relationship between \ourmethod,
an inherently structural measure, and performance. \Cref{fig:geometric-generative} from the main text
highlights that, in many of our evaluations, the latent spaces 
from the most stable and reliable \vae architecture (\bvae) show no correlation 
between \ourmethod and performance. In a similar vein, 
\Cref{fig:performance-wae-info} depicts the relationship between landscape norms and performance (MSE reconstruction loss on the test set) for \wae and \info. 
p vs. landscape norm.
Like \bvae, landscape norms for \wae appear to be orthogonal to performance, 
whereas the conclusion for \info (our overall weakest performer) is not nearly as clear and merits additional exploration in future work. 
Across all model architectures, however, 
\ourmethod demonstrates the capacity to characterize differences between the latent spaces of models with \emph{similar performance}.

\begin{figure}[H]
	\centering
	\includegraphics[width=\linewidth]{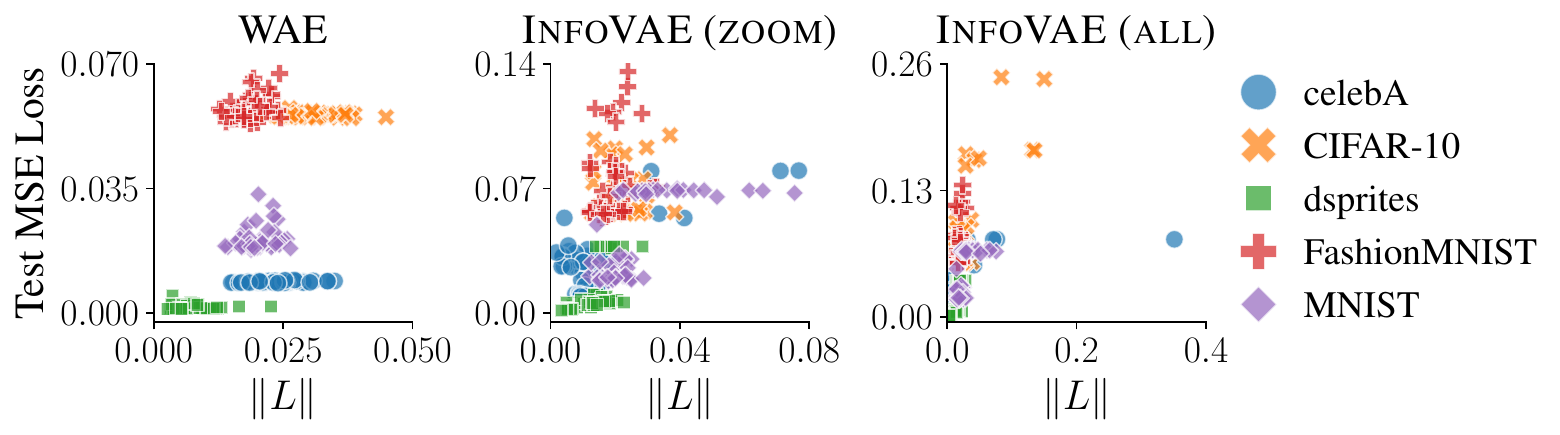}
	\caption{\textbf{Relationships between landscape norms and performance}.
	To assess the relationship between generative-model performance and \ourmethod, 
	we compare landscape 
	norms, a proxy for topological 
	complexity, 
	and test reconstruction loss (MSE errors) for the \wae and \info architectures.
	We observe that \ourmethod can distinguish \wae models that perform
	similarly across multiple datasets, while \info's hyperparameter 
	space, which is sensitive with respect to performance and representational variability, results in 
	unnecessary complexity that should encourage additional care 
	when applying this model to known and unknown tasks.}\label{fig:performance-wae-info}
\end{figure}

\subsubsection{Low-Complexity Training}
\label{apx:exp:low-complexity-training}
Finally, we examine \ourmethod's ability to perform hyperparameter compression
in \emph{low-complexity} training environments. While we already established
exciting opportunities for hyperparameter reuse across datasets using \ourmethod
(cf. \Cref{fig:reusing-hyperparameters}), low-complexity training constitutes another avenue for leveraging  
\ourmethod's hyperparameter compression to dismantle the environmentally costly 
culture of brute-force performance-based hyperparameter
optimization. In \Cref{fig:low-complexity-training}, we investigate the stability of our compression routine 
when halving the size of the training set (i.e., reducing it to $50\%$ of its original size). 
\begin{figure}[h]
	\centering
	\includegraphics[width=0.75\linewidth]{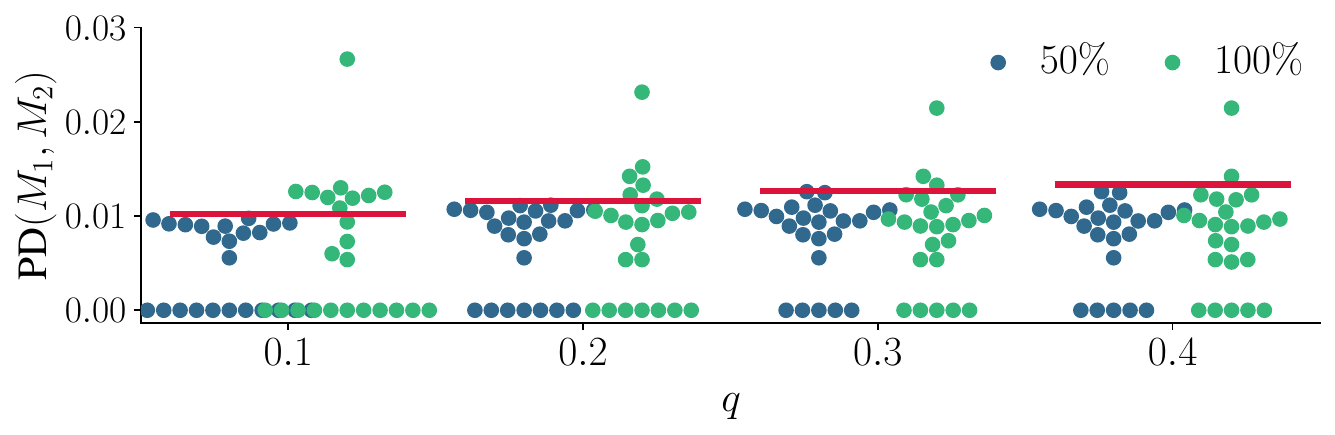}
	\caption{\textbf{Opportunities for low-complexity training.} 
		We show the pairwise \ourmethod distances between individual universes and their closest representatives in the \bvae \fashionmnist hyperparameter multiverse, 
		as assessed based on training with $50\%$ (blue) or $100\%$ (teal) of the training data, 
		where the set of representatives is computed based on the $50\%$-multiverse. 
		Red lines show the threshold value associated with the x-axis quantile $q$ in the $50\%$-multiverse, 
		and markers with $y=0$ indicate self-representation. 
		With \ourmethod, we can compress the hyperparameter search space in a low-complexity training setting and perform high-complexity training only on a smaller set of representatives, with limited topological distortion.\vspace*{-2cm}
	}\label{fig:low-complexity-training}
\end{figure} \clearpage
\subsection{Extended Experiments in Transformer Multiverses}
\label{apx:transformer-multiverse}
\label{apx:pipeline-alternatives}
\label{apx:presto-pipeline-multiverse}

Here, we provide further details on the configuration of our transformer multiverse, 
as well as additional experiments assessing the effect of choices in the \ourmethod pipeline on our landscape norms, distances, and sensitivity scores. 

\subsubsection{The Transformer Multiverse}
\label{apx:presto-transformers-description}

While our VAE multiverses focused on representational variability induced by hyperparameter and implementation choices, 
our transformer multiverse is designed to investigate both \ourmethod's power as a black-box diagnostic 
and the impact of choices in the \ourmethod pipeline on our measurements. 
To this end, 
we embed $2^{14} = 16\,384$ abstracts from four summarization datasets using six pretrained transformer models, 
covering different levels of textual technicality and language-model sophistication. 
Thus, we obtain $24$ sets of ($2^{14} \times \ast$)-dimensional embeddings, 
where $\ast \in \{384, 768, 1\,024, 1\,536\}$ is the embedding dimensionality of the respective language model. 
From these embeddings, we generate projections onto $\projdim\in[4]$ components via PCA or Gaussian random projections, 
using $\nsamples\in\{2^i\mid i \in \{10,\ldots,14\}\}$ embedded samples to compute persistent homology, 
as well as $\nprojections\in\{2^i\mid i\in \{3,\ldots,9\}\}$ projections to average landscapes when using random projections. 
We summarize the setup of our transformer multiverse in \cref{tab:transformer-multiverse}. 

\begin{table}[h]
	\centering
	\begin{tabular}{lr}
		\toprule
		Decision $\params_i$&Values\\
		\midrule
		Model \model&\ada, \distilroberta, \MiniLM, \mistral, \mpnet, \qadistilbert\\
		Dataset \dataset&\arxiv, \bbc, \cnn, \patents\\
		Dataset sample size \nsamples&$2^i$ for $i\in\{10,11,12,13,14\}$\\
		Number of projection components \projdim&$1,2,3,4$\\
		Projection method&PCA, Gaussian Random Projections\\
		Number of Gaussian projections \nprojections&$2^i$ for $i\in\{3,4,5,6,7,8,9\}$\\
		\bottomrule
	\end{tabular}
	\caption{\textbf{Transformer Multiverse.} 
		We work with $6\cdot 4 = 24$ sets of embeddings, 
		investigating each through the lens of multiple combinations of different choices involved in the \ourmethod pipeline. 
	}
	\label{tab:transformer-multiverse}
\end{table}

All datasets, as well as our four smaller transformer models, are available on HuggingFace. 
\begin{compactenum}[(1)]
	\item \textbf{Datasets.}
	\begin{compactenum}
		\item \href{https://huggingface.co/datasets/gfissore/arxiv-abstracts-2021}{\arxiv}: abstracts of all arXiv articles up to the end of $2021$;
		\item \href{https://huggingface.co/datasets/EdinburghNLP/xsum}{\bbc}: summaries of BBC news articles; 
		\item \href{https://huggingface.co/datasets/cnn_dailymail}{\cnn}: summaries of news articles from CNN and DailyMail; and
		\item \href{https://huggingface.co/datasets/big_patent}{\patents}: abstracts of U.S. patent applications.
	\end{compactenum}
	We embed the first $2^{14}$ samples from the designated \emph{training} sets of these datasets with each of our models. 
	\item \textbf{Models.} 
	\begin{compactenum}
		\item \href{https://huggingface.co/sentence-transformers/all-distilroberta-v1}{\distilroberta}: general-purpose model, embedding dimension $768$, maximum sequence length $512$ word pieces;
		\item \href{https://huggingface.co/sentence-transformers/all-MiniLM-L6-v2}{\MiniLM}: general-purpose model, embedding dimension $384$, maximum sequence length $256$ word pieces;  
		\item \href{https://huggingface.co/sentence-transformers/all-mpnet-base-v2}{\mpnet}: general-purpose model, embedding dimension $768$, maximum sequence length $384$ word pieces; and
		\item \href{https://huggingface.co/sentence-transformers/multi-qa-distilbert-cos-v1}{\qadistilbert}: QA-specialized model, embedding dimension $768$, maximum sequence length $512$ word pieces.
	\end{compactenum}
	These models are provided as pretrained by the \textit{sentence-transformers} library \cite{reimers2019sentencebert}. 
	They come with \emph{normalized} embeddings, 
	and we \emph{truncate} the texts to be embedded in the (rare) event that they exceed a model's maximum sequence length. 
\end{compactenum} 
To obtain embeddings from our two large language models, \ada (embedding dimension: $1\,536$) and \mistral (embedding dimension: $1\,024$), we query the corresponding APIs of their providers (OpenAI and MistralAI, respectively). 

\subsubsection{Landscape Norms}
\label{apx:transformers:norms}

To start, we investigate how our choice of \emph{projector} interacts with our chosen \emph{sample size} \nsamples, 
keeping the number of projection components fixed at $\projdim = 3$. 
In particular, 
we are interested in how our landscape norms change as we vary these parameters,  
since landscape norms underlie both our \ourmethod \emph{distances} and our \ourmethod \emph{sensitivities}. 
Inspecting the distributions of landscape norms when working with Gaussian random projections in \Cref{fig:norms-gauss}, 
we see that norms are approximately normally distributed, 
and that \emph{larger sample sizes} are associated with 
\emph{smaller landscape-norm means} as well as \emph{smaller landscape-norm variance}. 

\begin{figure}[t]
	\centering
	\includegraphics[width=\linewidth]{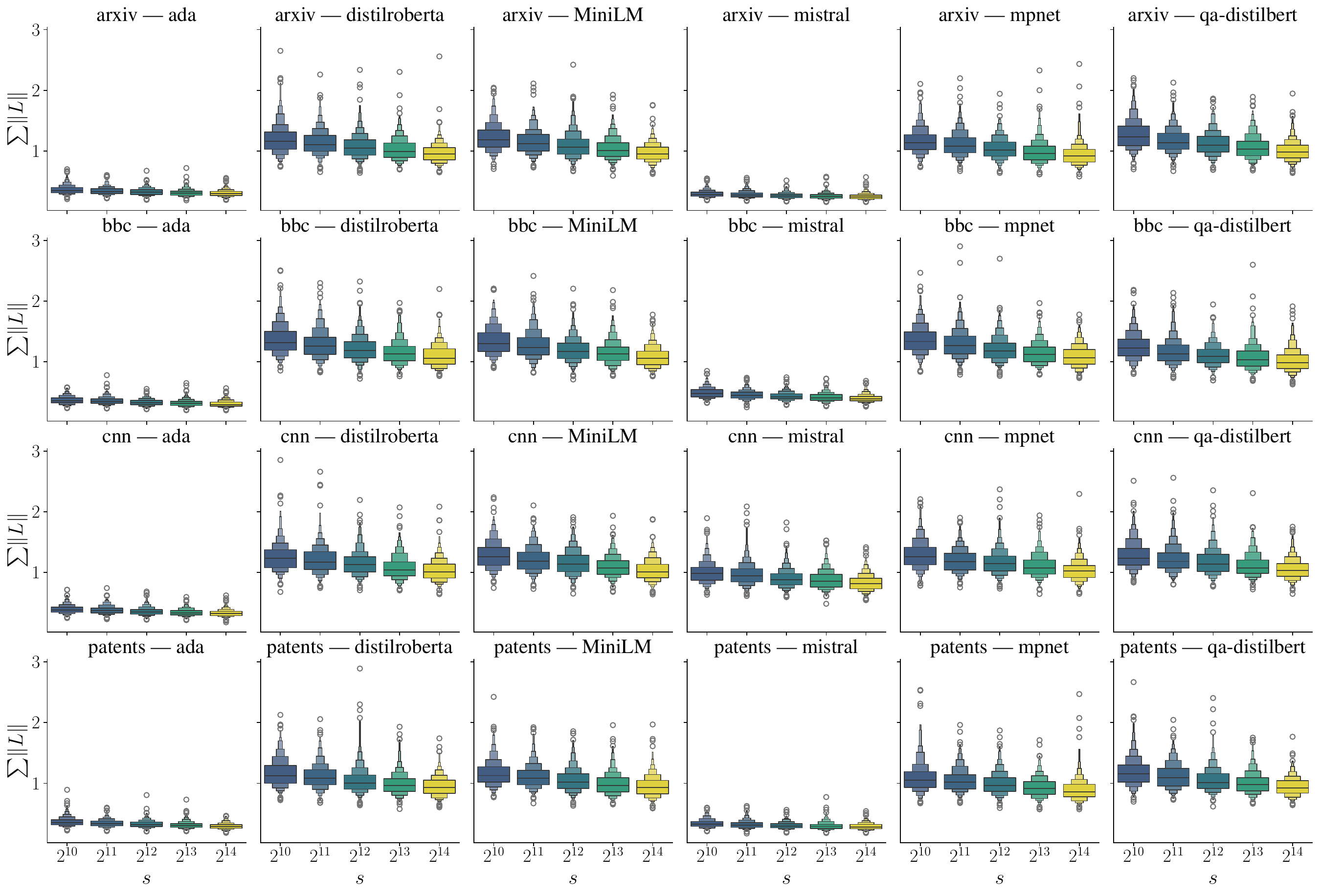}
	\caption{\textbf{Distribution of landscape norms when projecting via Gaussian random projections and varying the sample size \nsamples.} 
		We show the distribution of landscape norms when working with $\nprojections = 2^9 = 512$ Gaussian random projections and varying the sample size \nsamples of the embedded dataset. 
		Larger sample sizes are associated with smaller landscape norms, and the landscape norm distributions of large language models are clearly distinguishable from those of smaller transformer models.}
	\label{fig:norms-gauss}
\end{figure}
\begin{wrapfigure}{r}{0.5\linewidth}
	\centering
	\includegraphics[width=\linewidth]{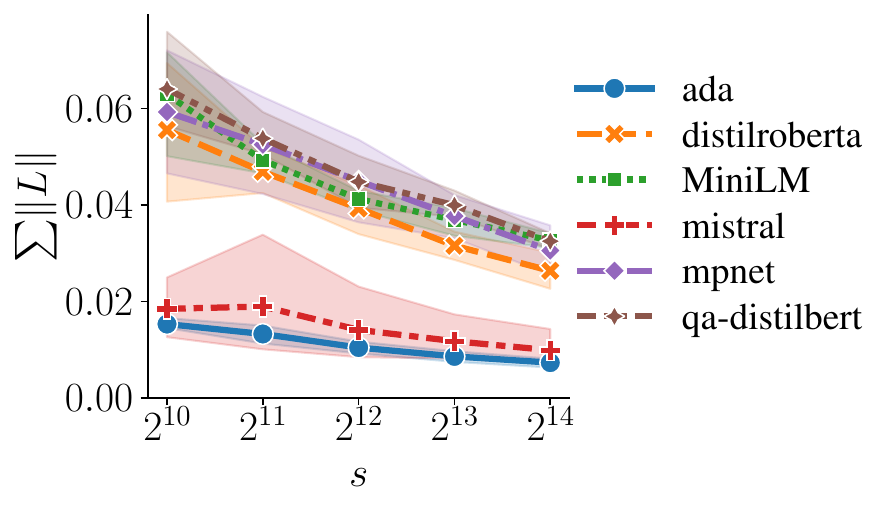}
	\caption{\textbf{Landscape norms when projecting with PCA and varying the sample size \nsamples.} 
		We show the landscape norms of PCA-based projections as a function of the sample size \nsamples of the embedded dataset. 
		While landscape norms obtained via PCA live on a scale which is different from that of landscape norms obtained via Gaussian Random projections, their distinguishing power and behavior under sample-size variation are qualitatively similar.}
	\label{fig:norms-pca}
\end{wrapfigure}

Next, we compare the distributions of norms under Gaussian random projections with the fixed norms we obtain under PCA.
We observe that although the norms live on different scales, 
with PCA-based norms being much smaller than the norms of Gaussian random projections, 
the change pattern when varying the sample size \nsamples is qualitatively similar, 
and the distinction between large language models and smaller transformer models is equally evident. 
Hence, while Gaussian random projections allow us explore representational variability \emph{within} individual latent spaces, 
when comparing \emph{between} latent spaces, 
we may use PCA as well. 
Thus, we fix PCA as a projector for the remainder of our experiments.

\clearpage

\begin{figure}[t]
	\centering
	\includegraphics[width=\linewidth]{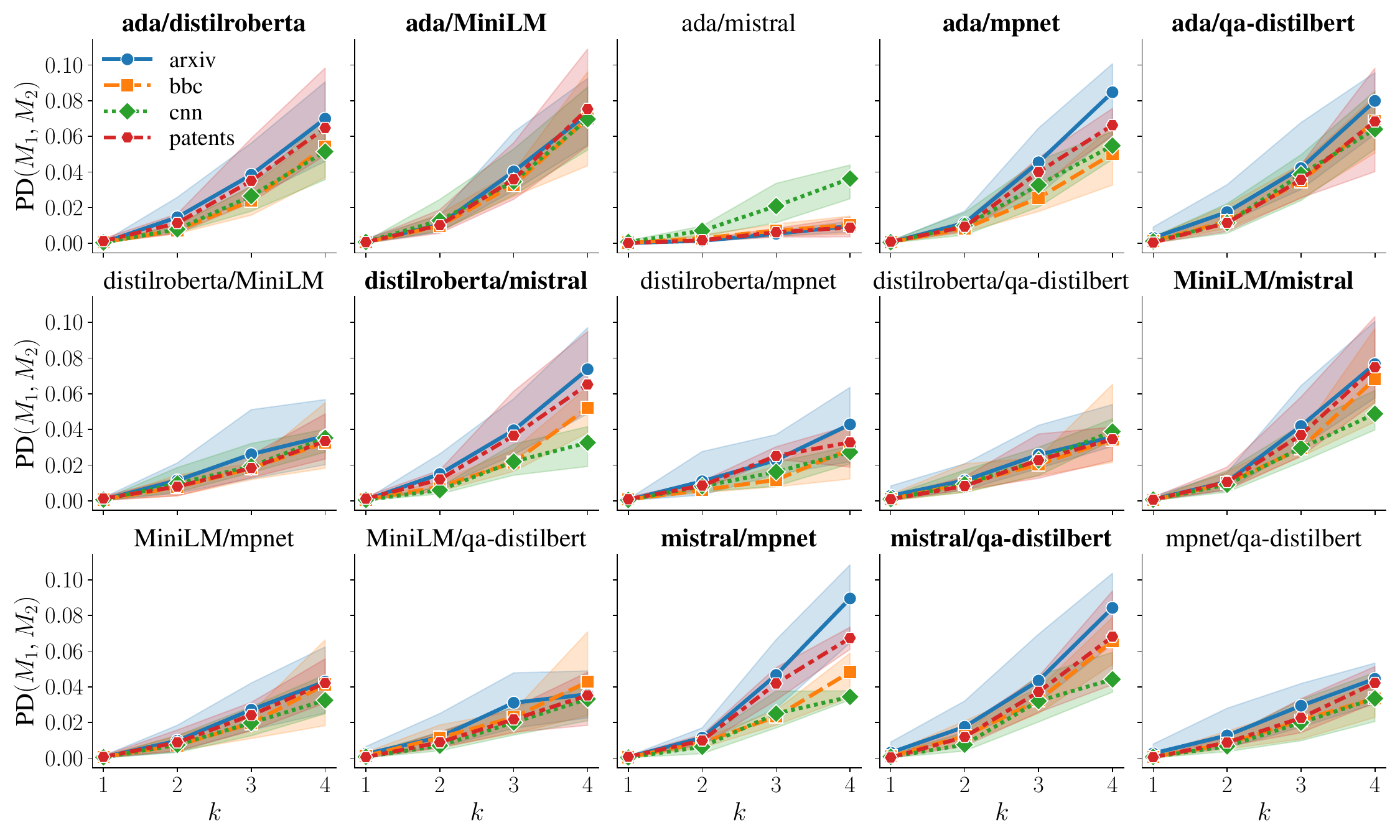}
	\caption{\textbf{\ourmethod distances between transformer models.}
		We show the \ourmethod distance between transformer models as measured on different datasets (lines) as a function of the number of projection components \projdim, 
		where the bands indicate the range of measurements across varying dataset sample sizes \nsamples (from $2^{10}$ to $2^{14}$), 
		and we typeset comparisons involving exactly one large language model in bold. 
		For $\projdim > 2$ projection components, \ourmethod distances distinguish large language models from smaller models based on technical datasets.
	}\label{fig:language-pairwise-nc}
\end{figure}

\subsubsection{\ourmethod Distances}
\label{apx:transformers:distances}

In \Cref{fig:transformer-mms-compare}, 
we showed the pairwise distances between the multiverse metric spaces associated with our transformer models. 
For a more fine-grained perspective, 
in \Cref{fig:language-pairwise-nc}, 
we additionally depict the range of pairwise \ourmethod distances between individual models as we vary the number of projection components \projdim and the number of samples \nsamples considered. 
We see that as we increase the number of projection components, \ourmethod's capacity to distinguish large language models from smaller transformer models increases. 
Furthermore, in many comparisons involving exactly one large language model, 
for $\projdim > 2$, 
the \ourmethod distance between the compared models is larger for our technical datasets (\arxiv, \patents) than for our news datasets (\bbc, \cnn).

\subsubsection{\ourmethod Sensitivities}
\label{apx:transformers:sensitivities}

Complementing our assessment of \ourmethod sensitivities in the VAE hyperparameter and implementation multiverses (\Cref{fig:vae-sensitivity,fig:vae-implementation-sensitivity}), 
we now investigate the local \ourmethod sensitivities of models and datasets in our transformer multiverse 
as a function of the number of samples \nsamples and the number of projection components \projdim. 
When exploring sensitivity to \emph{model variation} in \Cref{fig:transformer-sensitivity-models}, 
we observe two trends: 
\ourmethod sensitivities become \emph{smaller} as we increase the number of samples \nsamples, 
and they become \emph{larger} as we increase the number of projection components \projdim. 
These trends persist when we turn to \emph{dataset variation}, depicted in \Cref{fig:transformer-sensitivity-datasets}. 
Since a smaller number of samples provides a rougher picture of a model's latent space, 
and a larger number of projection components allows us to capture more variation, 
these patterns are to be expected, 
which further increases our confidence in \ourmethod.
Finally, \Cref{fig:transformer-sensitivity-datasets}  
reveals that the \ourmethod sensitivity when varying \emph{datasets} differs widely between \emph{models}---and notably, \ourmethod's dataset sensitivity separates our two large language models, \ada and \mistral (cf. \Cref{fig:transformer-sensitivity-datasets}, middle panel). 

\clearpage

\begin{figure}[h]
	\centering
	\includegraphics[width=0.66\linewidth]{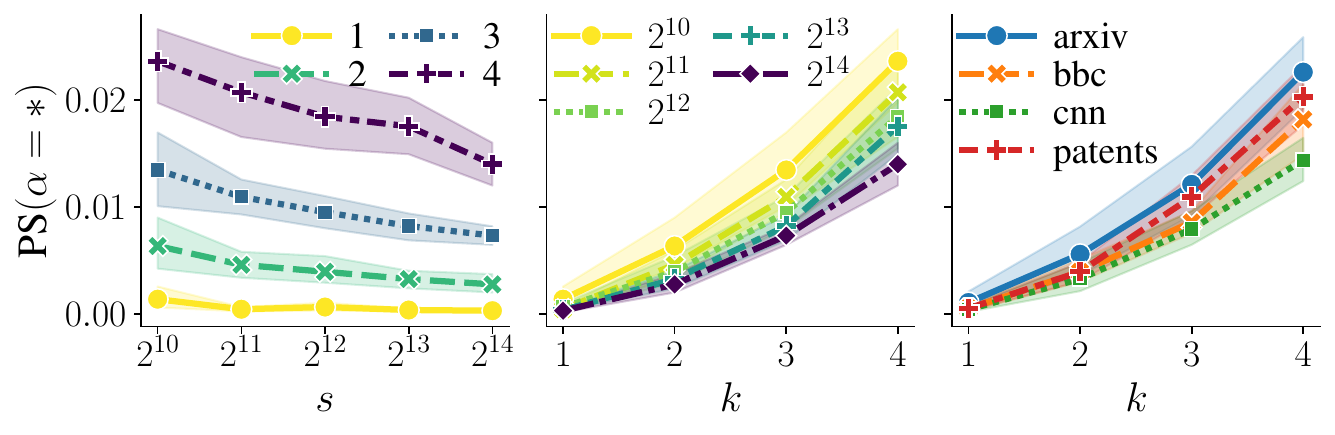}
	\caption{\textbf{Local \ourmethod sensitivity of \emph{models} in the transformer multiverse.}
		In the left and middle panels, we show the local \ourmethod sensitivity scores of the model choice \algo (lines), 
		along with the range of individual \ourmethod sensitivities across our \emph{models} (bands), 
		as a function of the number of samples \nsamples and the number of projection components \projdim.
		In the right panel, 
		we show the local \ourmethod sensitivity scores of the model choice \algo (lines), 
		along with the range of individual \ourmethod sensitivities across \emph{sample sizes} \nsamples (bands),
		as a function of the number of projection components \projdim and the dataset \data. 
		The smaller the number of samples, and the larger the number of projection components, the higher the relevance of the model choice.
	}\label{fig:transformer-sensitivity-models}
\end{figure}

\begin{figure}[h]
	\centering
	\includegraphics[width=0.66\linewidth]{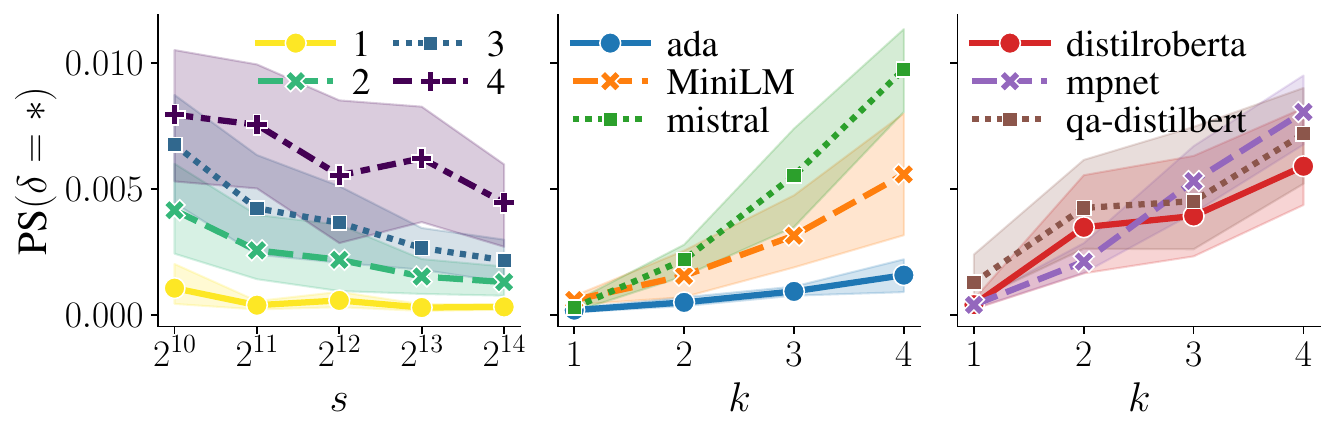}
	\caption{\textbf{Local \ourmethod sensitivity of \emph{datasets} in the transformer multiverse.}
		In the left panel, we show the local \ourmethod sensitivity scores of the dataset choice \data (lines), 
		along with the range of individual \ourmethod sensitivities across our \emph{datasets} (bands), 
		as a function of the number of samples \nsamples and the number of projection components \projdim.
		In the middle and right panels, 
		we show the local \ourmethod sensitivity scores of the dataset choice \data (lines), 
		along with the range of individual \ourmethod sensitivities across our \emph{number of samples} \nsamples (bands),
		as a function of the number of projection components \projdim and the transformer model \algo.
		Both the local \ourmethod sensitivity and the variance in individual \ourmethod sensitivities are highly dependent on the chosen transformer model.
	}\label{fig:transformer-sensitivity-datasets}
\end{figure}

 \subsection{\ourmethod as a Dissimilarity Measure for Neural-Network Representations}
\label{apx:neural-networks}

\ourmethod is designed for comparisons between general latent spaces, 
not neural representations specifically, 
and we deliberately deferred an in-depth treatment of \ourmethod for neural-network forensics to future work. 
To gauge the potential of \ourmethod as a dissimilarity measure in this context, 
we compare \ourmethod with two variants of CKA \citep{kornblith2019similarity}. 
When measured by CKA, representations in neighboring neural-network layers are consistently judged as more similar to each other than to representations in more distant neural-network layers. 
Our preliminary experiments using the MNIST-trained neural-network representations made publicly available in the \href{https://github.com/yuanli2333/CKA-Centered-Kernel-Alignment/tree/master/model_activations/MNIST}{CKA repository}, 
summarized in \cref{tab:presto-vs-lkcka},
indicate that \ourmethod captures this trend as well. 
In \Cref{fig:nnsim}, 
we additionally depict the similarity ranks of \ourmethod (transformed into a similarity measure) as well as $l$CKA and $k$CKA 
for relationships between different layers in the same model (\emph{intra-model relationships}) 
as well as relationships between the same or different layers of differently seeded models (\emph{inter-model relationships}).
However, we also emphasize that there is no reason to expect, a priori, that the relationships between the internal representations of a neural-network model should be quantitatively the same across differently (hyper)parameterized neural-network models. 
Rather, this is an interesting hypothesis that \ourmethod will allow us to test going forward.

\begin{table}[t]
	\centering
	\begin{tabular}{rrrr}
		\toprule
		&\multicolumn{3}{c}{Type of Correlation}\\
		\cmidrule{2-4}
		Comparison&Pearson&Spearman&Kendall\\\midrule
		\ourmethod vs. $l$CKA&$-0.88$ ($p = 0.02$)&$-0.89$ ($p = 0.02$)&$-0.73$ ($p = 0.06$)\\
		\ourmethod vs. $k$CKA&$-0.75$ ($p = 0.09$)&$-0.89$ ($p = 0.02$)&$-0.73$ ($p = 0.06$)\\
		\bottomrule
	\end{tabular}
	\caption{\textbf{\ourmethod's relationship to CKA variants.} 
		We display the correlations between \ourmethod (a dissimilarity measure) and $l$CKA resp. $k$CKA (similarity measures) for neural-network models trained on MNIST using different seeds. 
		}\vspace*{-6pt}
	\label{tab:presto-vs-lkcka}
\end{table}

\begin{figure}[t]
	\centering
	\begin{subfigure}{0.275\linewidth}
		\centering
		\includegraphics[width=0.3\linewidth]{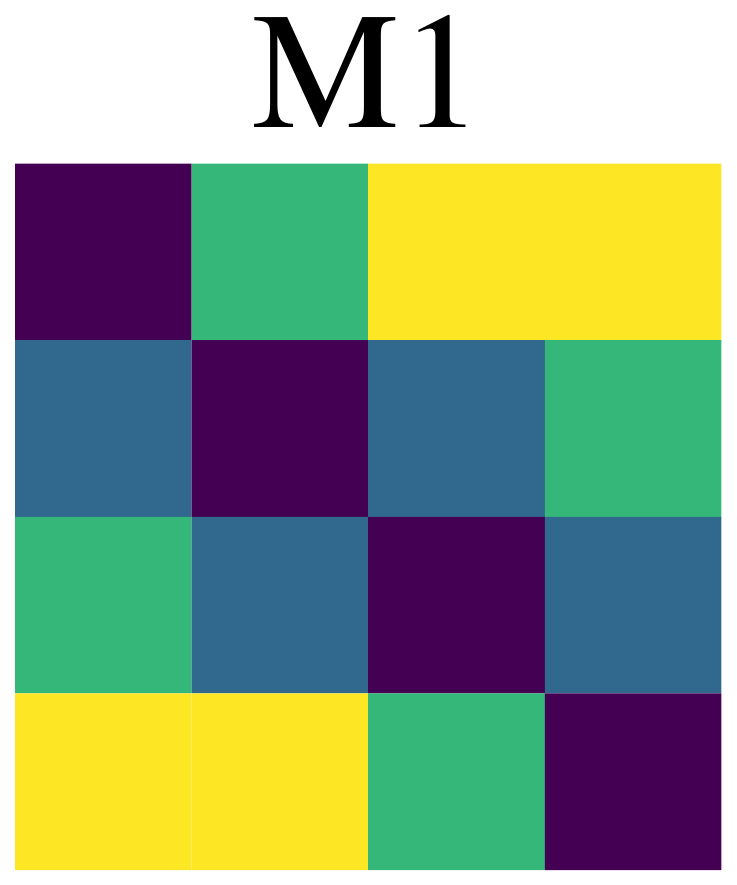}~\includegraphics[width=0.3\linewidth]{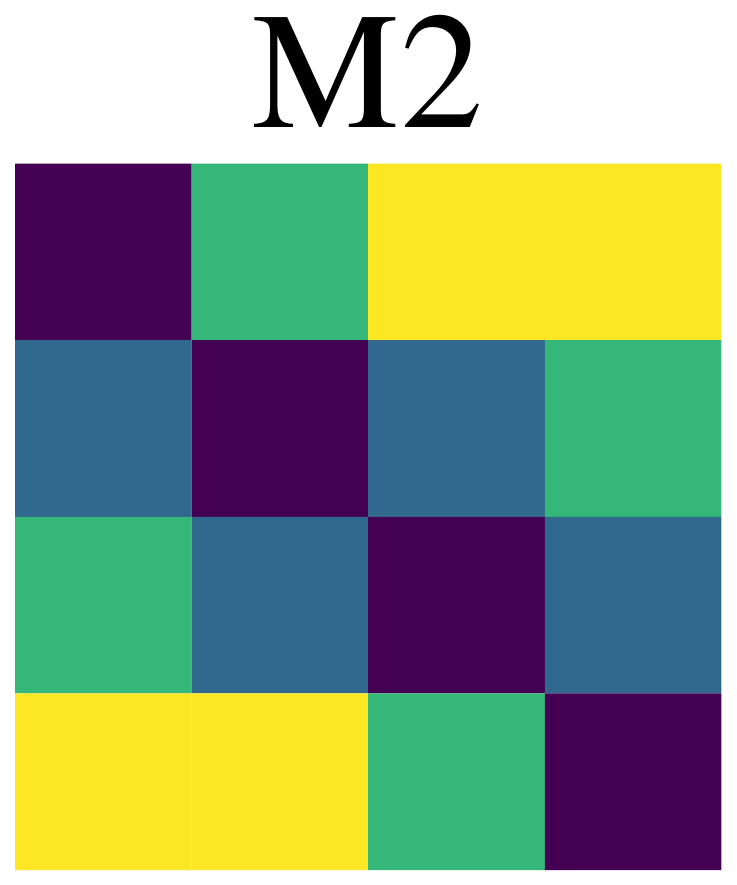}~\includegraphics[width=0.3\linewidth]{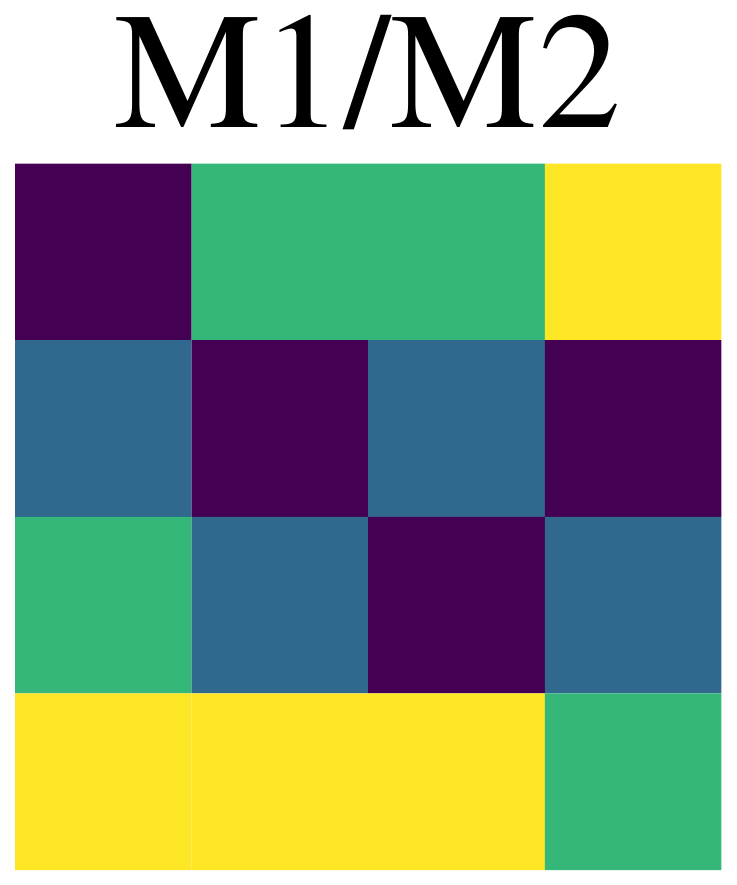}
		\subcaption{$l$CKA}
	\end{subfigure}~\begin{subfigure}{0.275\linewidth}
		\centering
		\includegraphics[width=0.3\linewidth]{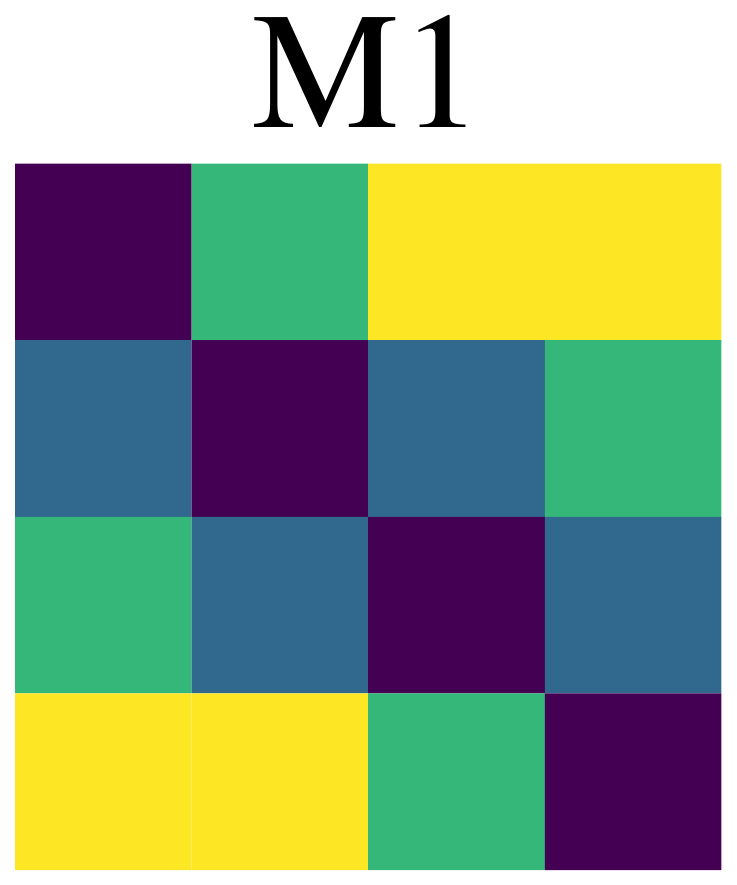}~\includegraphics[width=0.3\linewidth]{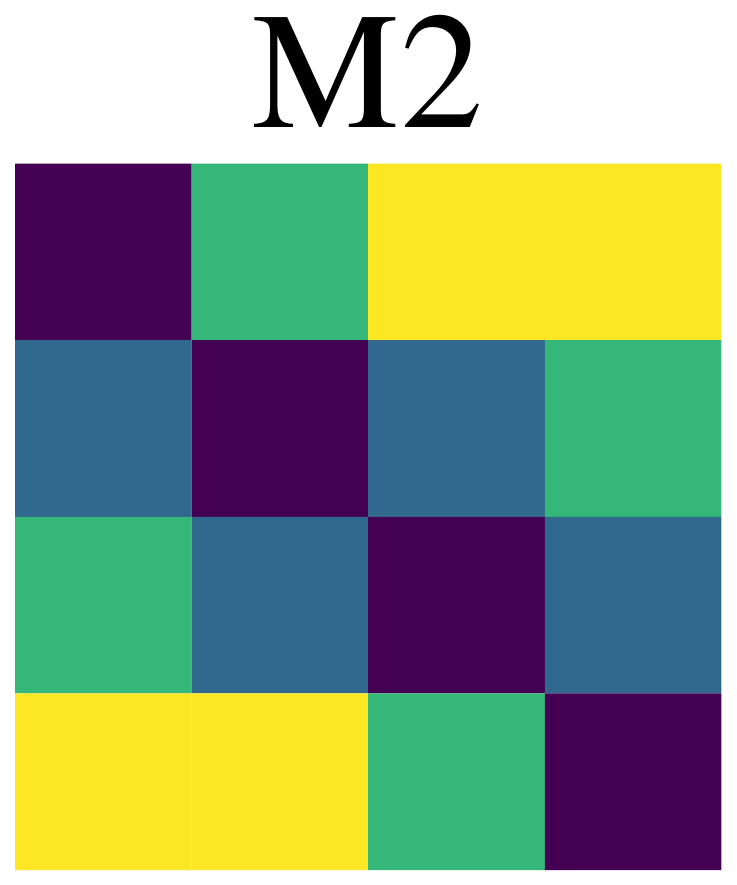}~\includegraphics[width=0.3\linewidth]{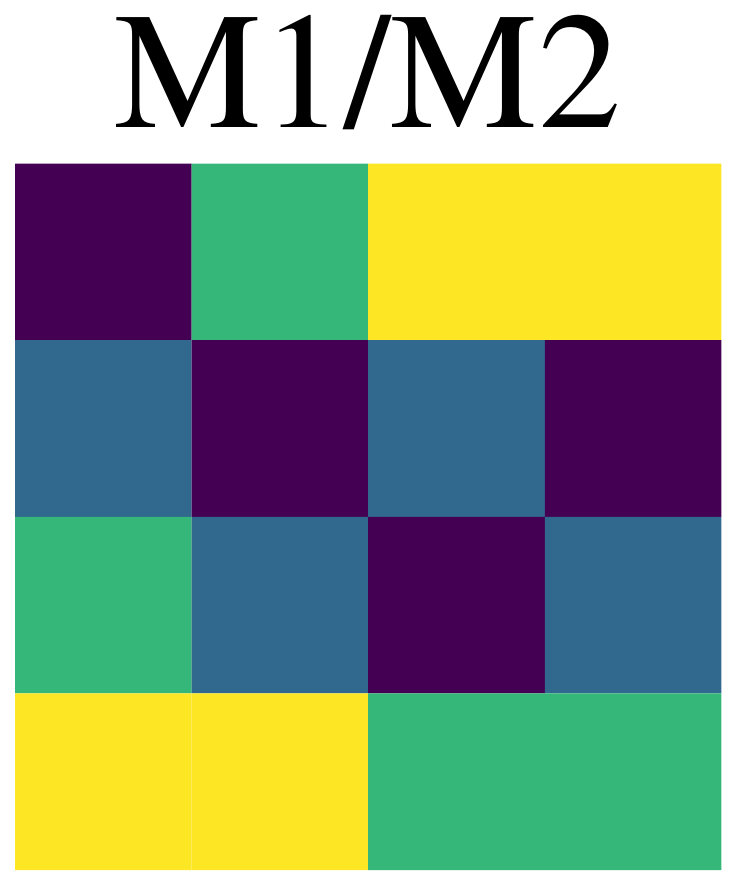}
		\subcaption{$k$CKA}
	\end{subfigure}~\begin{subfigure}{0.275\linewidth}
		\centering
		\includegraphics[width=0.3\linewidth]{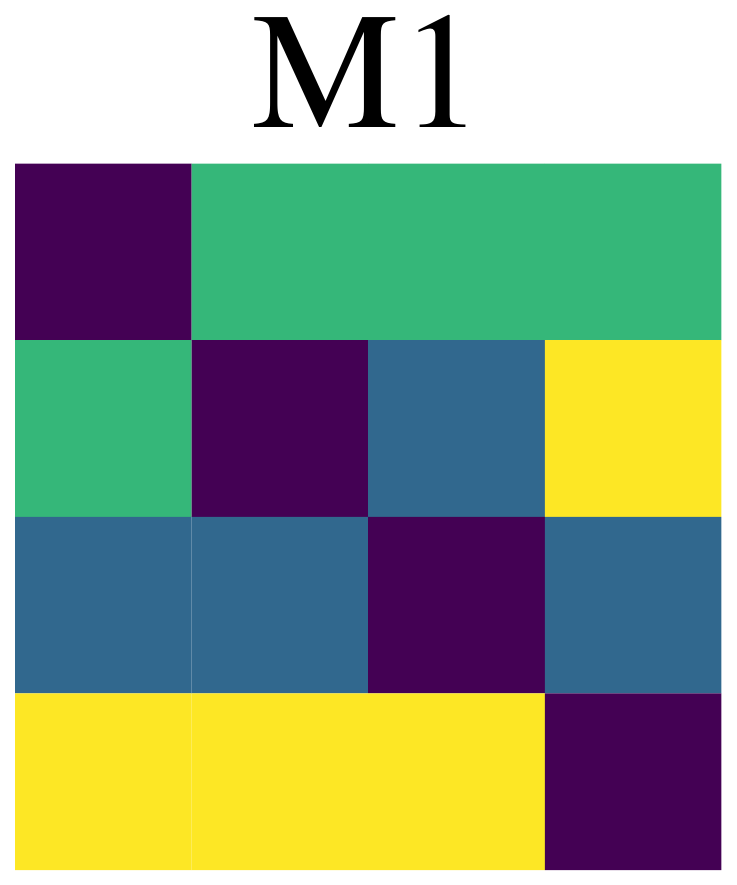}~\includegraphics[width=0.3\linewidth]{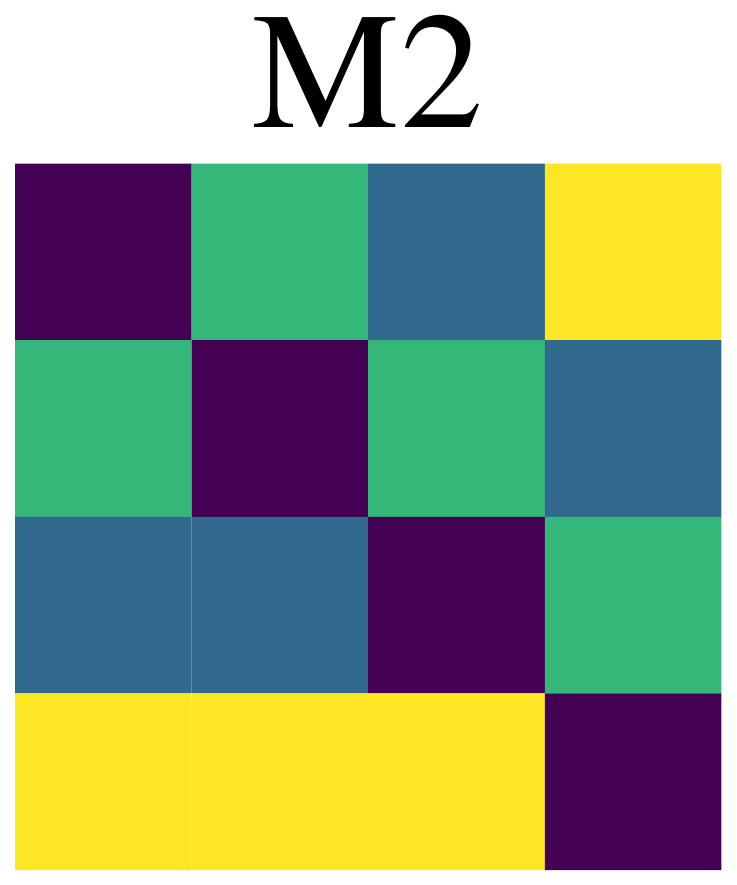}~\includegraphics[width=0.3\linewidth]{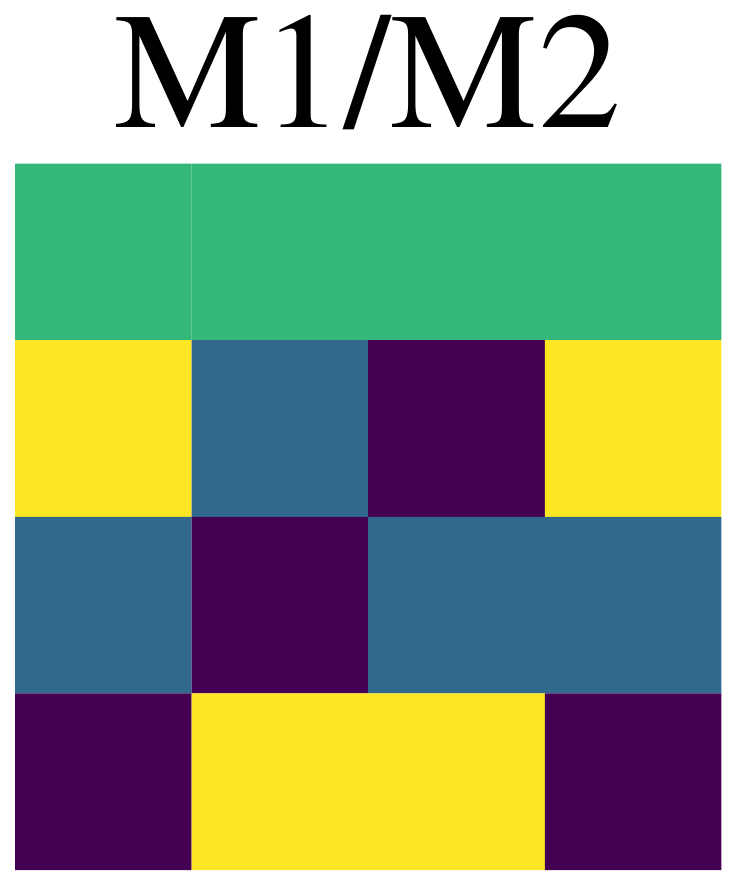}
		\subcaption{\ourmethod}
	\end{subfigure}
	\caption{\textbf{Comparing neural activations.}
		We replicate an experiment by \citet{kornblith2019similarity}, who assess the similarity of neural activations in $4$-layer CNNs trained on MNIST, 
		comparing \ourmethod (turned into a similarity measure) with similarities produced by $l$CKA and $k$CKA. 
		For each method, we show the similarity ranks for different layers in the same model (left and middle subpanels) as well as for different layers across two different models (right subpanels).
		Here, similarity ranks are computed column-wise, 
		and darker colors indicate higher ranks 
		(i.e., if in the comparison between model~1 [M1] and model~2 (M2), the square in position $[i,j]$ is colored in dark blue, 
		layer $j$ of M1 is most similar to layer $i$ of M2).
		} 	\label{fig:nnsim}
\end{figure}

\subsection{Multiverse Analysis of Non-Linear Dimensionality-Reduction Methods} \label{apx:DimRed}
 
Manifold-learning techniques are essential tools in various applied sciences, 
including computational biology and medicine, 
where algorithms such as \umap, \tsne,
and \phate \cite{mcinnes_umap_2020,maaten_visualizing_2008,moon_visualizing_2019} 
are commonly used to generate low-dimensional embeddings of complex datasets,
imbuing these computationally tractable representations with geometric and topological
structure learned from the data manifold. 

\begin{wraptable}{r}{9cm}
	\centering \small
	\begin{tabular}{lrrrrr}
\toprule
&\multicolumn{5}{c}{Model}\\
\cmidrule{2-6}
Dataset & Isomap & LLE & Phate & t-SNE & UMAP \\
\midrule
Breast Cancer & 13.41 & 4.59 & 1.52 & 0.21 & 0.34 \\
Diabetes & 0.22 & 51.19 & 0.37 & 0.06 & 0.32 \\
Digits & 0.35 & 0.94 & 0.24 & 0.24 & 1.14 \\
Iris & 0.46 & 9.01 & 0.52 & 0.74 & 1.28 \\
Moons & 0.00 & 0.00 & 1.67 & 1.34 & 0.22 \\
Swiss Roll & 1.56 & 0.26 & 1.28 & 0.42 & 0.60 \\
\bottomrule
\end{tabular}

 	\caption{\textbf{Sensitivity of dimensionality-reduction methods.}
		We show the \ourmethod sensitivity scores of the \textit{number-of-neighbors} parameter for five dimensionality-reduction algorithms on six datasets. 
		Popular dimensionality-reduction vary widely in hyperparameter sensitivity.
	} \label{tab:n-neighbors-sensitivity}
\end{wraptable}

These methods also exhibit representational 
variability that can be measured with \ourmethod, 
especially when varying their hyperparameters.
Among the most critical parameters determining the structure of a low-dimensional representation is 
the parameter controlling the \emph{locality} of the dimensionality-reduction method, 
which manifests in various variants of a \emph{number-of-neighbors} parameter (named differently for different algorithms). 
In \cref{tab:n-neighbors-sensitivity}, 
we use \ourmethod's sensitivity scores based on the distribution of landscapes that arise from 
varying this locality parameter across different synthetic and real-world datasets. 

To further demonstrate \ourmethod's utility in the manifold-learning space,
in \Cref{fig:dim-red-clustering}, 
we cluster the latent representations arising from different combinations of dimensionality-reduction algorithms, 
hyperparameters, and datasets 
based on the multiverse metric space constructed from pairwise \ourmethod distances between embeddings. 
We are confident that \ourmethod-based 
tools will be useful for practitioners in the applied sciences by
\begin{inparaenum}[(1)]
    \item allowing them to assess the \emph{sensitivity} of their algorithms and datasets with respect to their hyperparameter choices, and
    \item helping them condense the multiverse of representations into a manageable set of structurally distinct representatives via hyperparameter \emph{compression}. 
\end{inparaenum}

\begin{figure}[H]
	\centering
	\includegraphics[height=2.7cm]{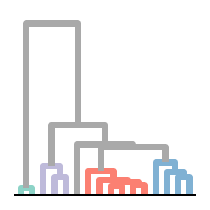}~\includegraphics[height=2.7cm]{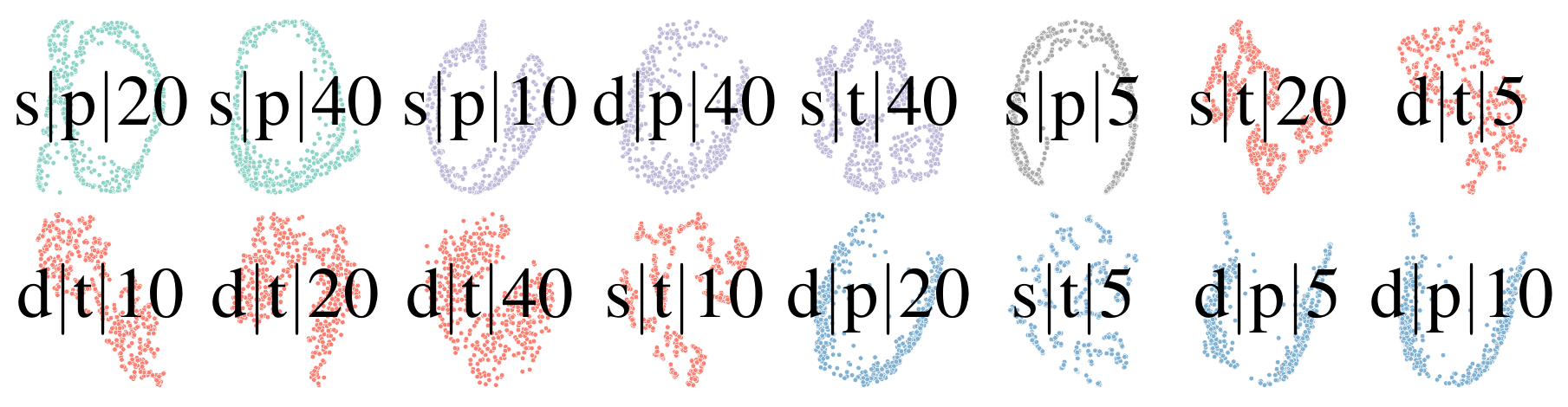}
	\caption{\textbf{Clustering embeddings with \ourmethod.}
		We show the dendrogram of a complete-linkage clustering of $2$-dimensional embeddings based on \ourmethod distances (left), 
		as well as the clustered embeddings, 
		colored by their cluster when cutting the dendrogram at distance $1$ (right).
		Annotations are of shape dataset$|$method$|$$n$, 
		where dataset $\in \{$swiss roll (s), diabetes (d)$\}$, 
		method $\in \{$phate~(p), t-SNE (t)$\}$, 
		and $n\in\{5,10,20,40\}$ is the nearest-neighbors parameter of both methods. 
		With \ourmethod, we can compare potentially unaligned embeddings originating from different datasets and methods.
	} \label{fig:dim-red-clustering}\vspace*{-12pt}
\end{figure}

 \clearpage
\section{Extended Methods}
\label{apx:methods}

In this section, we provide details on the computational complexity of topological multiverse analysis and detail how we can pick good representatives for (hyperparameter-)search-space compression. 

\subsection{\ourmethod Complexity}
\label{apx:computational-complexity}

\subsubsection{Theoretical Analysis}
\label{apx:computational-complexity:theoretical}
The individual \ourmethod steps of the \ourmethod pipeline exhibit the following computational complexities. 
\begin{compactenum}[(S1)]
	\item \textbf{Embed data.} 
	The complexity of computing $\embedding = \model(\dataset) \in\reals^{\nsamples\times \ast}$ for $\dataset\in\reals^{\nsamples\times \ast}$ 
	depends only on \dataset and \model, 
	and hence, is independent of our specific design choices.
	\item \textbf{Project embeddings.}
	Given embedding ${\embedding\in\reals^{\nsamples\times \dimension}}$, 
	computing \projdim-dimensional PCA takes $\BO(\projdim\nsamples\dimension)$ time via truncated SVD, 
	while generating \nprojections random projections of \embedding onto $\projdim$ dimensions takes $\BO(\nprojections\projdim\nsamples\dimension)$ time.
	\item \textbf{Construct persistence diagrams.} 
	When constructing an \topodim-dimensional $\alpha$-complex from \nsamples points,  
	the bottleneck is computing the Delaunay triangulation, 
	which takes worst-case expected time $\BO(\nsamples^{\lceil\nicefrac{\topodim}{2}\rceil+1})$ but often runs in $\BO(\nsamples\log\nsamples)$ time in practice.
\item \textbf{Compute persistence landscapes.} 
	If done exactly, computing a persistence landscape from \nsamples birth-death pairs takes $\BO(\nsamples^2)$ time. 
	However, at the cost of a small perturbation in our persistence diagrams, 
	we can round birth and death times to lie on a grid of constant size, 
	such that we can obtain persistence landscapes in $\BO(\nsamples \log \nsamples)$ time \cite{bubenik2017persistence}.
	When working with random projections, 
	averaging \nprojections \emph{exact} persistence landscapes takes $\BO(\nsamples^2\nprojections\log\nprojections)$ time, 
	whereas averages of \emph{approximate} landscapes can be computed in $\BO(\nsamples\nprojections)$ time \cite{bubenik2017persistence}. 
\end{compactenum}
Hence, for a constant number of latent dimensions \dimension and (if applicable) projections \nprojections, 
the topology-based steps of the \ourmethod pipeline can be performed in $\tildeO(\nsamples)$ time, 
i.e., our computations are approximately linear in the number of samples in \dataset. 
This also holds for the computation of our \emph{\ourmethod primitives}, 
i.e., 
the \ourmethod \emph{distance} (\ourdistance) and the \ourmethod \emph{variance} (\ourvariance).  
Overall, we obtain a scalable toolkit for the topological analysis of representational variability in latent-space models.

\subsubsection{Empirical Analysis}
\label{apx:computational-complexity:empirical} 
We supplement our theoretical analysis with \ourmethod's empirical running times, 
detailed in \cref{tab:empirical-runningtimes}. 
Note that these running times are based on a single-CPU implementation. 
Various optimization and parallelization strategies for persistent-homology calculations and diameter approximations exist, 
and \ourmethod can benefit from them directly.

\begin{table}[h]
	\centering
	\begin{subtable}{0.475\linewidth}
		\centering
		\begin{tabular}{rrrr}
			\toprule
			&\multicolumn{3}{c}{$d$}\\\cmidrule(lr){2-4}
			$s$& $128$ & $256$ & $512$\\ \midrule
			$2^{12}$ & $0.30 \pm 0.03$ & $0.30 \pm 0.01$ & $0.34 \pm 0.04$  \\
			$2^{14}$ & $1.16 \pm 0.02$ & $1.20 \pm 0.04$ & $1.26 \pm 0.03$  \\
			$2^{16}$ & $5.36 \pm 0.28$ & $5.28 \pm 0.11$ & $7.26 \pm 0.88$  \\
			$2^{18}$ & $25.86 \pm 1.31$ & $28.43 \pm 2.04$ & $36.26 \pm 9.70$  \\
			\bottomrule
		\end{tabular}
		\subcaption{\ourmethod without Normalization}
	\end{subtable}~\begin{subtable}{0.475\linewidth}
		\centering
		\begin{tabular}{rrrr}
			\toprule
			&\multicolumn{3}{c}{$d$}\\\cmidrule(lr){2-4}
			$s$& $128$ & $256$ & $512$\\ \midrule
			$2^{12}$ & $0.59 \pm 0.04$ & $0.69 \pm 0.03$ & $0.92 \pm 0.02$ \\
			$2^{14}$ & $2.35 \pm 0.04$ & $2.84 \pm 0.06$ & $4.01 \pm 0.14$ \\
			$2^{16}$ & $10.45 \pm 0.34$ & $12.32 \pm 0.41$ & $38.20 \pm 0.95$ \\
			$2^{18}$ & $67.54 \pm 2.27$ & $154.47 \pm 36.00$ & $156.71 \pm 45.48$ \\
			\bottomrule
		\end{tabular}
		\subcaption{\ourmethod with Normalization}
	\end{subtable}
	\caption{\textbf{Empirical running times of \ourmethod.} We report the average running times (seconds) of computing \ourmethod distances across random embeddings of varying sizes  on a single CPU. We compute 10 different pairs of randomly seeded embeddings for each size $(s,d)$. We project the embeddings using PCA into 2 dimensions and fit our landscapes using Alpha Complexes for homology dimensions 0 and 1.}
	\label{tab:empirical-runningtimes}
\end{table}

\subsubsection{Runtime Comparison with Other Methods} 
\label{apx:computational-complexity:comparative}
Unlike other methods, \ourmethod can compare latent spaces with hundreds of thousands of samples embedded into large latent dimensions on commodity hardware. 
In the age of large LLM embeddings, our method allows users with limited computational resources to run \ourmethod quickly and without using paralyzing amounts of memory. 
In \cref{tab:comparative-runningtimes}, 
we compare the running times of computing \ourmethod distances with the running times of computing several other (dis)similarity measures.
Here, ``Pairwise'' refers to computing basic pairwise distances in the high-dimensional space, 
``VR'' refers to constructing a Vietoris-Rips complex based on these distances, 
and ``IMD'' (Intrinsic Multi-scale Distance) refers to the method by \citet{tsitulin2020shape}. 
We do not include running times for Representation Topological Divergence (RTD) \citep{trofimov2023learning} because it is designed specifically for GPUs. 
Furthermore, we note that \ourmethod's computational complexity in large spaces hinges on the computation of alpha complexes, persistence landscapes, and diameter approximations (when normalizing spaces). These have the potential to be accelerated and parallelized nicely \citep{chazal2014stochastic,chazal2015subsampling}. 
We leave the integrations and analyses of these optimizations to future work and will reflect updates in our open-source implementation.

\begin{table}[t]
	\centering
	\begin{tabular}{rrrrrrrr}
		\toprule
		$s$ & $d$ & \ourmethod (no norm.)& \ourmethod  (norm.) & \text{CKA} & \text{Pairwise} & \text{VR} & \text{IMD} \\
		\midrule
		$2^{12}$ & $128$ & $0.308$ & $0.721$ & $22.568$ & $0.23$ & $3.113$ & $2.692$ \\
		$2^{12}$ & $256$ & $0.302$ & $1.149$  & $24.643$ & $0.446$ & $5.558$ & $4.772$ \\
		$2^{12}$ & $512$ & $0.313$ & $2.184$  & $27.689$ & $0.692$ & $9.641$ & $6.633$ \\
		$2^{14}$ & $128$ & $1.236$ & $2.816$ & $2\,601.56$ & $9.614$ & $170.41$ & $33.968$ \\
		$2^{14}$ & $256$ & $1.197$ & $4.652$ & $2\,646.33$ & $10.649$ & $220.507$ & $64.061$ \\
		$2^{14}$ & $512$ & $1.248$ & $11.096$ & $2\,445.32$ & $18.081$ & $272.997$ & $101.336$ \\
		\bottomrule
	\end{tabular}
	\caption{\textbf{Comparative running times.} We compare \ourmethod to other representational similarity methods, reporting the run times (seconds) for \ourmethod (with and without normalization), CKA, Pairwise Distances (\texttt{scikit-learn}), Vietoris Rips (\texttt{guhdi}), and IMD on a pair of random embeddings of varying sizes on a single CPU. \ourmethod times are averaged over $10$ pairs of embeddings.}
	\label{tab:comparative-runningtimes}
\end{table}

\subsection{\ourmethod Compression}
\label{apx:compression}

When using \ourmethod for search-space compression, 
our goal is to select a small set of \emph{representatives} \representatives (e.g., hyperparameter vectors) to explore in detail 
such that each universe in the original search space has a representative in the compressed search space at topological distance no larger than $\epsilon$. 
To obtain an efficient set of such representatives 
(i.e., a small set of configurations that together satisfy our topologically-dense-sampling criterion), 
we can take two approaches. 
First, we can \emph{cluster} a multiverse based on the pairwise topological distances between its universes 
using a method that bounds intra-cluster distances
(e.g., using agglomerative complete-linkage clustering and cutting the dendrogram at height~$\epsilon$)
and pick one representative from each cluster. 
Alternatively, 
to bound the size of the representative set as a function of its minimum size, 
we can interpret topologically dense sampling as a \emph{set-cover problem} (or equivalently, a \emph{hitting-set problem}), 
where our universes are both the elements (to be represented themselves) and the candidate sets (representing themselves and others). 
While minimum set cover is NP-hard \cite{karp1972reducibility},  
the simple \emph{greedy approximation algorithm}  
that picks the candidate universe capable of representing the largest number of unrepresented universes 
can guarantee that our set of representatives has size at most 
$\cardinality{\representatives} \leq \harmonic(\nconfigs)\ncover$, 
 where $\nconfigs \coloneq \lvert \multiverse\rvert$
 is the size of our multiverse, 
 $\harmonic(\nconfigs)\in\BO(\log\nconfigs)$ denotes the \nconfigs-th harmonic number, 
 and \ncover is the minimum number of representatives needed to cover
 all configurations in~\multiverse. 

 \section{Extended Background}
\label{apx:Background}

This section provides some background information on
\emph{latent-space models}, the objects of our variability assessment. 
We provide further details on two categories of latent-space models that are particularly relevant to our
work: 
\emph{generative models} and \emph{representation-learning algorithms}.
However, our framework can be applied to \emph{any} model that uses embeddings.

\subsection{Generative Models}

Generative models, such as those developed
by~\citet{vaswani2023attention} or \citet{goodfellow2014generative}, are at the
forefront of deep-learning research, enabling the synthesis of new data
as well as complex data transformations such as \emph{style transfer}.
They rely on \emph{embeddings}, i.e., learned
low-dimensional representations of data, to drive their generative
capabilities.
The geometric relationships within the context of the latent space are learned by the
model during training, becoming a cornerstone of the model's characteristics as 
a generator. 
Here, we discuss \emph{variational autoencoders} (VAEs) as the class of generative models featuring most prominently in our experiments. 
Originally developed by ~\citep{kingma2022autoencoding},
VAEs are probabilistic models 
that learn a generative distribution 
$\probability(x, z) = \probability(z) \probability(x|z) $, where
 $\probability(z) $ represents a prior distribution over the latent variable
 $ z $, and
 $\probability(x|z) $ 
is the likelihood function responsible for generating the data~$x$ given~$z$.
VAEs are trained with the objective of maximizing a variational lower bound
$ \LVAE(x) $
on the log-likelihood
$ \log \probability(x) $, subject to the inequality
$ \log \probability(x) \geq \LVAE(x) $.
The expression for the variational lower bound of a VAE is
\begin{equation}
  L_{\text{VAE}}(x) = \mathbb{E}_{q(z|x)} [\log p(x|z)] - \text{KL}(q(z|x) || p(z))\;,
\end{equation}
where
$ q(z|x) $ 
stands for an approximate posterior and
$ \text{KL} $ denotes the Kullback–Leibler divergence~\citep{joyce_kullback-leibler_2011}.
Here, the term
$ \mathbb{E}_{q(z|x)} [\log p(x|z)] $ represents the expected log-likelihood of
$ x $ given
$ z $ under the approximate posterior 
$ q(z|x) $, while
$ \text{KL}(q(z|x) || p(z)) $ 
quantifies the divergence between $ q(z|x) $ and the prior $ p(z) $.
The ultimate objective during the training of VAEs is to maximize the expected lower bound $ \mathbb{E}_{p_d(x)} [L_{\text{VAE}}(x)] $, where $ p_d(x) $ is the data distribution.

\subsection{Representation-Learning Algorithms}
 
Representation-learning algorithms leverage salient features of the input data to obtain structure-preserving
representations.
Many (though not all) representation-learning algorithms are
based on the \emph{manifold hypothesis}, which posits that data are sampled from an~(unknown) low-dimensional latent manifold. 
Even in light of a compelling and ongoing line of research that
questions the integrity of the manifold hypothesis for certain
datasets~\citep{rohrscheidt_topological_2023,brown2023verifying,
scoccola_fibered_2023}, the need to embed data into low-dimensional
latent spaces remains a key aspect for both data preprocessing and the
development of generative models.
Though linear methods like Principal Component Analysis (PCA) are incredibly useful in their own right,
nowadays, 
significant emphasis is placed on \emph{non-linear dimensionality-reduction algorithms} (NLDR algorithms) as the most prominent methods for preserving salient geometric relationships.
These algorithms are maps $\fred\colon \reals^\Dimension \to \reals^\dimension$, typically with $\dimension \ll \Dimension$, that 
make use of local geometric information in a high-dimensional space to 
estimate properties of the underlying manifold.
When mapping into the latent space, such algorithms often aim to maintain \emph{pairwise geodesic
distances} between neighboring points.
Some of the most widely used NLDR algorithms include \texttt{UMAP} by~\cite{mcinnes_umap_2020}, 
\texttt{PHATE} by~\cite{moon_visualizing_2019}, \texttt{t-SNE} by~\cite{maaten_visualizing_2008},
\texttt{Isomap} by~\cite{tenenbaum_global_2000}, and \texttt{LLE} by~\cite{roweis_nonlinear_2000}.

Although the precise behavior of the mapping is unique to individual
implementations, many algorithms rely on estimating local properties of
the data via \nneighbors-nearest-neighbor graphs, treating~\nneighbors
as the \emph{locality scale}, also known as the \texttt{n-neighbors}
parameter.
In combination with the other hyperparameters~(if any), this helps the
algorithm obtain a final representation of the crucial geometric
relationships in the data.
For unsupervised tasks, this raises a non-trivial decision:
\emph{What is the correct scale at which to probe a particular dataset?}
In the absence of labels, the answer to this question is highly context-specific: 
The most insightful geometric relationships cannot be known \emph{a priori}.
Here, we can use \ourmethod to understand the multiverse of representations
that arise from the various algorithmic choices, implementation 
choices, and data choices involved in non-linear dimensionality reduction.  
As demonstrated for VAEs and transformers in the main text,
\ourmethod can describe the distribution of embeddings arising from multiverse considerations and quantify representational variability in NLDR. 
See \cref{apx:DimRed} for 
supplementary experiments on hyperparameter sensitivity in NLDR algorithms and clustering of embeddings across different algorithmic, hyperparameter, and data choices.
 \section{Extended Related Work}
\label{apx:Related Work}

In addition to the related work mentioned in the main paper, which directly
deals with representational variability in one way or the other, our analyses and definitions draw upon a wealth of additional
research in \emph{topological data analysis}. Here, seminal works by
\citet{bubenik2015statistical} and 
\citet{adams2017images} introduce
\emph{persistence landscapes} and \emph{persistence images},
respectively, opening the door toward more efficient topological
descriptors that can be gainfully deployed in a machine-learning
setting. We focus on \emph{persistence landscapes} in this paper since
they do not require any additional parameter choices, but our pipeline
remains valid for \emph{persistence images}. Persistence landscapes have 
the advantage that more of their statistical behavior has been
studied~\citep{bubenik2017persistence}, and recent work even shows that
they capture certain geometric properties of
spaces~\citep{bubenik_persistent_2020}. 
This consolidates their
appeal as an expressive shape descriptor of data.

Beyond research in topological data analysis, several works address the backbone of our pipeline, i.e.,
\emph{persistent homology}, directly. \citet{Cohen-Steiner07a} prove the
seminal stability theorem upon which most of the follow-up work is
based, and \citet{chazal2015subsampling} establish the foundation for
understanding the behavior of topological descriptors~(including
persistence landscapes) under subsampling.   Furthermore, \citet{Chazal14a} show that all
geometric constructions like the Vietoris--Rips complex lead to stable
outcomes in the sense that geometric variation always provides an
upper bound on topological variation. 
We will subsequently make use of this seminal result to motivate the stability and choice of
metrics.

Finally, topological
approaches have also shown their utility in the context of studying individual neural-network models. 
Of particular interest
in current research are the investigation of representational
similarities~\citep{klabunde2023similarity} or the analysis of
particular parts of a larger model, such as attention
matrices~\citep{smith2023parallax}. This strand of research is motivated
by insights into how understanding geometrical-topological
characteristics of data and models can lead to improvements in certain
tasks~\citep{merwe2022manifold, Chen19a}---or, specifically, how the
topology of data can be used to characterize the \emph{loss landscape} of
a model~\citep{freeman2017topology, Horoi22a}.
\vspace*{-12pt}

\end{document}